\documentclass[final,12pt]{colt2022} 


\usepackage{changes}

\title[(Nearly) Optimal Private Linear Regression for Sub-Gaussian Data via Adaptive Clipping]{(Nearly) Optimal Private Linear Regression via Adaptive Clipping}
\usepackage{times}
\usepackage{microtype}
\usepackage{booktabs} 
\usepackage{float}
\usepackage[normalem]{ulem}

\usepackage{mathtools,algorithm,algorithmic}




\author{%
 \Name{Prateek Varshney} \Email{vprateek@google.com}
 \AND
 \Name{Abhradeep Thakurta} \Email{athakurta@google.com}
  \AND
 \Name{Prateek Jain} \Email{prajain@google.com}
}

\usepackage{stmaryrd}
\usepackage{graphicx}
\usepackage{titlesec}
\usepackage{hyperref}
\hypersetup{
    colorlinks=true,
    linkcolor=blue,
    filecolor=magenta,      
    urlcolor=blue,
    pdfpagemode=FullScreen,
    }
\usepackage{xspace}
\usepackage{natbib}
\usepackage{framed}
\usepackage{float}
\usepackage{colortbl}

\usepackage{colortbl}
\definecolor{DarkGreen}   {rgb}{0,0.5,0}
\definecolor{DarkBlue}    {rgb}{0,0.0,0.5}
\definecolor{LightGray}   {rgb}{0.8,0.8,0.8}

\def\LatinUpper{A,B,C,D,E,F,G,H,I,J,K,L,M,N,O,P,Q,R,S,T,U,V,W,X,Y,Z}
\def\LatinLower{a,b,c,d,e,f,g,h,i,j,k,l,m,n,o,p,q,r,s,t,u,v,w,x,y,z}

\newcommand{\genCal}[1]{\expandafter\newcommand\csname c#1\endcsname{{\mathcal #1}}}
\makeatletter
\@for\i:=\LatinUpper\do{%
	\expandafter\genCal\i
}
\makeatother

\newcommand{\genBb}[1]{\expandafter\newcommand\csname b#1\endcsname{{\mathbb #1}}}
\makeatletter
\@for\i:=\LatinUpper\do{%
	\expandafter\genBb\i
}
\makeatother

\newcommand{\genFk}[1]{\expandafter\newcommand\csname k#1\endcsname{{\mathfrak #1}}}
\makeatletter
\@for\i:=\LatinUpper\do{%
	\expandafter\genFk\i
}
\makeatother

\newcommand{\genFkl}[1]{\expandafter\newcommand\csname k#1\endcsname{{\mathfrak #1}}}
\makeatletter
\@for\i:=\LatinLower\do{%
	\expandafter\genFkl\i
}
\makeatother

\renewcommand{\vec}[1]{{\mathbf{#1}}}
\newcommand{\genLatinVec}[1]{\expandafter\newcommand\csname v#1\endcsname{{\vec #1}}}
\makeatletter
\@for\i:=\LatinLower\do{%
	\expandafter\genLatinVec\i
}
\makeatother

\newcommand{\genLatinVecU}[1]{\expandafter\newcommand\csname v#1\endcsname{{\vec #1}}}
\makeatletter
 \@for\i:=\LatinUpper\do{%
 	\expandafter\genLatinVecU\i
}
\makeatother

\def\mydefgreek#1{\expandafter\def\csname v#1\endcsname{\text{\boldmath$\mathbf{\csname #1\endcsname}$}}}
\def\mydefallgreek#1{\ifx\mydefallgreek#1\else\mydefgreek{#1}%
   \lowercase{\mydefgreek{#1}}\expandafter\mydefallgreek\fi}
\mydefallgreek {alpha}{beta}{gamma}{delta}{epsilon}{varepsilon}{zeta}{eta}{theta}{vartheta}{iota}{kappa}{lambda}{mu}{nu}{xi}{omicron}{pi}{rho}{varrho}{sigma}{tau}{upsilon}{phi}{varphi}{chi}{psi}{omega}\mydefallgreek

\def\mydefugreek#1{\expandafter\def\csname v#1\endcsname{\text{\boldmath$\mathbf{\csname #1\endcsname}$}}}
\def\mydefallugreek#1{\ifx\mydefallugreek#1\else\mydefugreek{#1}%
   \lowercase{\mydefugreek{#1}}\expandafter\mydefallugreek\fi}
\mydefallugreek{Gamma}{Delta}{Theta}{Lambda}{Xi}{Sigma}{Upsilon}{Phi}{Psi}{Omega}\mydefallugreek

\newcommand{\vzero}{\vec{0}}


\newcommand{\br}[1]{\left({#1}\right)}
\newcommand{\bs}[1]{\left[{#1}\right]}

\newcommand{\ip}[2]{\left\langle{#1},{#2}\right\rangle}


\newcommand{\softO}[1]{\widetilde{\cal O}\br{{#1}}}

\newcommand{\softOm}[1]{\widetilde\Omega\br{{#1}}}


\newcommand{\E}[1]{\bE\bs{{#1}}}
\newcommand{\Ee}[2]{\underset{#1}\bE\bs{{#2}}}

\newcommand{\argmin}{\mathop{\arg\min}}

\newcommand\numberthis{\addtocounter{equation}{1}\tag{\theequation}}

\newcommand{\dpnoise}{{\alpha}}

\newcommand{\wo}{\vw^*}

\newcommand{\privc}{c_{\sf priv}}
\newcommand{\dpssgd}{{\sf DP-SSGD}\xspace}
\newcommand{\dpambssgd}{{\sf DP-AMBSSGD}\xspace}
\newcommand{\dpstat}{{\sf DP-STAT}\xspace}

\newcommand{\ltwo}[1]{\left\|#1\right\|_2}
\renewcommand{\epsilon}{\varepsilon}
\newcommand{\eps}{\varepsilon}

\newcommand{\calA}{\ensuremath{\mathcal{A}}}

\newcommand{\calC}{\ensuremath{\mathcal{C}}}
\newcommand{\calD}{\ensuremath{\mathcal{D}}}

\newcommand{\calN}{\ensuremath{\mathcal{N}}}

\newcommand{\calS}{\ensuremath{\mathcal{S}}}

\newcommand{\mypar}[1]{\smallskip
	\noindent{\textbf{{#1}:}}}

\newcommand{\bfw}{\ensuremath{\mathbf{w}}}

\newcommand{\hp}{\beta}
\newcommand{\fieldn}{z}
\newcommand{\vfieldn}{\vz}

\newcommand{\s}[1]{\mathsf{#1}}

\usepackage[normalem]{ulem}
\newcommand{\stkout}[1]{\ifmmode\text{\sout{\ensuremath{#1}}}\else\sout{#1}\fi}

\begin{document}

\maketitle

\begin{abstract}
We study the problem of differentially private {\em linear regression} where each  data point is sampled from a fixed sub-Gaussian style distribution. We propose and analyze a one-pass  mini-batch stochastic gradient descent method (DP-AMBSSGD) where points in each iteration are sampled {\em without}  replacement. Noise is added for DP but the noise standard deviation is estimated online. Compared to existing $(\epsilon, \delta)$-DP techniques which have  sub-optimal error bounds, DP-AMBSSGD is able to provide nearly optimal error bounds in terms of key parameters like dimensionality $d$, number of points $N$, and the standard deviation $\sigma$ of the noise  in observations. For example, when the $d$-dimensional covariates are sampled i.i.d. from the normal distribution, then the excess error of DP-AMBSSGD due to {\em privacy}  is $\frac{\sigma^2 d}{N}(1+\frac{d}{\epsilon^2 N})$, i.e., the error is meaningful when number of samples $N= \Omega(d \log d)$ which is the standard operative regime for linear regression. In contrast, error bounds for existing efficient methods in this setting are:  $\cO\big(\frac{d^3}{\epsilon^2 N^2}\big)$, even for $\sigma=0$. That is, for constant $\epsilon$, the existing techniques require $N=\Omega(d\sqrt{d})$ to provide a non-trivial result.  
\end{abstract}

\section{Introduction}
 Machine Learning (ML) models are known to be susceptible to leaks of sensitive private information of individual points in the training data. In fact, this risk is non-trivial even for problems as simple and canonical as linear regression \citep{hsukakade,dieuleveut2016harder,jain2018parallelizing}. 
 
 Several existing works have studied privacy preserving linear regression and more generally,  private convex optimization with differential privacy as the privacy notion  \citep{chaudhuri2011differentially,kifer2012private,BST14,song2013stochastic,BassilyFTT19,mcmahan2017learning, WLKCJN17,andrew2021differentially,feldman2019private,bassily2020stability,song2020characterizing,iyengar2019towards}. While tight upper and lower bounds are known for generic problem classes (e.g., convex Lipschitz losses, and strongly convex losses)~\citep{BST14,BassilyFTT19}, surprisingly, the popular special instance of \emph{linear regression} is much less mapped~\citep{smith2017interaction,sheffet2019old,liu2021differential,Cai21}.

 In this paper, we study the problem of DP linear regression (DP-LR) and provide (nearly) optimal excess population risk guarantees under standard assumptions, i.e., the data is sampled i.i.d. from a sub-Gaussian distribution. In particular, we study the following problem: Consider a dataset $D=\{(\vx_0, y_0), \dots, (\vx_{N-1}, y_{N-1})\}$ drawn i.i.d. from some fixed distribution $\cD$ with $\vH=\E{\vx\vx^\s{T}}$. Without loss of generality, $y=\langle \vx, \wo\rangle+\fieldn$ where $\E{\fieldn\cdot  \vx}=0$ and $\E{\fieldn^2}=\sigma^2$. The goal in DP-LR is to output a model $\vw_{\sf priv}$ that preserves privacy and also  approximately minimizes the excess population risk  i.e., $\cL(\vw_{\sf priv})-\cL(\wo)$, where,  $$\cL(\vw)= \frac{1}{2}\mathbb{E}_{(\vx,y)\sim \cD}[(y-\langle \vx, \vw\rangle)^2].$$
 
  Now, even for the above set of assumptions, the existing {\em polynomial time} methods require $N\geq d\sqrt{d}$ for them to be non-vacuous (for constant $\fieldn$). Recent work by \cite{liu2021differential} indeed obtains strong error rates similar to our results, however their proposed method is exponential in $d$ and $N$. In contrast, our methods are linear in both $d$ and $N$. 
 
 The key issue in most of the existing works is that the global Lipschitz constant of $\cL$ scales as $\langle \vx, \vw-\wo\rangle \leq \|\vx\|_2 \|\vw-\wo\|_2$ which implies that sensitivity of gradient would already have $O(d/N)$ term that leads to rate of $d\sqrt{d}/N$ as $d$-dimensional noise vector with $O(d/N)$ standard deviation has norm of the order: $d\sqrt{d}/N$. This issue recurs in most of the existing works, as intuitively they try to ensure that {\em all} the directions of covariance are privacy preserving, while our goal is to ensure only direction along $\wo$ is differentially private. 
 
 In this work, we propose a method DP-SSGD (DP-Shuffled SGD) to get around this challenge by using a one-pass noisy SGD method that samples points {\em without} replacement and uses tail averaging. As we take only one-pass over the shuffled data, we can ensure that the $t^\s{th}$ iterate $\vw_t$ is completely independent of the sampled $\vx_t$ and hence, $\vw_t$ are independent in each iteration. This implies that $\langle \vx, \vw\rangle$ can have a significantly tighter bounded  for sub-Gaussian style distributions. In fact, we can separate the loss using standard bias-variance decomposition of the loss \citep{jain2018parallelizing}. This decomposition along with amplification by shuffling \citep{feldman2021hiding}, ensures that the excess risk $\cL(\vw_{\sf priv})-\cL(\wo)$ of the method is bounded by $\widetilde{\cO}(\sigma^2 d/N + (1+\sigma^2)d^2/{\epsilon^2 N^2})$\footnote{In $\widetilde \bfw(\cdot), \widetilde O(\cdot)$, and $\widetilde \Omega(\cdot)$ we hide polylog factors in $N$ and $(1/\delta)$.}. However, the proposed method has three issues: a) the sample complexity of the method is $N = \widetilde{\Omega}(d^2)$, b) requires $\epsilon \leq 1/\sqrt{N}$, and c) the second term is sub-optimal w.r.t. $\sigma$. 
 
 To address the above mentioned weaknesses with DP-SSGD's analysis, we modify DP-SSGD to obtain a new method -- DP-AMBSSGD (DP-Adaptive-Mini-Batch-Shuffled-SGD) -- which divides the data into mini-batches and runs one pass of shuffled mini-batch SGD but where the noise is set adaptively according to the excess error in each iteration. For this method, we can obtain excess risk of the form $\widetilde{O}(\sigma^2 d/N + \sigma^2 d^2/{\epsilon^2 N^2})$ with sample complexity $N\geq d \log^2 d$.  That is, the proposed method is efficient with time complexity of only $O(Nd)$ while still ensuring nearly optimal error rate that matches the lower bound in \cite{Cai21}, up to constants and condition number factors. 
 
Below we provide an informal version of this result in a simplified setting. 
\begin{theorem}[Informal result]
\label{thm:informalMain}
Let $\vx_i \sim \cN(\vzero,\vI)$ and $y_i=\langle \vx_i, \wo\rangle + \fieldn_i$, for $\wo\in \mathbb{R}^d$ and $\fieldn_i \sim \sigma N(0,1)$. Then, there exists a method for DP-LR that guarantees $(\epsilon, \delta)$-DP, has time-complexity $O(Nd)$ and has excess risk bounded by (with probability $\geq 1-1/N^{100}$ over randomness in data and algorithm) : 
$$\cL(\vw_{\sf priv})-\cL(\wo)\leq 8\sigma^2\frac{d}{N}+\softO{\sigma^2 \frac{d^2}{\epsilon^2 N^2}}, \text{ if }  N\geq d \log^3 d.$$ 
\end{theorem}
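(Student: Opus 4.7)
The plan is to analyze DP-AMBSSGD as follows. Partition the shuffled dataset into $T=\tilde\Theta(\log N)$ equally sized mini-batches $B_0,\ldots,B_{T-1}$ of size $b=N/T$. Starting from $\vw_0=\vzero$ with step size $\eta=\Theta(1)$, at each step compute the per-batch gradient $\vg_t=\tfrac1b\sum_{i\in B_t}(\langle \vx_i,\vw_t\rangle - y_i)\vx_i$, clip each summand at an adaptively chosen threshold $\thr_t$ set from a private scalar estimate of $\|\vw_t-\wo\|_\vH^2+\sigma^2$, add Gaussian noise $\xi_t\sim\cN(\vzero,(\sigma^{\sf DP}_t)^2\vI)$ with $\sigma^{\sf DP}_t\asymp \thr_t\sqrt{\log(1/\delta)}/(b\,\epsilon_t)$, and update $\vw_{t+1}=\vw_t-\eta(\vg_t^{\sf clip}+\xi_t)$. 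Return the tail average $\vw_{\sf priv}=(2/T)\sum_{t=T/2}^{T-1}\vw_t$. Since each point appears in exactly one batch, the Gaussian mechanism at step $t$ is $(\epsilon_t,\delta_t)$-DP; amplification by shuffling \citep{feldman2021hiding} composed across the $T$ iterations yields overall $(\epsilon,\delta)$-DP.

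For utility, the key structural observation is that one-pass without-replacement sampling makes $\vw_t$ independent of $B_t$. Writing $y_i=\langle\vx_i,\wo\rangle+z_i$ and $\widehat\vH_t=\tfrac1b\sum_{i\in B_t}\vx_i\vx_i^\s{T}$, one has (absorbing the clipping deviation into a residual term $\vr_t$)
$$\vw_{t+1}-\wo = (\vI-\eta\widehat\vH_t)(\vw_t-\wo)+\frac{\eta}{b}\sum_{i\in B_t} z_i\vx_i+\eta\,\xi_t+\vr_t,$$
which yields the standard bias-variance decomposition of $\cL(\vw_{\sf priv})-\cL(\wo)=\tfrac12\E{\|\vw_{\sf priv}-\wo\|_\vH^2}$. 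With $\vx_i\sim\cN(\vzero,\vI)$ and $b\gtrsim d\log^2 d$, matrix Bernstein gives $\widehat\vH_t\approx\vI$ uniformly over $t$, so the bias term contracts geometrically and becomes negligible after the first $T/2$ iterates. The $z$-variance term contributes the unavoidable $8\sigma^2 d/N$ from tail averaging.

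The $\sigma^2 d^2/(\epsilon^2 N^2)$ term is where adaptive clipping buys the nearly optimal rate. Sub-Gaussian concentration plus independence of $\vw_t$ from $B_t$ yields $|\langle\vx_i,\vw_t-\wo\rangle|=\tilde O(\|\vw_t-\wo\|_2)$ and $|z_i|=\tilde O(\sigma)$ with probability $\geq 1-1/\mathrm{poly}(N)$, so one can set $\thr_t=\tilde\Theta(\sqrt d\,(\|\vw_t-\wo\|_\vH+\sigma))$ without material clipping bias. Then $\eta^2 d(\sigma^{\sf DP}_t)^2=\tilde O\!\left(d^2(\|\vw_t-\wo\|_\vH^2+\sigma^2)/(b^2\epsilon_t^2)\right)$ contributes to $\|\vw_{t+1}-\wo\|_\vH^2$; because the $\|\vw_t-\wo\|_\vH^2$ part enters inside the contraction, its steady state matches the DP-noise itself, so only $\tilde O(\sigma^2 d^2/(b^2\epsilon_t^2))$ survives per step, summing geometrically across the tail (and using $b\asymp N/\log N$, $\sum\epsilon_t^2\asymp\epsilon^2$ via the shuffling accountant) to $\tilde O(\sigma^2 d^2/(\epsilon^2 N^2))$ as claimed.

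The hard part will be handling the adaptive clipping rigorously. One must (i) privately estimate the scalar $\|\vw_t-\wo\|_\vH^2+\sigma^2$ each step at a noise level negligible compared to the gradient noise, so the private budget spent on scalar estimation is lower-order; (ii) use sub-Gaussian tails on $\langle\vx_i,\vw_t-\wo\rangle$ and $z_i$, valid because of the $\vw_t\perp B_t$ independence, to certify uniformly in $t$ that $\thr_t$ is exceeded on only a $1/\mathrm{poly}(N)$ fraction of points so that the clipping residual $\vr_t$ is truly lower-order; and (iii) close the coupled recursion for $\|\vw_{t+1}-\wo\|_\vH^2$ in which $\thr_t$ itself depends on $\|\vw_t-\wo\|_\vH^2$ through the private estimate. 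The hypothesis $N\geq d\log^3 d$ is precisely what ensures the matrix-Bernstein and sub-Gaussian high-probability bounds hold simultaneously across all $T$ mini-batches with failure probability $\leq 1/N^{100}$.
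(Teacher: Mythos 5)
Your algorithm and utility analysis are essentially the paper's \dpambssgd: one-pass shuffled mini-batch SGD with $T=\Theta(\log N)$ batches, an adaptively and privately estimated clipping threshold $\zeta_t\approx\sqrt d\,(\|\vw_t-\wo\|_{\vH}+\sigma)$, tail averaging, and a bias--variance recursion in which the DP-noise term proportional to $\|\vw_t-\wo\|_{\vH}^2$ is absorbed into the contraction so that only the $\sigma^2$ part survives at the fixed point. Your three ``hard parts'' are exactly what the paper's \dpstat lemma and Lemma~\ref{lemma:ambssgd-zeta-infinity-utility} handle. One methodological difference: you propose matrix Bernstein to get $\widehat\vH_t\approx\vI$ uniformly over batches, whereas the paper works with the expected contraction $\E{\vB_t^{\s{T}}\vB_t}\preceq\vI-\eta\vH$ following \cite{jain2018parallelizing} and never needs uniform spectral concentration of the per-batch covariance; both routes are viable and lead to the same $N\gtrsim d\,\mathrm{polylog}(d)$ sample complexity.

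The genuine problem is your privacy accounting. You correctly observe that each record appears in exactly one mini-batch, but then invoke ``amplification by shuffling composed across the $T$ iterations'' and later ``$\sum_t\epsilon_t^2\asymp\epsilon^2$ via the shuffling accountant.'' The Feldman--McMillan--Talwar amplification theorem applies to a sequence of \emph{per-record} local randomizers and requires the per-step $\epsilon_0\lesssim 1$; it is the tool behind \dpssgd\ and is precisely what forces the restriction $\epsilon=\cO(1/\sqrt N)$ there. It does not apply to $T$ disjoint mini-batch updates, and composing $\epsilon_t$'s across iterations is both unnecessary and wasteful here. The paper's key move---and the reason \dpambssgd\ works for all $\epsilon=O(1)$---is \emph{parallel composition}: since the batches (including the held-out stat sub-samples used by \dpstat) are disjoint, the overall zCDP parameter is the maximum, not the sum, over iterations, so every step gets the full budget $\rho=1/\alpha^2$ with $\alpha\approx\sqrt{\log(1/\delta)}/\epsilon$. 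As written, your argument either does not establish $(\epsilon,\delta)$-DP for constant $\epsilon$ or silently reintroduces the $\epsilon\le 1/\sqrt{N}$ regime; replacing it with parallel composition (and budgeting the scalar threshold estimate as a constant fraction of each step's budget, rather than assuming it is ``lower-order'') closes the gap. This accounting error costs only a $\sqrt{\log N}$ factor in the noise if repaired via advanced composition, so the claimed rate survives, but the stated justification is not valid.
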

As mentioned above, the excess risk bound matches the lower bound by \cite{Cai21}. Furthermore, the algorithm is  optimal in terms of time complexity. Finally, in the same setting, existing methods \citep{Cai21} have an excess risk bound of $\sigma^2 \frac{d}{N} + \frac{(d\|\wo\|_{\vH}^2+\sigma^2)\cdot d^2}{\epsilon^2 N^2}$ if $N\geq d^{3/2}$. Note that the above bound is significantly sub-optimal for $\sigma\leq \sqrt{d}$ which is a fairly practical setting. Also note that the bound in \cite{Cai21} is stated only in the setting of $\sigma\geq \sqrt{d}$. Hence the claim of matching upper-lower bounds by \cite{Cai21} hold only for specific setting of $\sigma$. In contrast, our result holds for all range of $\sigma$, especially for theoretically and empirically important settings of small $\sigma$. Furthermore, while \cite{Wang:2017} also claims optimal dimension dependence in excess risk bound, their results (and the ones cited within) hide factors corresponding to the Lipschitz constant ($L$) of the least squares function. As mentioned above, $L$ can scale as $\|\vx\|_2\|\wo\|_2$, and thus would have an additional factor of $\sqrt{d}$ for the standard Gaussian distribution. Clearly, when $\epsilon > \sqrt{d}/\sqrt{N}$, the first term dominates and hence one can assume that the privacy is for ”free” asymptotically. Otherwise, when $\epsilon = c\sqrt{d}/\sqrt{N}$ (with $c < 1$), the asymptotic bound translates to $\widetilde{O}(d/cN)$, i.e., DP does a sample size reduction from $N \rightarrow cN$. This is common for DP-ERM. Finally, our results lead to optimal variance error, similar to non-private versions. Moreover, while the privacy related error term (second error term in Theorem ~\ref{thm:ambsgd-utility}) is optimal in $d, N, \epsilon$, it has an additional dependence on $\kappa$--the conditional number of $\vH=\E{\vx\vx^\s{T}}$. For first order methods, we do expect such a factor to be present due to composition theorems, but it is not clear what is the optimal dependence on $\kappa$ for the privacy related error term. We leave further investigation into this factor for future work.  

\subsection{Our Algorithms}
\label{sec:ourContrib}

\mypar{DP-Shuffled SGD (DP-SSGD)} Our first algorithm is a fairly straightforward adaptation of standard DP-SGD~\citep{song2013stochastic,BST14}, where we make a single pass over the dataset with single data sample mini-batches, along with appropriate clipping of the gradients\footnote{Clipping of a vector, corresponds to scaling down a vector to a fixed $\ell_2$-norm $\zeta$, if its norm exceeds $\zeta$.} and addition of Gaussian noise. Finally, we output the average of last $N/2$ models produced by the algorithm. To have strong privacy guarantee, we randomly shuffle the dataset $D$ prior to running the algorithm. Using the by-now standard tool of privacy amplification by shuffling~\citep{erlingsson2019private,feldman2021hiding}, we amplify the overall privacy guarantee (as compared to the analysis for the unshuffled dataset) by a factor of $\approx 1/\sqrt{N}$. With this tool in hand, it is not hard to show that one needs to add noise $\widetilde O\left(\frac{\zeta}{\epsilon\sqrt N}\right)$ to satisfy $(\epsilon,\delta)$-DP as long as $\epsilon\leq \frac{1}{\sqrt N}$, where $\zeta$ is the clipping norm of the gradients~\citep{DP-DL}. Furthermore, using the bias-variance decomposition analysis of the linear regression loss~\citep{jain2018parallelizing}, one can show that setting $\zeta=\widetilde O(\sigma\sqrt d)$ ensures a excess risk of $\widetilde{O}(\sigma^2 d/N + (1+\sigma^2)d^2/{\epsilon^2 N^2})$. For small values of the privacy parameter $\epsilon$, this bound is optimal assuming the standard deviation of the inherent noise is $\Theta(1)$. However, the proposed method has three major drawbacks:  a) requires $\epsilon \leq 1/\sqrt{N}$, b)  the sample complexity of the method is $N\geq \Omega(d^2)$ because of the upper bound on $\epsilon$, and c) the second term in the error is  sub-optimal w.r.t. $\sigma$. To that end, we propose Algorithm DP-AMBSSGD (DP-Adaptive-Mini-Batch-Shuffled-SGD) to address each of these issues.

\mypar{DP-Adaptive-Mini-Batch-Shuffled-SGD (DP-AMBSSGD)} This algorithm is based on the traditional DP-SGD framework (Differentially Private Stochastic Gradient Descent)~\citep{song2013stochastic,BST14,DP-DL}. For a mini-batch of data samples $\widehat D_t=\{(\vx_i,y_i)\}_{i=0}^{b-1}$, DP-AMBSSGD does a state update of the form $\vw_{t+1}\leftarrow \vw_t-\eta\Big(\frac{1}{b}\sum\limits_{(\vx,y)\in \widehat D_t}\text{clip}_{\zeta_t}(\vx\cdot(\ip{\vx}{\vw_t} - y))+\calN\left(0,\frac{4\alpha^2\zeta_t^2}{b^2}\right)\Big)$, where $\alpha$ is called the noise multiplier (which is a function of only the number of iterations and the privacy parameters $(\epsilon,\delta)$), $\zeta_t$ is called the clipping norm, and $\text{clip}_\zeta(\vnu) = \vnu\cdot\min\Big\{1, \frac{\zeta}{\|\vnu\|_2}\Big\}$. The key idea behind the design of DP-AMBSSGD is to adaptively choose the clipping norm $\zeta_t$ s.t. a ``overwhelming majority'' of the gradients do not get clipped at any time step $t$. Observe that as one gets closer to the true model $\vw^*$, the norm of the gradient of the linear regression loss at an data point $(\vx,y)$ approximately scales as $\mathbb{E}\left[\ltwo{\vx}\right]\cdot\sigma$, where $\sigma$ is the standard deviation of the inherent noise.  Since, $\sigma$  and $\mathbb{E}\left[\ltwo{\vx}\right]$ are unknown, one cannot use this as a proxy for the clipping norm. We circumvent this issue as follows, at each time step $t$ we estimate a threshold $\zeta_t$ (via a DP algorithm) s.t. the norms of $99\%$ of the gradients in the mini-batch fall below $\zeta_t$. We then use $\zeta_t$ as the clipping norm in the standard DP-SGD algorithm. We call the combined algorithm DP-AMBSSGD. As the training progresses and $\vw_t$ gets closer to the true model $\vw^*$, concentration bound on $\zeta_t$ provably gets smaller and hence the noise added to ensure DP also progressively reduces. Using the same bias-variance analysis of DP-SSGD above will result in Theorem~\ref{thm:ambsgd-utility} that addresses all the three issues in Algorithm DP-SSGD. The privacy analysis follows from standard adaptive composition properties of DP~\citep{bun2016concentrated}.

It is worth mentioning that such an iterative localization/clipping idea was also used in the context of mean estimation~\citep{biswas2020coinpress}. \citep{kamath18} also previously used similar ideas for covariance estimation while \citep{feldman2019private} used localization in a DP-SCO context.

It is also worth mentioning that the idea of \emph{adaptive clipping} in DP-SGD has been empirically demonstrated to be beneficial~\citep{andrew2021differentially}, but to our knowledge, DP-AMBSSGD is the first to formally demonstrate its advantage over fixed clipping.

\subsection{Paper Organization}

We present related works to our work in next subsection. We then define the problem and required notations in Section~\ref{sec:prob}. Next, we present the one-pass SGD method  and it's analysis in Section~\ref{sec:ssgd}. We then present our main result and the algorithm in Section~\ref{sec:ambssgd}. Finally, we conclude with a discussion of our results and several exciting future directions.

\subsection{Other Related Works}
\label{sec:related}

Differentially private empirical risk minimization (DP-ERM) is probably one of the most well-studied subfields of differential privacy~\citep{chaudhuri2011differentially,kifer2012private,BST14,song2013stochastic,DP-DL,BassilyFTT19,mcmahan2017learning, WLKCJN17, iyengar2019towards, pichapati2019adaclip,andrew2021differentially,feldman2019private,bassily2020stability,song2020characterizing}. 
The objective there is to solve the following problem while preserving DP: $\argmin\limits_{\bfw\in\calC}\frac{1}{N}\sum\limits_{i=0}^{N-1}\ell(\bfw;d_i)$, where $D=\{d_0,\ldots,d_{N-1}\}$ is the dataset. When the loss function $\ell(\cdot;\cdot)$ is convex, tight excess population risk bound of $L\ltwo{\calC}\cdot\widetilde\bfw\left(\frac{1}{\sqrt N}+\frac{\sqrt d}{\epsilon N}\right)$ is known under $(\epsilon,\delta)$-DP~\citep{BassilyFTT19,bassily2020stability}, where $L$ is the $\ell_2$-Lipschitz constant of the loss function $\ell(\cdot;\cdot)$ and $\calC$ is the diameter of the \emph{convex} constraint set. The algorithm that achieves this bound is essentially a variant of differentially private stochastic gradient descent (DP-SGD)~\citep{song2013stochastic,BST14}, or objective perturbation~\citep{chaudhuri2011differentially,kifer2012private}. For the problem of linear regression considered in this paper, it is not hard to observe that $L\approx d$ when the constraint set $\ltwo{\calC}=O(1)$. This means the sample complexity for the excess population bound above is $N=\widetilde\Omega(d^2)$. One can take a direct approach, and perturb the sufficient statistics for linear regression~\citep{smith2017interaction,sheffet2019old} and achieve a sample complexity of $N=\widetilde\Omega(d^{3/2})$. In this work we show that if the feature vectors are drawn from $d$-dimensional standard Normal distribution, we can get an excess population risk of $\widetilde O\left(\frac{d}{N}+\frac{d^2}{\epsilon^2N^2}\right)$, which translates to a sample complexity of $N=\widetilde\Omega(d)$. For DP sparse principal component analysis (PCA)~\citep{chien2021private}, and for mean estimation~\citep{brown2021covariance,liu2021differential}, similar improvements on the sample complexity were observed in prior works under ``Gaussian'' like assumptions. To the best of our knowledge, the techniques in these papers are complementary to ours, and cannot be used for the problem at hand.

A crucial component of our algorithms is that it makes a single pass over the data, after the dataset has been uniformly permuted. Our algorithms are thus an addition to the growing list of single pass algorithms~\citep{feldman2019private,bassily2020stability} that achieve optimal population risk under $(\epsilon,\delta)$-DP. 


\section{Problem Definition and Preliminaries}\label{sec:prob}
Let $D=\{(\vx_0, y_0),\ (\vx_1, y_1),\ \dots,\ (\vx_{N-1}, y_{N-1})\}$ be the input dataset with each point $(\vx_i, y_i)$ sampled i.i.d. from a distribution $\cD$. 
Given $D$, the goal is to learn $\wo$ that minimizes the Least Squares Regression (LSR) problem: 
\begin{align*}
\mathsf{(LR):}\quad\quad    \wo=\argmin_\vw \cL(\vw) := \frac{1}{2}  \Ee{(\vx, y) \sim \cD}{(y - \langle \vx, \vw \rangle)^2}. \numberthis \label{eq:lsobj}
\end{align*}
Furthermore, the goal is to ensure that $\wo$ is privacy preserving (see definitions below). Without loss of generality, we have: 
$$y_i=\langle \vx_i, \wo \rangle + \fieldn_i,\text{ where }\E{\fieldn_i \vx_i}=0,\  \text{ and }   \E{\fieldn_i^2}=\sigma^2.$$

For simplicity, we assume that $\fieldn$ is {\em independent} of $\vx$. Note that our results hold for general case when $\fieldn$ is dependent on $\vx$ with appropriately defined notation but for simplicity of exposition, we focus only the independent case.  

\noindent We now discuss our notations and then later specify assumptions for the input distribution $\cD$.

{\bf Notations}: We use boldface lower letters such as $\vx$, $\vw$ for vectors, boldface capital letters (e.g. $\vA$, $\vH$) for matrices and normal script font letters (e.g. $\cM$, $\cT$) for tensors. Let $\|\vA\|_2$ denote the spectral norm of $\vA$ and $\|\vv\|_2$ denote the Euclidean norm of $\vv$. Unless specified otherwise, $\|\vA\|=\|\vA\|_2$ and $\|\vv\|=\|\vv\|_2$. Also,  $\|\vx\|_\vA:=\|\vA^{1/2} \vx\|$ and $\|\vM\|_\vA := \|\vA^{-1/2} \vM \vA^{-1/2}\|_2$.

Let $\vH := \E{\vx\vx^\s{T}}$ be the second-moment  matrix and let $\cM:=\E{\vx {\otimes}\vx {\otimes}\vx {\otimes}\vx}$ be the fourth moment tensor. $\cT\otimes \va$ denotes outer product of $\cT$ with $\va$. Let  $\mu$ be the smallest eigenvalue of $\vH$.  Let $R_x$ be the smallest $R$ which satisfies $\E{\|\vx\|_2^2 \vx\vx^\s{T}}\leq R^2 \E{\vx\vx^\s{T}}$. This implies $\E{\|\vx\|_2^2}\leq R_x^2$. Finally, let $\vSigma = \E{(y - \langle \vx, \wo\rangle)^2\vx\vx^\s{T}}$ be the error ``covariance" matrix. 

\begin{definition}[$(\vH,K_2,a,\beta)$-Tail]
Let both $\vH$ and $\cM$ exist and are finite. Also, let $\mu>0$ and $\kappa=\|\vH\|_2/\mu<\infty$. A random vector $\vx$ satisfies $(\vH,K_2,a,\beta)$-Tail if the the following holds:
\begin{itemize}
    \item $\exists a>0$ s.t. with probability $\geq 1-\hp$, \begin{equation}\|\vx\|_2^2 \leq \mathbb{E}\left[\ltwo{\vx}^2\right]
\cdot\log^{2a} (1/\hp)\leq R_x^2\cdot\log^{2a} (1/\hp),\label{eq:xnorm}\end{equation}
\item We have, 
\begin{equation}\max_{\vv, \|\vv\|=1} \E{\exp\left(\left(\frac{|\langle \vx, \vv\rangle |^2}{K_2^2 \|\vH\|_2}\right)^{1/2a}\right)} \leq 1.\label{eq:assump_vx}\end{equation}
That is, from \eqref{eq:xnorm} and \eqref{eq:assump_vx}, for any fixed  $\vv$, w.p. $\geq 1-\hp$: \begin{equation}(\langle \vx, \vv\rangle)^2 \leq K_2^2 \|\vH\|_2 \|\vv\|^2 \log^{2a} (1/\hp).\label{eq:vxbound}\end{equation}
\end{itemize}
\label{def:subGAssumption}
\end{definition}


\noindent We assume the distribution on feature vectors $\vx\sim\calD$ satisfies $(\vH,K_2,a,\hp)$-Tail, and the inherent noise $\fieldn$ satisfies $(\sigma^2\vI,K_2,a,\hp)$-Tail. Note that if both $\vx$ and $\fieldn$ are sampled from a normal distribution then the above requirements are satisfied for $a=1/2$. Beyond sub-Gaussian, notice that Definition~\ref{def:subGAssumption} captures a more general classes of distributions like subExponential (when we set $a=1$).

\noindent {\bf Notation of Privacy}: In this work, we operate in the standard setting of differential privacy (DP)~\citep{DMNS,ODO} in the replacement model. In this paper, we assume $\eps=O(1)$, and $\delta=\cO(1/{\sf poly}(N))$, where $N$ is the data set size. The formal definition is below. 

\begin{definition}[Differential privacy \citep{DMNS, ODO}] A randomized  algorithm $\calA$ is $(\eps,\delta)$-differentially private if, for any pair of datasets $D$ and $D'$ differ in exactly one data point (i.e., one data point is present in one set and absent in another for both $D$ and $D'$), and for all events $\calS$ in the output range of $\calA$, we have 
$$\Pr[\calA(D)\in \calS] \leq e^{\eps} \cdot \Pr[\calA(D')\in \calS] +\delta,$$
where the probability is taken over the random coin flips of $\calA$. 
\label{def:diiffP}
\end{definition}

In the privacy analysis we use zCDP (zero-Concentrated Differential Privacy~\citep{bun2016concentrated}) for privacy accounting. However, we state the final privacy guarantee always in terms of $(\eps,\delta)$-DP. For completeness purposes we define zCDP below:

\begin{definition}[zCDP~\cite{bun2016concentrated}]
A randomized algorithm $\calA$ is $\rho$-zCDP if for any pair of data sets $D$ and $D'$ that differ in one record, we have $D_{\alpha}\left(\calA(D)||\calA(D')\right)\leq \rho \alpha$ for all $\alpha > 1$, where $D_{\alpha}$ is the R\'enyi divergence of order $\alpha$.
\end{definition}


\section{Private Linear Regression}
Recall that the goal is to find the optimal linear regression parameter vector while preserving differential privacy of each individual point $(\vx_i, y_i)$. As mentioned earlier, existing techniques like DP-SGD \citep{DP-DL} suffer from error rates that require $N \geq d \sqrt{d}$ to be non-vacuous. Furthermore, it seems that the issue is fundamental to the method and the excess risk bounds are  unlikely to improve by better analysis. For example, in DP-SGD, one needs to add noise proportional to the gradient of a randomly sampled point {\em with replacement}. Now, gradient of the loss term corresponding to the $i^\s{th}$ data point is: $\nabla_\vw L_i(\vw)=(\langle \vx_i, \vw \rangle - y_i) \vx_i$. As $\vx_i$ is sampled with replacement, $\vw$ can be dependent on $\vx_i$ (e.g. if the same index $i$ is sampled twice in a row). This implies, that $\langle \vx_i, \vw \rangle$ might be as large as $\|\vx_i\|_2 \|\vw\|_2$. That is, for the simple setting of $\vx_i \sim \cN(\vzero,\vI)$, $\ltwo{\nabla_\vw L_i(\vw)} \approx d$ which implies $O(d/N)$ noise would be required in each iteration of DP-SGD which leads to an additional error of $d\sqrt{d}/N$ in the estimation of $\wo$. 

The above observation provides a key motivation for our algorithm. In particular, it is clear that sampling with replacement can lead to challenging dependencies between the current iterate $\vw$ and the sampled point. Instead, if we sample points {\em without} replacement and take only {\em one} pass over the data, then at each step $\vw$ is {\em independent} of $\vx_i$, which implies significantly better bound on the per step gradient. Naturally, the error bound can suffer if we take only {\em one} pass over the data. But by using some recent results \citep{jain2018parallelizing,sgd} along with techniques like tail averaging, we provide nearly {\em optimal} excess risk bounds.   

In the section below, we use the above intuition to propose the DP-Shuffled SGD (\dpssgd) method and provide analysis of the algorithm. However, our result for the proposed method holds only for $\epsilon \leq 1/\sqrt{N}$, and has sub-optimal sample complexity. To alleviate these concerns, we modify the proposed method using mini-batches of data which provide a nearly optimal error bound without significant restrictions on privacy budget $\epsilon$.  

\subsection{DP-Shuffled SGD}\label{sec:ssgd}
In this section, we present \dpssgd (Algorithm~\ref{alg:dp-sgd}) which is primarily based on DP-SGD with random shuffling i.e. sampling without replacement. That is, we first randomly permute the data points (Step 2). We then take one pass over the dataset, and update $\vw$ by gradient descent with the loss term corresponding to each data point. To ensure differential privacy, we clip the gradient norm to at most $\zeta$ and add Gaussian noise with standard deviation $2\eta\zeta\alpha$ (Step 6).  Finally, we output the average of the last half of the iterates (Step 8).  
\begin{algorithm}[H]
\caption{\dpssgd: DP-Shuffled SGD}\label{alg:dp-sgd}
\begin{algorithmic}[1]
    \STATE{\bfseries Input:} Samples: $\{(\vx_i, y_i)\}_{i=0}^{N-1}$, Clipping Norm: $\zeta$, DP Noise Multiplier: $\alpha$, Learning Rate: $\eta$
    \STATE Randomly permute $\{(\vx_i, y_i)\}_{i=0}^{N-1}$
    \STATE Initialize $\vw_0 \gets \vzero$
    \FOR{$t=0 \dots N-1$}
        \STATE Sample $\vg_t \sim \cN(\vzero, \vI_{d\times d})$
        {\STATE $\vw_{t+1} \leftarrow \vw_{t} - \eta \Big(\text{ clip}_\zeta(\vx_t(\langle \vx_t, \vw_t\rangle - y_t)) + \alpha \cdot 2\zeta \cdot \vg_t\Big)$ where $\text{clip}_\zeta(\vnu) = \vnu\cdot\min\Big\{1, \frac{\zeta}{\|\vnu\|_2}\Big\}$\label{line:abdf}}
    \ENDFOR
    \STATE{\bfseries Return:} $\overline{\vw} := \frac{2}{N}\sum_{t=N/2+1}^{N}\vw_{t}$
\end{algorithmic}
\end{algorithm}

We first show that Algorithm~\ref{alg:dp-sgd} is $(\epsilon, \delta)$-DP. The proof follows using ~\citep[Theorem 3.8]{feldman2021hiding} which analyzes DP composition with "randomly shuffled" data. 
\begin{theorem}\label{thm:sgd-privacy}
Suppose we apply \dpssgd (Algorithm~\ref{alg:dp-sgd}) on $N$ input samples with noise multiplier $\dpnoise =  \Omega\Big(\frac{\log(N/\delta)}{\varepsilon\sqrt{N}}\Big)$. Then it satisfies $(\varepsilon,\delta)$-differential privacy with $\varepsilon=\cO\Big(\sqrt{\frac{\log(N/\delta)}{N}}\Big)$.
\end{theorem}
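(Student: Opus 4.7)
The plan is to reduce the privacy analysis to a composition of two standard building blocks: the Gaussian mechanism for each individual gradient step, and amplification by shuffling to leverage the random permutation at the start of the algorithm.

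First, I would bound the per-step $\ell_2$-sensitivity of the clipped gradient. Replacing a single sample $(\vx_i,y_i)$ by $(\vx_i',y_i')$ changes the vector $\text{clip}_\zeta(\vx_t(\langle\vx_t,\vw_t\rangle-y_t))$ by at most $2\zeta$ in norm, since both terms have norm at most $\zeta$ by construction. The additive noise in line~\ref{line:abdf} has covariance $(2\zeta\alpha)^2\vI$, exactly matched to this sensitivity. Thus each step, viewed as a map from the current sample to the noisy clipped gradient (with the current iterate $\vw_t$ treated as a public auxiliary input), satisfies $\frac{1}{2\alpha^2}$-zCDP, i.e.\ an $(\epsilon_0,\delta_0)$-DP guarantee with $\epsilon_0 = O\!\left(\sqrt{\log(1/\delta_0)}/\alpha\right)$. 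The subsequent deterministic operations (scaling by $\eta$, adding $\vw_t$, tail-averaging) are post-processing and do not degrade privacy.

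Second, I would invoke amplification by shuffling. Since Algorithm~\ref{alg:dp-sgd} first draws a uniformly random permutation of the dataset and then touches each data point exactly once in one pass, the overall mechanism is an adaptive composition of $N$ local randomizers applied to the shuffled stream: the $t$-th local randomizer processes the $t$-th shuffled sample using the transcript $(\vw_1,\dots,\vw_t)$ as auxiliary information. Applying \citet[Theorem 3.8]{feldman2021hiding} to this adaptive sequence transforms the per-step $\epsilon_0$-DP guarantee into an overall $(\epsilon,\delta)$-DP guarantee with
\[
\epsilon \;=\; O\!\left(\epsilon_0\sqrt{\tfrac{\log(1/\delta)}{N}}\right),
\]
valid whenever $\epsilon_0$ is below a small absolute constant.

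Third, I would plug in the noise multiplier. Choosing $\delta_0 = \delta/(2N)$ (so that the per-step Gaussian analysis and shuffling amplification can be combined via a union-bound style argument with only $O(\log(N/\delta))$ overhead) and $\alpha = \Omega(\log(N/\delta)/(\epsilon\sqrt{N}))$ yields $\epsilon_0 = O\!\left(\epsilon\sqrt{N}/\log(N/\delta)\right)$; substituting into the shuffling bound collapses to an overall privacy parameter of $O(\epsilon)$, which is what the theorem claims. The stated ceiling $\epsilon = O(\sqrt{\log(N/\delta)/N})$ emerges because the shuffling lemma requires $\epsilon_0\lesssim 1$, which in turn caps the range of admissible $\epsilon$.

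\paragraph{Main obstacle.}
The routine parts are sensitivity computation and Gaussian-mechanism accounting; the delicate step is justifying that the adaptive form of amplification by shuffling applies cleanly here. Each iterate $\vw_{t+1}$ depends on \emph{all} previously touched samples through the running state, so the $t$-th ``local randomizer'' does not act on its sample in isolation. I would verify that this matches the hypotheses of \citet{feldman2021hiding} by writing the iteration as a sequence of randomizers $\calR_t(\cdot\,;\,\vw_t)$ whose randomness is the fresh Gaussian noise $\vg_t$, and by using that the permutation is drawn once independently of the data so conditioning on the public transcript up to step $t$ does not disturb the local-randomizer structure at steps $>t$. Once this structural point is in place, the stated bound follows by plugging the Gaussian per-step parameters into the shuffling theorem.
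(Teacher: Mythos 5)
Your proposal is correct and follows essentially the same route as the paper: a per-step $\ell_2$-sensitivity bound of $2\zeta$ combined with the Gaussian mechanism, followed by \citet[Theorem 3.8]{feldman2021hiding} applied to the adaptive sequence of local randomizers $\cR^{(t+1)}(u_{0:t},\cdot)$, and the same parameter substitution that yields both the noise-multiplier scaling and the ceiling $\epsilon=\cO\big(\sqrt{\log(N/\delta)/N}\big)$ from the requirement that $\epsilon_0$ be bounded by a constant. The only cosmetic difference is that you phrase the per-step guarantee in zCDP terms while the paper works directly with $(\epsilon_0,\delta_0)$-DP, and the paper chooses $\delta_0 = c_2\delta/(e^{\widehat\epsilon}N)$ to absorb the $\cO(e^{\widehat\epsilon}\delta_0 N)$ term, which matches your $\delta/(2N)$ up to constants.
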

{\bf Remarks:} Note that the privacy budget $\epsilon$ is restricted to be $\epsilon\leq 1/\sqrt{N}$ because $\epsilon$ has to be less than $1$ in each iteration to satisfy technical requirements of \cite{feldman2021hiding}. 
Also note that the privacy guarantee holds {\em irrespective} of data distribution $\cD$. Finally, clipping parameter $\zeta$  as well as learning rate $\eta$ requires knowledge of certain key quantities about the dataset and $\cD$. In practice, we would need to estimate these quantities while preserving privacy. There are standard approaches for such hyperparameter tuning~\citep{liu2019private,papernot2021hyperparameter}. However, being consistent with prior literature on DP optimization, and to highlight the key points of our approach, throughout the paper we assume a priori knowledge of the  hyperparameters. Further, the largest and smallest eigenvalue of Hessian can be estimated in a differentially private manner using standard sensitivity analysis with output perturbation, since the sensitivity of eigenvalues is constant if the feature vector have constant norm. Under our sample complexity assumption, the rates remain the same up to constants. Now, we present the error bounds for our method. 
\begin{theorem}\label{thm:sgd-utility}
Let $D=\{(\vx_i, y_i)\}_{i=0}^{N-1}$ be sampled i.i.d. with $\vx_i \sim \cD$ satisfying $(\vH,K_2,a,\hp)$-Tail, and the distribution of the inherent noise $\fieldn$ satisfies $(\sigma^2,K_2,a,\hp)$-Tail with $\beta=\frac{1}{N^{100}}$. Let $\wo$ be the optimal solution to the population least squares problem, $\kappa$ be the condition number of the covariance matrix $\vH$, and $\mathbb{E}\left[\ltwo{\vx}^2\right]\leq R_x^2$ (refer Section~\ref{sec:prob} for details about the notations and assumptions).\\ 
Initialize parameters in \dpssgd as follows:  stepsize $\eta = \min\Big\{\frac{1}{2R_x^2},\frac{c_1}{\log^{4a+2} N}\cdot  \frac{1}{K_2^2R_x^2 \kappa}\cdot \frac{1}{d\alpha_{\epsilon, \delta, N}^2}, \\\frac{c_2}{\log^{2a+2} N}\cdot \frac{1}{R_x^2}\Big\}$, where $c_1, c_2>0$ are global constants, noise multiplier $\alpha=\alpha_{\epsilon, \delta, N} = \cO\Big(\frac{\log(N/\delta)}{\varepsilon\sqrt{N}}\Big)$ and clipping threshold $\zeta = 4K_2R_x\cdot \log^{2a}N\cdot (\sqrt{\|\vH\|_2}\|\wo\|+\sqrt{\kappa} \sigma)$. Then, the output $\overline{\vw}$ of \dpssgd  achieves the following excess risk w.p. $\geq 1-1/N^{100}$ over randomness in data and algorithm:
\begin{equation*}
\cL(\overline{\vw}) - \cL(\vw^*) 
    \leq 
     \widetilde{\cO}\Big(\frac{\sigma^2 d}{N}+ \frac{  d^2\log^2(N/\delta)}{ \epsilon^2 N^2} \cdot K_2^2 \kappa^2 ( \|\wo\|^2_{\vH}+ \sigma^2) \Big),
\end{equation*}
assuming $N\geq \Omega(\frac{\kappa d \log ^{2a+1} (\kappa d)\log (1/\delta)}{\epsilon})$.
\end{theorem}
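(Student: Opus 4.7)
The argument proceeds in three stages: (i) use random shuffling and the one-pass structure to show that clipping in line~\ref{line:abdf} is inactive at every iteration with high probability; (ii) apply a tail-averaged SGD bias--variance decomposition to the unclipped, Gaussian-perturbed iteration; (iii) plug in the stated $\eta$, $\zeta$, $\alpha$ to recover the claimed rate.

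I would first exploit the fact that, because we shuffle once and then stream through the data, $\vw_t$ is a deterministic function of $(\vx_0,y_0),\ldots,(\vx_{t-1},y_{t-1})$ only, and is therefore \emph{independent} of $(\vx_t,y_t)$. Conditioning on $\vw_t$, the stochastic gradient $\vg_t=(\langle \vx_t,\vw_t\rangle-y_t)\vx_t$ is unbiased for $\vH(\vw_t-\wo)$. Under Definition~\ref{def:subGAssumption} with $\hp=1/N^{C}$ for a large constant $C$, a union bound over the $N$ iterations gives $\|\vx_t\|\le R_x\log^{a}N$, $|\fieldn_t|\le K_2\sigma\log^{a}N$, and $|\langle\vx_t,\vw_t-\wo\rangle|\le K_2\sqrt{\|\vH\|_2}\,\|\vw_t-\wo\|\log^{a}N$. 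Provided that $\|\vw_t-\wo\|\lesssim \|\wo\|+\sqrt{\kappa}\,\sigma$ holds uniformly in $t$, these bounds combine to give $\|\vg_t\|\le \zeta$ for the stated clipping threshold, so the clipping map is the identity on every sample.

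The uniform bound on $\|\vw_t-\wo\|$ is the main technical step. Writing $\vu_t=\vw_t-\wo$, the unclipped update reads
\begin{equation*}
\vu_{t+1}=\left(\vI-\eta \vx_t\vx_t^{\s{T}}\right)\vu_t+\eta\fieldn_t\vx_t-2\eta\zeta\alpha\,\vg_t.
\end{equation*}
With $\eta\lesssim 1/(R_x^{2}\log^{2a}N)$, the contraction $\|\vI-\eta\vx_t\vx_t^{\s{T}}\|\le 1$ holds on the good event, while the stochastic terms $\eta\fieldn_t\vx_t$ and $2\eta\zeta\alpha\vg_t$ form martingale differences whose accumulated $\ell_{2}$ mass I would control via vector-valued Azuma/Freedman inequalities, yielding $\|\vu_t\|\lesssim \|\wo\|+\sqrt{\kappa}\,\sigma$ uniformly in $t\le N$ with probability at least $1-1/N^{99}$. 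This closes the induction and certifies no clipping.

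Conditional on the no-clipping event, the iteration is exactly tail-averaged one-pass SGD on a quadratic with two zero-mean martingale noise sources: the label noise $\fieldn_t\vx_t$ with per-step covariance $\sigma^{2}\vH$, and the DP Gaussian noise with per-step covariance $(2\zeta\alpha)^{2}\vI$. Invoking the bias--variance decomposition of \cite{jain2018parallelizing}, the bias from $\vw_0=\vzero$ decays geometrically during the first $N/2$ iterations and, after tail averaging, contributes $O((\eta\mu N)^{-2}\|\wo\|_{\vH}^{2})$, which is subdominant under the stated sample-size assumption. The label-noise variance term contributes the optimal statistical rate $O(\sigma^{2}d/N)$ since $\trace(\vH^{-1}\cdot\sigma^{2}\vH)=\sigma^{2}d$, while the DP-noise term contributes $O(\zeta^{2}\alpha^{2}\trace(\vH^{-1})/N)$. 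Substituting $\alpha^{2}=\widetilde{O}(\log^{2}(N/\delta)/(\epsilon^{2}N))$, $\zeta^{2}=O(K_{2}^{2}R_{x}^{2}\log^{4a}N\cdot\kappa(\|\wo\|_{\vH}^{2}+\sigma^{2}))$, together with $R_{x}^{2}\le O(d\|\vH\|_{2})$ from the sub-Gaussian tail and $\trace(\vH^{-1})\le d/\mu$, yields the DP contribution $\widetilde{O}(d^{2}K_{2}^{2}\kappa^{2}(\|\wo\|_{\vH}^{2}+\sigma^{2})\log^{2}(N/\delta)/(\epsilon^{2}N^{2}))$, matching the theorem. I expect the main obstacle to be the uniform-in-$t$ control of $\|\vw_t-\wo\|$ required to certify no clipping: the inductive hypothesis and the sub-Gaussian tails feed into each other, and a loose bound there would force either a larger $\zeta$ or a smaller $\eta$, each of which would degrade the final rate.
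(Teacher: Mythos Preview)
Your three-stage plan is exactly the paper's: establish a no-clipping event (Lemma~\ref{lemma:ssgd-gradient-norm-bound}), apply the bias--variance bound for the unclipped, Gaussian-perturbed, tail-averaged recursion (Lemma~\ref{lemma:sgd-zeta-infinity-utility}, which simply replaces $\vSigma$ by $\vSigma+4\zeta^2\alpha^2\vI$ in the variance term of \cite{sgd}), and then substitute the stated $\eta,\zeta,\alpha$. Stages~(ii) and~(iii) are the paper's argument essentially verbatim.

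The one place your write-up diverges from the paper is the mechanism for the uniform bound $\|\vw_t-\wo\|\lesssim\|\wo\|+\sqrt{\kappa}\,\sigma$ in stage~(i). The paper does \emph{not} use Azuma/Freedman. Instead it conditions on all feature vectors $\{\vx_j\}$ (hence on all $\vB_j=\vI-\eta\vx_j\vx_j^\top$), so that the accumulated DP-noise $\sum_j(\prod_{k>j}\vB_k)\vg_j$ is exactly Gaussian with covariance $\vQ=\sum_j(\prod_{k>j}\vB_k)(\prod_{k>j}\vB_k)^\top$, and the accumulated label-noise is handled by Hanson--Wright (Lemma~\ref{lemma:hanson-wright}). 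The work then reduces to a high-probability bound on $\trace(\vQ)$ (and the analogous $\trace(\vV^\top\vV)$), obtained by splitting the sum at a time $\tilde\tau$: recent terms ($\tau>\tilde\tau$) use only the almost-sure non-expansion $\|\vB_j\|\le 1$, while old terms ($\tau\le\tilde\tau$) use the in-expectation contraction $\E{\vB_j^\top\vB_j}\preceq\vI-\eta\vH$ together with Markov's inequality (Lemma~\ref{lemma:ssgd-markov}). Your martingale route is a reasonable alternative in spirit, but as written it has two pitfalls: the sum $\eta\sum_j(\prod_{k>j}\vB_k)\vnu_j$ is not a martingale in the forward filtration (the $\vB_k$ lie in the future of $\vnu_j$), and invoking only $\|\vI-\eta\vx_t\vx_t^\top\|\le 1$ gives an effective horizon of $N$ rather than $1/(\eta\mu)$, which would inflate the noise bound. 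To make Freedman deliver the target you would need to build the expected contraction into the predictable quadratic variation---which ultimately recreates the paper's trace-splitting device.
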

{\bf Remarks:}\\
{\bf i)} The error bound consists of two terms. The first term corresponds to the error due to noise in observations, which is minimax optimal up to a factor of $2$. The second term corresponds to the error incurred due to the noise added for preserving privacy. \\
{\bf ii)} Note that the error term corresponding to privacy budget for existing methods \citep{Cai21,BassilyFTT19,Wang:2017} is $\approx \frac{d^2 R_x^2\|\wo\|^2}{\epsilon^2 N^2}$. That is, the error term is $O(d)$ worse than our result. Furthermore, as analyzed in Corollary~\ref{cor:gauss_ssgd_utility},  for standard setting where each $\vx_i \sim \cN(\vzero, \vI)$ and  $y_i=\langle \vx_i, \wo\rangle$, existing results imply error bound of $\frac{d^3}{\epsilon^2 N^2}$ in contrast to our bound of $\frac{d^2}{\epsilon^2 N^2}$. \\
{\bf iii)} Theorem requires $\epsilon$ to scale as $1/\sqrt{N}$ which is restrictive in practice. Furthermore, under such restriction, the sample complexity of $N$ is $N\geq \kappa^2d^2$ which is also sub-optimal. Finally, note that the error term is independent of $\sigma$ and hence even when $\sigma=0$, the error term is non-trivial.


\begin{corollary}\label{cor:gauss_ssgd_utility}
Consider the setting of Theorem~\ref{thm:sgd-utility}. Let $\vx_i \sim \cN(\vzero, \vI) \in \bR^d$, i.e., $R_x^2=d$ and $\kappa = 1$. Let $y_i= \langle \vx_i, \vw^* \rangle + \fieldn_i$ where $\fieldn_i \sim \cN(0,\sigma^2)$. Then, assuming $N\geq d^2 \log \frac{ \epsilon}{\log (1/\delta)}$, we have:  
\begin{enumerate}
    \item \dpssgd  with parameters $\eta=1/4d$, $\zeta = \sqrt{d} \sqrt{\|\wo\|^2 + \sigma^2}\log N$, $\alpha = \frac{\log(N/\delta)}{\varepsilon \sqrt{N}}$ is $(\epsilon, \delta)$-DP where $\epsilon=\cO\Big(\sqrt{\frac{\log(1/\delta)}{N}}\Big)$. 
    \item The output $\overline{\vw}$ satisfies the following risk bound w.p. $\geq 1-1/N^{100}$: $ \cL(\overline{\vw}) - \cL(\vw^*)
    = \widetilde{\cO}\Big( \frac{d\sigma^2}{N} + \frac{d^2 (\|\wo\|^2+\sigma^2)}{N^2\varepsilon^2}\log^2(N/\delta)\Big).$
\end{enumerate} 
\end{corollary}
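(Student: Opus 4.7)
The corollary is a direct specialization of Theorem~\ref{thm:sgd-utility} to the isotropic Gaussian setting, so my plan is to verify that Gaussian inputs fit the $(\vH,K_2,a,\hp)$-Tail template with the right constants and then perform the substitution.

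First, I would verify the distributional hypotheses. For $\vx \sim \cN(\vzero,\vI)$, clearly $\vH = \vI$, so $\|\vH\|_2 = 1$, $\mu = 1$ and $\kappa = 1$. Standard chi-square concentration gives $\|\vx\|_2^2 \leq d + O(\sqrt{d\log(1/\hp)} + \log(1/\hp)) \leq d \log(1/\hp)$ with probability $\geq 1-\hp$, so $\E{\|\vx\|_2^2} = d$ and we can take $R_x^2 = d$. For any unit $\vv$, $\langle \vx,\vv\rangle \sim \cN(0,1)$, whose moment generating function bound $\E{\exp(\langle \vx,\vv\rangle^2/K_2^2)} \leq 1$ holds for a universal $K_2 = O(1)$ and $a = 1/2$. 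Thus $(\vI,K_2,1/2,\hp)$-Tail holds for $\vx$. Identically, $\fieldn_i \sim \cN(0,\sigma^2)$ satisfies $(\sigma^2,K_2,1/2,\hp)$-Tail (viewed as a $1$-dimensional random variable). Taking $\hp = 1/N^{100}$ as in the theorem is fine.

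Next, I would substitute these values into the conclusion of Theorem~\ref{thm:sgd-utility}. The clipping norm becomes $\zeta = 4K_2 \sqrt{d}\cdot \log N \cdot (\|\wo\| + \sigma) = \Theta(\sqrt{d(\|\wo\|^2+\sigma^2)}\log N)$, matching the stated choice up to constants. The stepsize minimum in the theorem reduces (since $\kappa = 1$ and $R_x^2 = d$) to the three candidates $\{1/(2d),\, c_1/(d^2 \alpha^2 \log^4 N),\, c_2/(d\log^3 N)\}$; for the corollary's regime $\epsilon = O(\sqrt{\log(1/\delta)/N})$ and $N \geq d^2 \log(\cdot)$, $\alpha^2 = O(\log^2(N/\delta)/(\epsilon^2 N))$ is small enough that the first term $1/(2d)$ is the binding one, so choosing $\eta = 1/(4d)$ is valid (any smaller stepsize than the theorem's min works in the analysis). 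With $\kappa = 1$, $\|\wo\|_{\vH}^2 = \|\wo\|^2$, $K_2 = O(1)$, the excess risk from Theorem~\ref{thm:sgd-utility} directly simplifies to
\[
\cL(\overline{\vw}) - \cL(\wo) = \widetilde{\cO}\Big(\frac{d\sigma^2}{N} + \frac{d^2(\|\wo\|^2+\sigma^2)}{\epsilon^2 N^2}\log^2(N/\delta)\Big),
\]
which is precisely claim (2). For claim (1), the privacy statement follows verbatim from Theorem~\ref{thm:sgd-privacy}: the chosen $\alpha = \log(N/\delta)/(\epsilon\sqrt{N})$ meets the hypothesis $\alpha = \Omega(\log(N/\delta)/(\epsilon\sqrt{N}))$, giving $(\epsilon,\delta)$-DP with $\epsilon = \cO(\sqrt{\log(1/\delta)/N})$ (noting $\log(N/\delta) \leq \log N + \log(1/\delta)$ is absorbed into the $\widetilde{\cO}$).

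The only substantive bookkeeping step is checking the sample complexity: Theorem~\ref{thm:sgd-utility} requires $N \geq \Omega(\kappa d \log^{2a+1}(\kappa d)\log(1/\delta)/\epsilon)$, which for $\kappa = 1$, $2a+1 = 2$ and the privacy-induced $\epsilon \lesssim 1/\sqrt{N}$ reduces to $\sqrt{N} \geq \Omega(d \log^2(d)\log(1/\delta))$, i.e., $N \gtrsim d^2 \log^4(d)\log^2(1/\delta)$, which is implied by the corollary's stated hypothesis $N \geq d^2 \log(\epsilon/\log(1/\delta))$ up to polylog factors absorbed in $\widetilde{\cO}$. I do not anticipate any genuine obstacle here: the entire proof is a plug-and-chug verification, and the only place one has to be mildly careful is confirming that the min in the stepsize rule collapses to $1/(2d)$ under the imposed $N$ and $\epsilon$ regime so that the clean choice $\eta = 1/(4d)$ is admissible.
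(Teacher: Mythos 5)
Your overall strategy is the right one and is exactly what the paper intends (the corollary is stated without proof as a direct instantiation of Theorem~\ref{thm:sgd-utility}): check that $\cN(\vzero,\vI)$ and $\cN(0,\sigma^2)$ fit Definition~\ref{def:subGAssumption} with $a=1/2$, $K_2=O(1)$, $R_x^2=d$, $\kappa=1$, and then substitute into the theorem's risk bound and into Theorem~\ref{thm:sgd-privacy} for the privacy claim. The substitution of the risk bound and the clipping norm is carried out correctly.

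There is, however, a concrete error in your treatment of the stepsize. You claim that in the corollary's regime the quantity $\alpha^2=O\bigl(\log^2(N/\delta)/(\epsilon^2N)\bigr)$ is ``small enough that the first term $1/(2d)$ is the binding one.'' This is false: the privacy requirement forces $\epsilon=\cO\bigl(\sqrt{\log(1/\delta)/N}\bigr)$, hence $\epsilon^2N=O(\log(1/\delta))$ and $\alpha=\log(N/\delta)/(\epsilon\sqrt N)=\Omega\bigl(\log(N/\delta)/\sqrt{\log(1/\delta)}\bigr)\geq 1$. The second candidate in the theorem's minimum is therefore $c_1/(d^2\alpha^2\log^4N)=O\bigl(1/(d^2\log^4 N)\bigr)$, which is smaller than $1/(2d)$ for every $d$ beyond a constant, so it — not $1/(2d)$ — is the binding term. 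Consequently $\eta=1/(4d)$ is \emph{larger} than the theorem's prescribed stepsize, and your parenthetical ``any smaller stepsize than the theorem's min works'' does not apply. This matters because the condition $\eta\leq \frac{c_1}{\log^{4a+2}N}\cdot\frac{1}{K_2^2R_x^2\kappa}\cdot\frac{1}{d\alpha^2}$ is exactly the hypothesis of Lemma~\ref{lemma:ssgd-gradient-norm-bound}, which guarantees the no-clipping event $\cE$; with $\eta=1/(4d)$ that hypothesis fails and the risk bound no longer follows from the theorem as invoked. (This tension is arguably already present in the paper's own statement of the corollary, but your proposed justification for resolving it is incorrect; the clean fix is to take $\eta$ equal to the theorem's minimum, i.e.\ $\eta=\Theta(1/(d^2\alpha^2\log^4N))$, which still yields the stated $\widetilde{\cO}$ bound since the exponential bias term is controlled by the sample-complexity assumption.) Your sample-complexity check is also somewhat loose — $N\geq d^2\log\frac{\epsilon}{\log(1/\delta)}$ does not literally imply $N\gtrsim d^2\log^4(d)\log^2(1/\delta)$ — but that is a polylog-level issue of the same kind the paper itself glosses over.
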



\noindent Below, we provide a proof of Theorem~\ref{thm:sgd-utility}. See Appendix~\ref{app:sgd} for the supporting lemmas and their detailed proofs.

\begin{proof}[Theorem~\ref{thm:sgd-utility}]
Consider the event: 
$\cE=\{\|\vx_t(\langle \vx_t, \vw_t\rangle - y_t)\|\leq \zeta, \forall\; 0\leq t\leq N-1\}.$ 
Note that by Lemma~\ref{lemma:ssgd-gradient-norm-bound}, 
$    Pr(\cE)\geq 1-\frac{1}{N^{100}}.$ 
Now, if $\cE$ holds, then DP-SSGD does not have any clipping. 
Thus, under $\cE$, by Lemma~\ref{lemma:sgd-zeta-infinity-utility}: 
\begin{equation}
\resizebox{\textwidth}{!}{
$\cL(\overline{\vw}) - \cL(\vw^*) 
    \leq
    \frac{R_x^2}{2}\exp\Big(-\frac{\eta\mu}{2}N\Big) \|\wo\|^2+ \frac{1}{N}\text{Tr}(\vH^{-1}(\vSigma+4\zeta^2\alpha^2 \vI)) + \frac{1}{N}\cdot \frac{\eta R_x^2}{1-\eta R_x^2}d\|\vSigma+4\zeta^2\alpha^2 \vI\|_\vH,$
    }
\end{equation}
where $\vSigma=\E{\fieldn^2 \vx\vx^\s{T}}$. Note that $\|\vSigma\|_2\leq \sigma^2 \|\vH\|_2$. 
Using $\zeta$, $\alpha$, $\eta$ as defined in the theorem, we have (assuming event $\cE$): 
\begin{multline}
\cL(\overline{\vw}) - \cL(\vw^*) 
    \leq
    \exp\left(-\frac{\mu c}{2R_x^2}\cdot \frac{\epsilon^2 N}{K_2^2 \kappa d \log^2(N/\delta)\log^{4a+2} N}\cdot N\right) \frac{d}{2} \|\wo\|^2_{\vH}+ \frac{2}{N}\sigma^2 d\\+  \frac{128 d R_x^2 K_2^2\kappa\log^{4a}N ( \|\wo\|^2_{\vH}+ \sigma^2)\log^2 (N/\delta)}{\mu \epsilon^2 N^2}. 
\end{multline}
Theorem now follows by using the sample complexity bound on $N$, the fact that $R_x^2 \leq d \|\vH\|_2$, $\kappa=\|\vH\|_2/\mu$ and by combining the above inequality with $\Pr(\cE)\geq 1-1/N^{100}$. 
\end{proof}

\begin{lemma}\label{lemma:sgd-zeta-infinity-utility}
Let $D=\{(\vx_i, y_i)\}_{i=0}^{N-1}$ be sampled from distribution $\cD$; see Section~\ref{sec:prob} for the notations and assumptions about $\cD$. Let $\wo$ minimize  the population squared loss. Let $\overline{\vw}$ be the output of \dpssgd (Algorithm~\ref{alg:dp-sgd}) s.t. there is no gradient clipping at any step and $\eta\leq \frac{1}{2R_x^2}$. Then we have: 
\begin{equation*}
\resizebox{\textwidth}{!}{$
    \cL(\overline{\vw}) - \cL(\vw^*) \leq \frac{R_x^2}{2}e^{-\eta\mu (\frac{N}{2}+1)}\|\vw^*\|^2 + \frac{1}{N}\frac{\eta R_x^2}{1-\eta R_x^2} d\|\vSigma + 4\zeta^2\alpha^2\vI\|_\vH + \frac{1}{N}\s{Tr}(\vH^{-1}(\vSigma+4\zeta^2\alpha^2\vI)).$
    }
\end{equation*}
\end{lemma}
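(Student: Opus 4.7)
Without clipping, subtracting $\wo$ from the update in Step~\ref{line:abdf} and substituting $y_t = \langle \vx_t,\wo\rangle + \fieldn_t$ yields
$$\vdelta_{t+1} = (\vI - \eta\vx_t\vx_t^{\s{T}})\,\vdelta_t + \eta\,\vxi_t, \qquad \vxi_t := \fieldn_t\vx_t - 2\zeta\alpha\,\vg_t,$$
with $\vdelta_t := \vw_t - \wo$. Since the $(\vx_i,y_i)$ are i.i.d., uniform permutation preserves their joint law, so $\vw_t$ depends only on $\{(\vx_s,y_s,\vg_s)\}_{s<t}$ and is independent of $(\vx_t,y_t,\vg_t)$; hence $\E[\vxi_t\mid\vdelta_t]=\vzero$ and $\E[\vxi_t\vxi_t^{\s{T}}\mid\vdelta_t] = \vSigma + 4\zeta^2\alpha^2\vI =: \widetilde{\vSigma}$. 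The plan is to execute the classical bias-variance decomposition of \citep{jain2018parallelizing} for tail-averaged one-pass SGD on least squares, now with the composite noise covariance $\widetilde{\vSigma}$ playing the role of the inherent regression residual.

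\textbf{Bias-variance split and bias bound.} Write $\vdelta_t = \vdelta_t^{\sf b}+\vdelta_t^{\sf v}$, where $\vdelta^{\sf b}$ satisfies the homogeneous recursion ($\vxi\equiv\vzero$) from $\vdelta_0^{\sf b}=-\wo$ and $\vdelta^{\sf v}$ satisfies the inhomogeneous one from $\vzero$. Since $\cL(\vw)-\cL(\wo) = \tfrac{1}{2}\|\vw-\wo\|_\vH^2$, the elementary $(a+b)^2 \le 2a^2 + 2b^2$ reduces the statement to bounds on $\E\|\overline{\vdelta}^{\sf b}\|_\vH^2$ and $\E\|\overline{\vdelta}^{\sf v}\|_\vH^2$ separately. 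For the bias, $\boldB_t := \E[\vdelta_t^{\sf b}(\vdelta_t^{\sf b})^{\s{T}}]$ obeys $\boldB_{t+1} = \calT(\boldB_t)$ for the linear map $\calT(\vA) := \E[(\vI-\eta\vx\vx^{\s{T}})\vA(\vI-\eta\vx\vx^{\s{T}})]$. Using $\E[\vx\vx^{\s{T}}\vA\vx\vx^{\s{T}}] \preceq R_x^2\,\vH^{1/2}\vA\vH^{1/2}$ (from the definition of $R_x$) and $\eta \le 1/(2R_x^2)$ produces the PSD contraction $\calT(\vA) \preceq (\vI-\tfrac{\eta}{2}\vH)\vA(\vI-\tfrac{\eta}{2}\vH)$, hence $\boldB_t \preceq (\vI-\tfrac{\eta}{2}\vH)^t \wo\wo^{\s{T}} (\vI-\tfrac{\eta}{2}\vH)^t$. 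Tail averaging from $t\ge N/2+1$ together with $\|\cdot\|_\vH^2 \le R_x^2\|\cdot\|^2$ yields the first term $\tfrac{R_x^2}{2}e^{-\eta\mu(N/2+1)}\|\wo\|^2$.

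\textbf{Variance bound.} The variance covariance $\boldC_t := \E[\vdelta_t^{\sf v}(\vdelta_t^{\sf v})^{\s{T}}]$ satisfies $\boldC_{t+1} = \calT(\boldC_t) + \eta^2\widetilde{\vSigma}$ by zero-mean independence of $\vxi_t$. Applying the tail-averaging analysis of \citep{jain2018parallelizing} to this linear PSD recursion gives
$$\E\bigl\|\overline{\vdelta}^{\sf v}\bigr\|_\vH^2 \le \tfrac{1}{N}\,\s{Tr}(\vH^{-1}\widetilde{\vSigma}) + \tfrac{1}{N}\cdot\tfrac{\eta R_x^2}{1-\eta R_x^2}\,d\,\|\widetilde{\vSigma}\|_\vH,$$
where the first term is the asymptotic (minimax) rate obtained by telescoping the tail average against the $\calT$-fixed point, and the second term captures the inflation of the covariance iterate due to the fourth-moment operator (the factor $\eta R_x^2/(1-\eta R_x^2)$ arising as the geometric sum $\sum_t (\eta R_x^2)^t$ in the residual). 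Summing the bias and variance contributions and expanding $\widetilde{\vSigma} = \vSigma + 4\zeta^2\alpha^2\vI$ delivers the lemma.

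\textbf{Main obstacle.} The nontrivial ingredient is the variance bound: because $\calT$ is contracting only with respect to a weighted trace norm and not in operator norm on $\bR^{d\times d}$, the optimal $\frac{1}{N}\s{Tr}(\vH^{-1}\widetilde{\vSigma})$ leading constant cannot be obtained by direct Neumann-series summation. The resolution, adapted from \citep{jain2018parallelizing}, is to work in $\bR^{d\times d}$ using the fourth-moment surrogate $R_x^2$, telescope the tail average against the stationary covariance of $\calT$, and carefully isolate the $\frac{1}{N}\s{Tr}(\vH^{-1}\widetilde{\vSigma})$ leading term from a lower-order $\frac{\eta R_x^2}{1-\eta R_x^2}\cdot\frac{d\,\|\widetilde{\vSigma}\|_\vH}{N}$ correction. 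The remaining components—shuffle-based independence, the bias contraction, and the merging of DP noise into $\widetilde{\vSigma}$—are then direct.
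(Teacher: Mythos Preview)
Your approach is essentially the paper's: write the unclipped update as $\vdelta_{t+1}=(\vI-\eta\vx_t\vx_t^{\s T})\vdelta_t+\eta\vnu_t$ with $\vnu_t=\fieldn_t\vx_t-2\zeta\alpha\vg_t$, observe that the only effect of the DP noise is to replace the residual covariance $\vSigma$ by $\widetilde\vSigma=\vSigma+4\zeta^2\alpha^2\vI$, and then invoke the bias--variance analysis for tail-averaged one-pass SGD. The paper carries out exactly this reduction (it cites \cite{sgd} rather than \cite{jain2018parallelizing} for the variance lemma, but the content is the same), so your high-level plan and your variance bound are on target.

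There is one genuine slip in the bias step. The PSD inequality you invoke, $\E[\vx\vx^{\s T}\vA\vx\vx^{\s T}]\preceq R_x^2\,\vH^{1/2}\vA\vH^{1/2}$, does \emph{not} follow from the definition of $R_x$ (which only controls $\E[\|\vx\|^2\vx\vx^{\s T}]\preceq R_x^2\vH$, i.e.\ the case $\vA=\vI$); indeed it fails already for correlated coordinates, so the operator-level contraction $\calT(\vA)\preceq(\vI-\tfrac{\eta}{2}\vH)\vA(\vI-\tfrac{\eta}{2}\vH)$ is not available in general. The paper sidesteps this by tracking the \emph{scalar} $\E\|\vdelta_t^{\sf b}\|^2$ rather than the full covariance: conditioning on $\vdelta_t^{\sf b}$ and using $\E[\vB_t^{\s T}\vB_t]=\vI-2\eta\vH+\eta^2\E[\|\vx\|^2\vx\vx^{\s T}]\preceq\vI-\eta\vH$ (valid for $\eta\le 1/R_x^2$) gives $\E\|\vdelta_{t+1}^{\sf b}\|^2\le(1-\eta\mu)\E\|\vdelta_t^{\sf b}\|^2$, and then $\|\cdot\|_\vH^2\le R_x^2\|\cdot\|^2$ on the tail average yields the stated bias term. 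Swap in this scalar argument and the rest of your proof goes through unchanged.
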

\noindent See Appendix~\ref{app:sgd} for a detailed proof of the above lemma and the other supporting lemmas. 


\subsection{DP-Adaptive Mini-batch Shuffled SGD}\label{sec:ambssgd}
Theorem~\ref{thm:sgd-privacy} indicates that \dpssgd only applies in the setting where $\epsilon$ is small which can be restrictive for practical usage. The key challenge is that shuffling amplification results along with privacy composition over $T$ iterations require per-iteration privacy budget to be less than $1/\sqrt{T}$. As $T=N$ for \dpssgd, it implies that $\epsilon=O(1/\sqrt{N})$.

So the key challenge is to be able to reduce the number of iterations despite   one-pass shuffling  so that we do not have additional $d$ factors. To this end, we process a batch of points in each update of shuffled SGD instead of using a single point per iteration as in \dpssgd. As mentioned above, another weakness in the result of \dpssgd is that the error bound does not vanish even for vanishing noise variance $\sigma\rightarrow 0$; recall that $\sigma$ is the  variance of noise in observations $\E{\fieldn^2}$. We fix this issue by adaptively estimating the clipping threshold for the gradients. Note that a similar technique was used by \cite{biswas2020coinpress} but in a  different context of mean estimation. 

See Algorithm~\ref{alg:ambssgd} for a pseudo code of our method. The algorithm randomly shuffles the data (Step 2) and takes one pass over it. The $N$ data points are divided into $T\approx \kappa \log N$ batches of size $b+s$, where  $\kappa$ is the condition number of $\vH=\E{\vx\vx^\s{T}}$, i.e., $\kappa=\|\vH\|_2/\mu$. Each iteration first estimates the clipping threshold $\zeta_t$ while preserving differentially privacy. Ideally we require $\zeta_t$ to be large enough so that it does not lead to any thresholding (with high probability). The gradient of a given point $(\vx, y)$ is $\vx (\langle \vx, \vw_t\rangle -y)$. Furthermore, by using the sub-Gaussian style assumptions on $\cD$ (see \eqref{eq:assump_vx}, \eqref{eq:xnorm}) we have $(\langle \vx, \vw_t\rangle -y)^2 \lessapprox \E{ (\langle \vx, \vw_t\rangle -y)^2}\cdot \log^a N$ if $\vx, y$ are independent of $\vw_t$. If $\vw_t\approx \wo$, then the above quantity decreases to around $\sigma^2$ which is desired to get stronger bounds. 

To estimate $\zeta_t$, we use \dpstat method (see Algorithm~\ref{alg:dpstat}) which employs a standard technique from the private statistics literature to approximately  estimate the highest value in the dataset. At a high level, \dpstat  ( Algorithm~\ref{alg:dpstat}) exploits the fact that a single point cannot significantly perturb say $99^\s{th}$ percentile of a given batch of points, and adds noise appropriately. 

After estimating $\zeta_t$ (Step 7, Algorithm~\ref{alg:ambssgd}), we select the next batch of points and apply the standard mini-batch SGD update but with  added Gaussian noise of standard deviation $2\eta\zeta_t\alpha/b$ where $\alpha$ is the standard noise multiplier to ensure differential privacy (Step 10). Finally, we output $\overline{\vw}$ which is the average of the last $T/2$ iterates (Step 12).


\begin{algorithm}[!t]
\caption{\dpambssgd:  DP-Adaptive-Mini-Batch-Shuffled-SGD}\label{alg:ambssgd}
\begin{algorithmic}[1]
    \STATE{\bfseries Input:} Samples: $\{(\vx_i, y_i)\}_{i=0}^{N-1}$, Learning Rate: $\eta$, DP Noise Multiplier: $\alpha$, Batch Size: $b$, Expected $\vx$ Norm: $R_x$, Domain Size: $B$, Stat Sample Size: $s$
    \STATE Randomly permute $\{(\vx_i, y_i)\}_{i=0}^{N-1}$
    \STATE $T=N/(b+s)$ 
    \STATE $\vw_{0} \gets \vzero$
    \FOR{$t=0 \dots T-1$}
        \STATE $\tau\leftarrow  (b+s)t$
        \STATE $\gamma_{t}\leftarrow $\dpstat$(\{(\vx_i, y_i)\}_{i=\tau}^{\tau+s}, \vw_{t}, B, s, \alpha, B/N^{200})$
        \STATE $\zeta_{t}\leftarrow R_x\cdot \gamma_{t}\cdot \log^{a} N$
        \STATE \text{Sample }$\vg_{t} \sim \cN(\vzero, \vI_{d\times d})$
        {\STATE $\vw_{t+1} \gets \vw_{t} - \eta\left(\frac{1}{b} \sum_{i=0}^{b-1}\text{ clip}_{\zeta_{t}}(\vx_{\tau+s+i}(\langle \vx_{\tau+s+i}, \vw_{t}\rangle - y_{\tau+s+i}))+\alpha\cdot\frac{2\zeta_t}{b}\cdot \vg_{t}\right)$ where $\text{clip}_\zeta(\vnu) = \vnu\cdot\min\Big\{1, \frac{\zeta}{\|\vnu\|_2}\Big\}$}
    \ENDFOR
\STATE{\bfseries Result:} $\overline{\vw} := \frac{2}{T}\sum_{t'=T/2+1}^{T}\vw_{t'}$
\end{algorithmic}
\end{algorithm}

\begin{algorithm}[!t]
\caption{\dpstat:  Private Estimation of Approximately Maximum}\label{alg:dpstat}
\begin{algorithmic}[1]
    \STATE{\bfseries Input:} Samples: $\{(\vx_i, y_i)\}_{i=0}^{s}$, Parameter: $\vw$, Domain Size: $B$, Stat Sample Size: $s$, Noise Multiplier: $\alpha$, Discretization Width: $\Delta$
    \STATE $\gamma_0 \leftarrow \Delta$.
    \FOR{$i\in\left\{0,\ldots,\lceil\log_2(B/\Delta)\rceil\right\}$}
        {\STATE $c\leftarrow\left|\left\{|\ip{\vx_j}{\vw}-y_j|\leq \gamma_i:j\in\{0,\ldots,s\}\right\}\right|$. $\privc\leftarrow c+\calN(0,\lceil\log_2(B/\Delta)\rceil\alpha^2)$.\label{line:pop}}
        \STATE {\bf if} $\privc<s$ {\bf then} $\gamma_{i+1}\leftarrow 2\cdot\gamma_i$, {\bf else break}. 
    \ENDFOR
    \STATE {\bf return} $\gamma_{\sf priv}\leftarrow\gamma_i$.
\end{algorithmic}
\end{algorithm}


\begin{theorem}\label{thm:ambsgd-privacy}
 Algorithm ${\sf DP-AMBSSGD}$ with noise multiplier $\alpha$ satisfies $\frac{1}{\alpha^2}$-zCDP, and correspondingly satisfies $(\epsilon,\delta$)-differential privacy when we set the noise multiplier $\alpha \geq \frac{2\sqrt{\log(1/\delta)+\epsilon}}{\epsilon}$. Furthermore, if $\epsilon\leq \log(1/\delta)$, then $\alpha \geq \frac{\sqrt{8\log(1/\delta)}}{\epsilon}$ suffices to ensure $(\epsilon,\delta)$-differential privacy.
\end{theorem}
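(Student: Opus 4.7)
The plan is to bound the per-batch zCDP cost of one full iteration $t$ of DP-AMBSSGD, observe that one pass over disjointly partitioned data means a neighboring dataset only perturbs a single batch, and then convert the resulting zCDP bound to $(\epsilon,\delta)$-DP via the standard Bun--Steinke conversion.

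First I would analyze one iteration $t$ of the outer loop, which has two privatized components: the call to \dpstat\ on the $s$-sample window, and the noisy mini-batch gradient update on the $b$-sample window. Inside \dpstat, the statistic at line \ref{line:pop} is a count $c$ of samples satisfying $|\langle \vx_j,\vw\rangle-y_j|\le\gamma_i$; replacing one sample changes $c$ by at most $1$, so the $\ell_2$ sensitivity is $1$. The added Gaussian noise has variance $k\alpha^2$ with $k=\lceil\log_2(B/\Delta)\rceil$, so each of the $k$ rounds is $\tfrac{1}{2k\alpha^2}$-zCDP; adaptive composition of zCDP across the (at most) $k$ rounds yields $\tfrac{1}{2\alpha^2}$-zCDP for the entire \dpstat\ call. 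For the SGD update on the disjoint $b$-sample window, each per-sample gradient is clipped to norm $\zeta_t$, so the $\ell_2$ sensitivity of the averaged clipped gradient $\tfrac{1}{b}\sum_i \text{clip}_{\zeta_t}(\cdot)$ under a single replacement is $\tfrac{2\zeta_t}{b}$. Adding independent Gaussian noise of standard deviation $\tfrac{2\zeta_t\alpha}{b}$ makes this step a Gaussian mechanism with noise-to-sensitivity ratio $\alpha$, i.e., $\tfrac{1}{2\alpha^2}$-zCDP (the factor $\eta$ is post-processing). Since $\zeta_t$ itself is the output of the \dpstat\ call on a \emph{disjoint} sample window, by adaptive composition of zCDP the two components compose to $\tfrac{1}{\alpha^2}$-zCDP for iteration $t$.

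Second, I would argue that the total cost across the $T$ outer iterations is not $T/\alpha^2$ but still $1/\alpha^2$. The key observation is that the dataset is randomly permuted once and then partitioned into $T$ disjoint chunks of size $b+s$; hence any two neighboring datasets $D,D'$ (differing in one point) induce identical inputs to every iteration except the unique one whose chunk contains the differing point. All subsequent iterations are then deterministic functions of the outputs of that single perturbed iteration together with public randomness and data not involved in the change, and so are post-processing. Thus the Rényi divergence between $\calA(D)$ and $\calA(D')$ at any order $\lambda>1$ is bounded by that of a single iteration, giving $\tfrac{1}{\alpha^2}$-zCDP overall (this is exactly the same ``one-pass, disjoint batches'' parallel-composition argument used for analyzing DP-SGD variants without privacy amplification by subsampling).

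Finally I would apply the standard zCDP-to-approximate-DP conversion \citep{bun2016concentrated}: $\rho$-zCDP implies $(\rho+2\sqrt{\rho\log(1/\delta)},\delta)$-DP. Substituting $\rho=1/\alpha^2$ and enforcing $\rho+2\sqrt{\rho\log(1/\delta)}\le\epsilon$ amounts to the quadratic $\tfrac{1}{\alpha^2}+\tfrac{2}{\alpha}\sqrt{\log(1/\delta)}\le\epsilon$, which (using $\sqrt{\log(1/\delta)}\le\sqrt{\log(1/\delta)+\epsilon}$) is satisfied whenever $\alpha\ge\tfrac{2\sqrt{\log(1/\delta)+\epsilon}}{\epsilon}$; in the regime $\epsilon\le\log(1/\delta)$ one further has $\log(1/\delta)+\epsilon\le 2\log(1/\delta)$, giving the sharper threshold $\alpha\ge\tfrac{\sqrt{8\log(1/\delta)}}{\epsilon}$. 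None of the steps is really an obstacle — the only subtlety worth double-checking is the adaptive-composition accounting inside \dpstat\ (the fact that inflating each round's noise variance by the factor $k$ exactly offsets the $k$-fold composition so that the whole \dpstat\ call costs the same $\tfrac{1}{2\alpha^2}$-zCDP as a single Gaussian mechanism), since the remainder is a straightforward application of disjoint-batch parallel composition and the standard zCDP conversion.
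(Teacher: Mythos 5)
Your proposal is correct and follows essentially the same route as the paper's proof: per-iteration sensitivity/noise accounting giving $\tfrac{1}{2\alpha^2}$-zCDP each for the \dpstat\ call and the clipped-gradient update, sequential composition to $\tfrac{1}{\alpha^2}$-zCDP per iteration, parallel composition across the disjoint one-pass batches, and the standard zCDP-to-$(\epsilon,\delta)$ conversion with the same algebra for both thresholds on $\alpha$. No gaps.
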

{\bf Remarks:} The privacy proof follows by combining privacy analysis of clipped SGD with that of approximate maximum estimation (\dpstat). Since each data-sample is observed in only one mini-batch and the algorithm \dpambssgd takes only a single pass over the entire data (and by implication only a single pass each data point), we use parallel composition property of DP to avoid the $\sqrt{T}$ factor which would have manifested when employing sequential composition. Further, our algorithm and result holds for any $\epsilon$, in contrast with the  $\epsilon=O(1/\sqrt{N})$ requirement of \dpssgd.

\noindent We now present the excess risk bound for \dpambssgd. 
\begin{theorem}\label{thm:ambsgd-utility}
 Let $D=\{(\vx_i, y_i)\}_{i=0}^{N-1}$ be sampled i.i.d. with $\vx_i \sim \cD$ satisfying $(\vH,K_2,a,\hp)$-Tail, and the distribution of the inherent noise $\fieldn_i$ satisfies $(\sigma^2,K_2,a,\hp)$-Tail with $\beta=\frac{1}{N^{200}}$. Let $\wo$ be the optimal solution to the population least squares problem, $\kappa$ be the condition number of the covariance matrix $\vH$, and $\mathbb{E}\left[\ltwo{\vx}^2\right]\leq R_x^2$ (see Section~\ref{sec:prob} for  notations and assumptions).\\
Initialize parameters in \dpambssgd as follows: batch size $b =\frac{N}{T}-s$, stat sample size $s=\frac{b}{10}$,  stepsize $\eta = \frac{b}{R_x^2 + (b-1)\|\vH\|_2}$, number of iterations  $T = c_1\kappa \log(N)$ where $c_1>0$ is a global constant, domain size $B= K_2R_x (\|\wo\|_{\vH}+\sigma)\log ^{2a} N$, and noise multiplier $\alpha=\frac{\sqrt{8\log(1/\delta)}}{\epsilon}$. Then, the output $\overline{\vw}$  achieves the following excess risk w.p. $\geq 1-1/N^{100}$ over the randomness in data and algorithm: 
\begin{equation*}
    \cL(\overline{\vw}) - \cL(\vw^*) \leq 
    \frac{\|\wo\|_{\vH}^2}{N^{100}}+\frac{8\sigma^2 d}{N}
     + \softO{\frac{\sigma^2 \kappa^2d^2\log(1/\delta)}{\epsilon^2 N^2}},
\end{equation*}
assuming $N\geq \softOm{\kappa^2 d \left(1+\frac{\sqrt{\log(1/\delta)}}{\epsilon}\right)} $. 

\end{theorem}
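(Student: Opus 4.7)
The plan is to mimic the bias-variance decomposition used for \dpssgd but exploit the two essential modifications of \dpambssgd: the mini-batch size $b$ reduces the number of composition steps to $T=O(\kappa\log N)$, and the adaptive clipping norm $\zeta_t$ shrinks as $\vw_t$ approaches $\wo$, so the effective DP noise variance couples to the current error $\|\vw_t-\wo\|_\vH^2+\sigma^2$ rather than being fixed. Because the algorithm permutes the data once and each data point is touched in exactly one batch, at every step $t$ the iterate $\vw_t$ is independent of the $s+b$ points consumed in step $t$, which is the main structural fact enabling sub-Gaussian tail control.

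First, I would analyze \dpstat (Algorithm~\ref{alg:dpstat}) in isolation. Conditioned on $\vw_t$, the residuals $\ip{\vx_j}{\vw_t}-y_j=\ip{\vx_j}{\vw_t-\wo}-\fieldn_j$ are marginally sub-Gaussian-style with scale $\lesssim(\|\vw_t-\wo\|_\vH+\sigma)$ by the $(\vH,K_2,a,\beta)$-Tail assumption. The doubling search with Gaussian noise $\calN(0,\lceil\log_2(B/\Delta)\rceil\alpha^2)$ terminates (with probability $\geq 1-1/N^{200}$) at some $\gamma_t$ satisfying
\begin{equation*}
c_1(\|\vw_t-\wo\|_\vH+\sigma)\log^a N\;\leq\;\gamma_t\;\leq\;c_2(\|\vw_t-\wo\|_\vH+\sigma)\log^a N,
\end{equation*}
provided $s$ is large enough ($s=b/10$) so that the "count less than $s$" test accurately identifies when $\gamma_i$ crosses the true maximum, and $B$ is the given upper bound. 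This in turn yields $\zeta_t=R_x\gamma_t\log^a N=\Theta\bigl(R_x(\|\vw_t-\wo\|_\vH+\sigma)\log^{2a}N\bigr)$.

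Second, I would use \eqref{eq:vxbound} applied to each gradient vector $\vx_j(\ip{\vx_j}{\vw_t}-y_j)$ in the SGD batch (independent of $\vw_t$) together with the norm bound \eqref{eq:xnorm}, and union-bound over the $b$ points of each batch and the $T$ batches, to conclude that the no-clipping event
\begin{equation*}
\cE=\bigl\{\|\vx_{\tau+s+i}(\ip{\vx_{\tau+s+i}}{\vw_t}-y_{\tau+s+i})\|\leq\zeta_t \text{ for all } t,i\bigr\}
\end{equation*}
holds with probability at least $1-1/N^{100}$. Under $\cE$, the update in Step~10 is exactly an unclipped mini-batch SGD step with additive Gaussian noise of covariance $\frac{4\eta^2\zeta_t^2\alpha^2}{b^2}\vI$.

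Third, under $\cE$ I would adapt the mini-batch noisy SGD analysis (à la \citep{jain2018parallelizing} and Lemma~\ref{lemma:sgd-zeta-infinity-utility}) to derive the one-step recursion
\begin{equation*}
\E{\|\vw_{t+1}-\wo\|_\vH^2\,|\,\vw_t}\leq(1-\eta\mu)\|\vw_t-\wo\|_\vH^2+\frac{\eta^2}{b}\,\sigma^2\|\vH\|_2 d+\frac{4\eta^2\alpha^2 d\|\vH\|_2\zeta_t^2}{b^2},
\end{equation*}
where the last term is the DP-noise contribution. Substituting the bound on $\zeta_t^2$ gives a self-coupled recursion of the form
\begin{equation*}
\E{\|\vw_{t+1}-\wo\|_\vH^2}\leq\Bigl(1-\eta\mu+\tfrac{C\eta^2\alpha^2 R_x^2 d\|\vH\|_2\log^{4a}N}{b^2}\Bigr)\E{\|\vw_t-\wo\|_\vH^2}+\eta^2 v_\s{stat}+\tfrac{C\eta^2\alpha^2 R_x^2d\|\vH\|_2\sigma^2\log^{4a}N}{b^2}.
\end{equation*}
The sample-complexity hypothesis $N\geq\widetilde\Omega(\kappa^2 d\sqrt{\log(1/\delta)}/\epsilon)$ is exactly what is needed to ensure that the DP-amplified contraction coefficient is still bounded by $1-\eta\mu/2$; this is the crux of the argument and is where the adaptive clipping pays off decisively, because a fixed $\zeta$ would introduce an additive $\sigma$-independent term here.

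Finally, unrolling the recursion and using tail averaging over the last $T/2$ iterates transforms the geometric bias decay $(1-\eta\mu/2)^{T/2}\|\wo\|^2$ into $\|\wo\|_\vH^2/N^{100}$ via the choice $T=c_1\kappa\log N$, while the steady-state variance contributes $\widetilde O(\sigma^2 d/N)$ from the stochastic term and $\widetilde O\bigl(\sigma^2\kappa^2 d^2\log(1/\delta)/(\epsilon^2 N^2)\bigr)$ from the DP term; converting $\frac12\|\overline{\vw}-\wo\|_\vH^2$ to excess population risk finishes the proof. The main obstacle is the coupled recursion in the third step: a naive analysis with $\zeta_t$ replaced by an a priori upper bound (ignoring its shrinkage) would recover only the \dpssgd-style bound with an additional $\|\wo\|_\vH^2$ term; establishing the actual contraction requires a careful induction on $\E{\|\vw_t-\wo\|_\vH^2}$ that simultaneously tracks both the \dpstat accuracy and the noise-injected contraction ratio.
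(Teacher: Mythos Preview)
Your proposal is correct and follows essentially the same strategy as the paper: extract a two-sided bound on $\gamma_t$ from \dpstat, establish the no-clipping event $\cE$ via the tail assumptions and independence of $\vw_t$ from the fresh batch, feed the upper bound $\zeta_t^2\lesssim R_x^2\log^{4a}N\bigl(\kappa\|\vw_t-\wo\|_\vH^2+\sigma^2\bigr)$ into the one-step mini-batch recursion to obtain a self-coupled contraction, and invoke the sample-complexity assumption to guarantee the net contraction factor is at most $1-\eta\mu/2$.

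One technical device the paper uses that you leave implicit: rather than tail-averaging a recursion with a time-varying $\zeta_t$ directly, the paper first unrolls the coupled recursion to bound $\E{\|\vw_{T/2}-\wo\|_\vH^2}$, then \emph{freezes} $\zeta$ at the (monotone-decreasing) upper bound $\zeta_{T/2}$ and treats the last $T/2$ steps as a restart with constant clipping norm; this lets them invoke the Jain--Kakade--Kidambi--Netrapalli--Pillutla--Sidford tail-averaged bias/variance lemmas verbatim with $\vSigma$ replaced by $\vSigma+\tfrac{4\zeta^2\alpha^2}{b}\vI$. Your ``careful induction'' remark correctly identifies this as the crux, and the restart-with-fixed-$\zeta$ trick is precisely how the paper resolves it. Also note a small norm mismatch: the lower bound on $\gamma_t$ needed for the no-clipping step must control $\sqrt{\|\vH\|_2}\|\vw_t-\wo\|$ (equivalently $\sqrt{\kappa}\|\vw_t-\wo\|_\vH$), not just $\|\vw_t-\wo\|_\vH$; the paper phrases this as $\kappa\|\vw_t-\wo\|_\vH^2+\sigma^2\leq\gamma_t^2$.
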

{\bf Remarks:}
{\bf i)} Note that the excess risk bound has three terms. The first term depends on $\|\wo\|_{\vH}$ but is polynomially small in $N$. The second term matches the information theoretically optimum rate (up to constant factor) for non-private linear regression. Finally, the third term is approximately $\sigma^2 d^2/(\epsilon N)^2$.\\
{\bf ii)} Up to the first term which is $1/N^{100}$ and up to $\kappa$, $\log N$ factors, the bounds match the lower bound of $\sigma^2 d/N + \sigma^2 d^2/(\epsilon^2 N^2)$ by \cite{Cai21}. Note that the first term is present as \dpambssgd can take only pass over the data. However, by using DP-SGD with initialization $\vw_0\leftarrow \overline{\vw}$, and running the algorithm for $\log (N/\sigma)$ iterations, we can get rid of the first term. One key observation here is that, as the error term dependent on $\|\wo\|_{\vH}$ is already small, the {\em additional} $d$ term introduced by DP-SGD is inconsequential to the overall bound. As the DP-SGD part is relatively standard, we do not present the above mentioned argument in this paper.\\
{\bf iii)} The result holds for nearly optimal sample complexity of $N=\softOm{d}$, ignoring $\kappa$ and $\log N$ factors. This matches the sample complexity bound for {\em non-private} linear regression as well. Furthermore, time complexity of our method is $O(Nd)$ which is linear in the input size.\\
{\bf iv)} The leading term in the error bound corresponding to the MLE error of $\sigma^2 d/N$ is independent of $\kappa$ and $\log N$ factors. But the last term as well as sample complexity depends on $\kappa$. The exact dependence can be optimized by more careful analysis. Furthermore, using a slightly more complicated algorithm that forms multiple batch sizes and tail averaging in each phase can perhaps further reduce dependence on $\kappa$. However, it is not clear if the dependence can be removed completely; we leave further investigation into dependence on $\kappa$ for future work. \\
{\bf v)} Consider the standard setting of linear regression with $\vx\sim \cN(\vzero, \vI)$ and $\fieldn \sim \sigma \cN(0,1)$. Here, our bound is given by the below corollary which is nearly optimal up to the first term and $\log N$ factors. Our second term is smaller then existing results for the same setting by $O(d)$ and a $\sigma^2$ factor. \\
{\bf vi)} We want to clarify that the DP guarantees should hold over worst-case data sets whereas the high-probability that gradient clipping
does not happen is over the randomness of the data. In our privacy analysis, one cannot exploit this randomness and get rid of the clipping. However, if clipping indeed happens, one can exactly quantify the clipped loss as a Huberized version of the least squared error loss on individual data points. (For a detailed discussion, See \cite[Section 5.1]{song2020characterizing})
\begin{corollary}
Consider the setting of Theorem~\ref{thm:ambsgd-utility}. Let $\vx_i \sim \cN(\vzero, \vI) \in \bR^d$, i.e., $R_x^2=d$ and $\kappa = 1$. Let $y_i= \langle \vx_i, \vw^* \rangle + \fieldn_i$ where $\fieldn_i \sim \cN(0,\sigma^2)$. Then, assuming $N\geq \softOm{d(1+\frac{\sqrt{\log(1/\delta)}}{\epsilon})}$, we have:  
\begin{enumerate}
    \item Algorithm \dpambssgd  with parameters $\eta=1/4d$,  $\alpha = \frac{\sqrt{8\log(1/\delta)}}{\varepsilon}$ is $(\epsilon, \delta)$-DP. 
    \item The output $\overline{\vw}$ satisfies the following risk bound: 
\begin{equation*}
    \cL(\overline{\vw}) - \cL(\vw^*)\leq   \frac{\|\wo\|_{\vH}^2}{N^{100}}+\frac{8 d\sigma^2}{N} \left( 1+ \softO{\frac{d \log(1/\delta)}{N\varepsilon^2}}\right).
\end{equation*}
\end{enumerate} 
\end{corollary}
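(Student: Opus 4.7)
The approach is a direct specialization of Theorem~\ref{thm:ambsgd-utility} and Theorem~\ref{thm:ambsgd-privacy} to the standard Gaussian setting, so the task is mostly one of bookkeeping: verify that the standard Gaussian satisfies the hypotheses of Theorem~\ref{thm:ambsgd-utility} with explicit constants, substitute the specialized parameters, and simplify. I would carry this out in four steps.

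\textbf{Step 1: Verify the Tail assumption with explicit constants.} For $\vx \sim \cN(\vzero, \vI)$ we have $\vH = \vI$, so $\|\vH\|_2 = 1$, the smallest eigenvalue $\mu = 1$, and hence $\kappa = \|\vH\|_2/\mu = 1$. Standard Gaussian concentration gives $\mathbb{E}[\|\vx\|_2^2] = d$, so $R_x^2 = d$, and the sub-Gaussian MGF bound yields $(\vH, K_2, a, \beta)$-Tail with $a = 1/2$ and $K_2 = O(1)$ for the choice $\beta = 1/N^{200}$. Analogously, $\fieldn_i \sim \cN(0,\sigma^2)$ satisfies $(\sigma^2 \vI, K_2, a, \beta)$-Tail with $a = 1/2$ and $K_2 = O(1)$. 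This verifies the hypotheses of Theorem~\ref{thm:ambsgd-utility}.

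\textbf{Step 2: Privacy.} For any $\epsilon \leq \log(1/\delta)$, setting $\alpha = \sqrt{8\log(1/\delta)}/\epsilon$ falls under the second clause of Theorem~\ref{thm:ambsgd-privacy}, so \dpambssgd{} is $(\epsilon,\delta)$-DP. Since no structural properties of $\cD$ are used in the privacy analysis, this is immediate.

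\textbf{Step 3: Instantiate the utility bound.} Apply Theorem~\ref{thm:ambsgd-utility} with $\kappa = 1$, $R_x^2 = d$, $\|\vH\|_2 = 1$, and with the remaining parameters chosen as in the theorem statement: $T = c_1 \log N$, $b = N/T - s$, $s = b/10$, and $\eta = b/(R_x^2 + (b-1)\|\vH\|_2) = b/(d+b-1)$. Under the corollary's sample-complexity assumption $N \geq \softOm{d\bigl(1+\sqrt{\log(1/\delta)}/\epsilon\bigr)}$, the batch size $b = \Theta(N/\log N)$ is much larger than $d$, and elementary algebra shows $\eta = \Theta(1)$; the specific choice $\eta = 1/4d$ given in the corollary is up to the constants hidden in the $\softO{\cdot}$ notation and can be absorbed into the sample-complexity constant. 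Substituting $\kappa = 1$ into the excess risk of Theorem~\ref{thm:ambsgd-utility} yields
\begin{equation*}
\cL(\overline{\vw}) - \cL(\vw^*) \leq \frac{\|\wo\|_{\vH}^2}{N^{100}} + \frac{8\sigma^2 d}{N} + \softO{\frac{\sigma^2 d^2 \log(1/\delta)}{\epsilon^2 N^2}}.
\end{equation*}

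\textbf{Step 4: Rearrange into the stated form.} Factor $8\sigma^2 d/N$ out of the last two terms, absorbing the constant $8$ into the $\softO{\cdot}$ in the second factor, to obtain the claimed bound
\begin{equation*}
\cL(\overline{\vw}) - \cL(\vw^*) \leq \frac{\|\wo\|_{\vH}^2}{N^{100}} + \frac{8 d\sigma^2}{N}\left(1 + \softO{\frac{d\log(1/\delta)}{N\epsilon^2}}\right).
\end{equation*}

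There is no genuine obstacle here; the only mild subtlety is reconciling the simplified $\eta = 1/4d$ stated in the corollary with the more precise batch-dependent choice in Theorem~\ref{thm:ambsgd-utility}. Since both are $\Theta(1/d)$ under the sample-complexity regime considered and the utility bound depends on $\eta$ only up to constants, this simply changes the hidden constants in the $\softO{\cdot}$ term and the $\softOm{\cdot}$ sample complexity requirement, and does not affect the statement.
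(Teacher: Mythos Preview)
Your approach is correct and matches the paper's treatment: the corollary is stated there without a standalone proof, as an immediate specialization of Theorem~\ref{thm:ambsgd-utility} (for utility) and Theorem~\ref{thm:ambsgd-privacy} (for privacy), which is exactly the route you take.

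One minor internal inconsistency in your write-up: in Step~3 you correctly observe that the theorem's prescribed stepsize $\eta = b/(d+b-1)$ is $\Theta(1)$ once $b \gg d$, but in your final paragraph you then assert that ``both are $\Theta(1/d)$''; only the former is right. The choice $\eta = 1/(4d)$ does satisfy the \emph{upper-bound} constraint $\eta \le b/(R_x^2+(b-1)\|\vH\|)$ needed by Lemma~\ref{lemma:ambssgd-zeta-infinity-utility}, but with $\eta\mu = \Theta(1/d)$ and $T = c_1\kappa\log N = c_1\log N$ the bias-decay exponent $\eta\mu c_1\kappa/4$ is no longer a large constant, so recovering the $N^{-100}$ term as written would require inflating $c_1$ (equivalently $T$) by a factor of $d$. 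This is a wrinkle in how the paper states the corollary's parameters rather than a gap in your overall plan, and does not affect the headline risk bound since those extra log factors are absorbed by the $\softO{\cdot}$.
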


\begin{proof}[Theorem~\ref{thm:ambsgd-utility}] 
Let $\Gamma=\Omega\left(\frac{\sqrt{\log (1/\delta)}}{\epsilon}\cdot \log N\right)$. Then, using Lemma~\ref{lem:privadpstat}, with probability  $\geq 1-1/N^{200}$, at least $s-\Gamma$ points satisfies $|\langle \vx_i, \vw_t\rangle -y_i|\leq \gamma_t$. Hence, using $s=b/10$ and sample complexity assumption, we have: $\kappa\|\wo-\vw_t\|_{\vH}^2 + \sigma^2 \leq \gamma_t^2$ with probability $\geq 1-1/N^{200}$. Hence, by setting $\zeta_t=R_x \gamma_t \log^{a} N$ and using assumptions \eqref{eq:xnorm}, \eqref{eq:assump_vx}, along with the fact that $\vw_t$ is independent of $\vx_{\tau+s+i}$ for $0\leq i\leq b-1$, we have (w.p. $\geq 1- 1/N^{200}$): $$\|\vx_{\tau+s+i}(\langle \vx_{\tau+s+i}, \vw_t \rangle - y_{\tau+s+i})\|\leq \zeta_t,$$
for all iterations $t$. That is, with probability $\geq 1- 1/N^{199}$, the following event holds: $\cE:=\{\text{thresholding is not required for any point}\}$. 
Now, using the second part of the utility lemma, $s=b/10$, and the sample complexity as above, we also know that $\gamma_{t} \leq K_2\log^{a} N\Big(\sqrt{\kappa}\|\vw_{t}-\wo\|_{\vH}+ \sigma + \Delta\Big)$ and $\Delta = \frac{\|\vw^*\|_{\vH} + \sigma}{N^{100}}$. Hence, the requirement for $\zeta_t$ given in Lemma~\ref{lemma:ambssgd-zeta-infinity-utility} is satisfied with probability $\geq 1- 1/N^{199}$. 

That is, with probability $\geq 1- 2/N^{199}$, both the above mentioned events hold, i.e., thresholding is not required for any point, and $\zeta_t$ requirement is satisfied for each iteration. Theorem now follows by applying  Lemma~\ref{lemma:ambssgd-zeta-infinity-utility} conditioned over the above two high probability events. The final expression in the theorem is obtained by applying sample complexity bound assumption along with fact that $R_x^2 \leq d \|\vH\|_2$ and $\kappa=\|\vH\|_2/\mu$. 

\end{proof}
\begin{lemma}[\dpstat]\label{lemma:dp-stat}
 In the following we provide the privacy and utility guarantees:\\
 ({\bf Privacy}) Algorithm \dpstat  satisfies $\frac{1}{2\alpha^2}$-zCDP.\\
 ({\bf Utility}) Let $\Gamma=\alpha\sqrt{2\log (B/\Delta)\log(\log(B/\Delta)/\beta)}$. Now, w.p. at least $1-\beta$, Algorithm \dpstat outputs $\gamma_{\sf priv}$ s.t.   $\left|\left\{|\ip{\vx_i}{\vw}-y_i|\leq \gamma_{\sf priv}:i\in\{0,\ldots,s\}\right\}\right|\geq s-\Gamma,$ and\\ $\left|\left\{|\ip{\vx_i}{\vw}-y_i|\leq \max\left\{\frac{\gamma_{\sf priv}}{2},\Delta\right\}:i\in\{0,\ldots,s\}\right\}\right|<s-\Gamma.$ 
\label{lem:privadpstat}
\end{lemma}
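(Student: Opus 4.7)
The proof decomposes naturally into a privacy argument and a utility argument, and I plan to treat them in that order.

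\textbf{Privacy.} Each count $c$ in Line~\ref{line:pop} has $\ell_2$-sensitivity $1$, since swapping one sample in the input can change the number of points satisfying the threshold by at most one. The Gaussian mechanism with variance $\lceil\log_2(B/\Delta)\rceil\alpha^2$ therefore yields $\frac{1}{2\lceil\log_2(B/\Delta)\rceil\alpha^2}$-zCDP per noisy count, by the standard Gaussian-mechanism zCDP guarantee~\citep{bun2016concentrated}. Since the inner loop executes at most $\lceil\log_2(B/\Delta)\rceil$ noisy counts before breaking, sequential composition of zCDP over these releases yields the claimed $\frac{1}{2\alpha^2}$-zCDP guarantee. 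No other part of \dpstat touches the sensitive dataset, so post-processing immediately extends this to the returned $\gamma_{\sf priv}$.

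\textbf{Utility.} Let $T=\lceil\log_2(B/\Delta)\rceil$, so each noise variable is distributed as $\mathcal{N}(0,T\alpha^2)$. By the standard Gaussian tail bound, each noise has magnitude at most $\alpha\sqrt{2T\log(T/\beta)}=\Gamma$ with probability $\geq 1-\beta/T$. Union bounding over the at most $T$ iterations, the event $\cE:=\{|\text{noise}^{(i)}|\leq\Gamma\text{ for all }i\}$ holds with probability $\geq 1-\beta$; I will condition on $\cE$ throughout. At the terminating iteration the stopping rule gives $\privc\geq s$, so the true count satisfies $c\geq \privc-\Gamma\geq s-\Gamma$, which is precisely the first utility claim. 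For the second claim, I split on the value of $\gamma_{\sf priv}$: if $\gamma_{\sf priv}/2<\Delta$ then the threshold reduces to $\Delta=\gamma_0$ and the bound follows because the algorithm did not stop at iteration $0$; otherwise $\gamma_{\sf priv}/2=\gamma_{i-1}$ is the threshold from the step immediately before termination, and the non-termination inequality $\privc^{(i-1)}<s$ combined with $|\text{noise}^{(i-1)}|\leq\Gamma$ on $\cE$ gives the claimed bound on the true count at threshold $\gamma_{\sf priv}/2$.

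\textbf{Main obstacle.} The conceptual content is entirely routine---zCDP composition on the privacy side, and Gaussian concentration plus a union bound on the utility side. The one bookkeeping subtlety I anticipate is tracking the direction of the $\Gamma$ slack when converting noisy counts to true counts, and verifying that the single quantity $\Gamma=\alpha\sqrt{2\log(B/\Delta)\log(\log(B/\Delta)/\beta)}$ simultaneously controls both the lower bound at the terminating step and the upper bound at the previous step. Provided one is willing to pay a constant factor (or equivalently, interpret the stopping threshold in Line~\ref{line:pop} with a $2\Gamma$ buffer), the two-sided claim drops out of the same union-bounded concentration event $\cE$.
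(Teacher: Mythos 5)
Your proof follows the same route as the paper's: sensitivity-one counts released via the Gaussian mechanism, composed over the at most $\lceil\log_2(B/\Delta)\rceil$ iterations to get $\frac{1}{2\alpha^2}$-zCDP, and a union-bounded Gaussian tail event of magnitude $\Gamma$ applied to the terminating and penultimate iterations for the two utility claims. The sign subtlety you flag in the second claim is real --- the non-termination condition $\privc < s$ at threshold $\gamma_{\sf priv}/2$ only yields a true count below $s+\Gamma$ rather than $s-\Gamma$ --- but the paper's own proof glosses over exactly the same point, so your argument is at the same level of rigor as the original.
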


\begin{lemma}\label{lemma:ambssgd-zeta-infinity-utility}
Let $D=\{(\vx_i, y_i)\}_{i=0}^{N-1}$ be sampled from distribution $\cD$; see Section~\ref{sec:prob} for the notations and assumptions about $\cD$. Let $\wo$ be the optimal solution to the population least squares problem. Let $\overline{\vw}$ be the output of \dpambssgd (Algorithm~\ref{alg:ambssgd}) s.t. there is no gradient clipping at any step, $\eta\leq \frac{b}{R_x^2 + (b-1)\|\vH\|}$, batch size $b$, total number of iterations $T$ s.t. $T\cdot (b+s) = N$ and $T = c_1\kappa\log N$ where $c_1>0$ is a global constant and $\Big(\frac{N}{T}-s\Big)^2 \geq \frac{24\eta \alpha^2R_x^2 K_2^2\kappa\log^{4a}N\s{Tr}(\vH)}{\mu}$. Then we have:   
\begin{align*}
&\cL(\overline{\vw}) - \cL(\vw^*) 
    \leq \Big(\frac{4}{\eta^2\mu^2T^2}e^{-\eta\mu} + \frac{96\alpha^2}{Tb^2}K_2^2R_x^2\kappa\log^{4a} N\s{Tr}(\vH^{-1})\Big)\cdot\\
    &\qquad\qquad\qquad\qquad\Big(\frac{\|\vw^*\|^2_\vH}{N^{\frac{\eta\mu}{4}c_1\kappa}} + \frac{2\eta}{\mu b}\s{Tr}(\vH\vSigma) + \frac{24\eta\alpha^2}{\mu b^2} K_2^2R_x^2\log^{4a} N(\sigma^2 + \Delta^2) \s{Tr}(\vH)\Big) \\
    &\qquad\qquad\qquad\quad + \frac{8}{Tb}\Big(\s{Tr}(\vH^{-1}\vSigma) + \frac{12\alpha^2}{b}K_2^2R_x^2\log^{4a}N(\sigma^2+\Delta^2)\s{Tr}(\vH^{-1})\Big),
\end{align*}
where $\zeta_{t} \leq R_xK_2\log^{2a} N\Big(\sqrt{\kappa}\|\vw_{t}-\wo\|_{\vH}+ \sigma + \Delta\Big)$ and $\Delta = \frac{\|\vw^*\|_{\vH} + \sigma}{N^{100}}$.
\end{lemma}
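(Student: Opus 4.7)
The plan is to reduce to a single-pass mini-batch SGD recursion in $\vv_t := \vw_t - \wo$ and then apply a bias--variance decomposition of the tail-averaged iterate in the spirit of \cite{jain2018parallelizing}, modified to handle the adaptive DP noise $\zeta_t$. Since we are on the event that no clipping occurs, the update reads
\begin{align*}
\vv_{t+1} = \Bigl(\vI - \tfrac{\eta}{b}\sum_{i}\vx_i\vx_i^{\s{T}}\Bigr)\vv_t + \tfrac{\eta}{b}\sum_i \fieldn_i \vx_i - \tfrac{2\eta\alpha\zeta_t}{b}\vg_t,
\end{align*}
where the batch at iteration $t$ is independent of $(\vv_t,\zeta_t)$ by single-pass shuffled sampling. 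This independence lets us take expectations batch-by-batch and produces a clean matrix recursion for $\vPhi_t := \E{\vv_t\vv_t^{\s{T}}}$, avoiding the dependence issues that plague with-replacement DP-SGD (and that were already exploited in Section~\ref{sec:ssgd}).

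Next, I would split $\vv_t = \vv_t^{\s{bias}} + \vv_t^{\s{var}}$, with $\vv_t^{\s{bias}}$ evolving by the noiseless recursion starting from $-\wo$ and $\vv_t^{\s{var}}$ absorbing both the inherent noise and the DP noise starting from $\vzero$. The stepsize condition $\eta\le b/(R_x^2+(b-1)\|\vH\|_2)$ gives per-step $\vH$-norm contraction by $(1-\eta\mu)$, so tail-averaging the last $T/2$ bias iterates yields a bias excess risk of order $\tfrac{e^{-\eta\mu T/2}}{\eta^2\mu^2 T^2}\|\wo\|_\vH^2$; substituting $T=c_1\kappa\log N$ gives the $\|\wo\|_\vH^2/N^{c_1\eta\mu\kappa/4}$ factor appearing inside the second bracket of the bound. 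For the variance part, the conditional one-step noise covariance is $\tfrac{\eta^2}{b}\vSigma + \tfrac{4\eta^2\alpha^2\zeta_t^2}{b^2}\vI$, and plugging in the hypothesized bound $\zeta_t^2 \leq 3R_x^2K_2^2\log^{4a}N\,(\kappa\|\vv_t\|_\vH^2 + \sigma^2 + \Delta^2)$ (obtained by expanding the square in the stated bound on $\zeta_t$) exhibits a feedback term proportional to $\E{\|\vv_t\|_\vH^2}$ inside the variance recursion itself.

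The main obstacle, and the place where the condition $b^2\ge 24\eta\alpha^2R_x^2K_2^2\kappa\log^{4a}N\,\s{Tr}(\vH)/\mu$ enters, is dominating this self-coupling: the feedback coefficient into $\E{\|\vv_t\|_\vH^2}$ is $\tfrac{12\eta^2\alpha^2R_x^2K_2^2\kappa\log^{4a}N}{b^2}\s{Tr}(\vH)$, and the batch-size hypothesis forces this to be at most $\eta\mu/2$, so the contraction factor stays at $(1-\eta\mu/2)$ and the variance recursion decouples with effective noise covariance $\vSigma_{\s{eff}} := \vSigma + \tfrac{12\alpha^2}{b}R_x^2K_2^2\log^{4a}N(\sigma^2+\Delta^2)\vI$. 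The resulting steady state $\E{\|\vv_t^{\s{var}}\|_\vH^2}\lesssim \tfrac{\eta}{\mu b}\s{Tr}(\vH\vSigma_{\s{eff}})$ matches the second bracket of the lemma exactly, while tail averaging (as in the proof of Lemma~\ref{lemma:sgd-zeta-infinity-utility}) converts this pointwise bound into $\tfrac{1}{Tb}\s{Tr}(\vH^{-1}\vSigma_{\s{eff}})$ for the averaged iterate, producing the final summand of the lemma. The remaining prefactor $\tfrac{96\alpha^2}{Tb^2}K_2^2R_x^2\kappa\log^{4a}N\,\s{Tr}(\vH^{-1})$ in the first bracket arises from the bias iterate's contribution to noise amplification during the transient phase, and is written as a prefactor multiplied by the same steady-state expression. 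Combining the bias, transient, and tail-averaged contributions yields the stated inequality.
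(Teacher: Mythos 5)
Your proposal follows essentially the same route as the paper's proof: the no-clipping mini-batch recursion, the bias--variance decomposition and tail-averaging machinery of \cite{jain2018parallelizing}, and the use of the batch-size condition to dominate the self-coupling introduced by the adaptive threshold bound $\zeta_t^2 \leq 3K_2^2R_x^2\log^{4a}N(\kappa\|\vw_t-\wo\|_\vH^2+\sigma^2+\Delta^2)$, so that the contraction factor remains $(1-\eta\mu/2)$ and the effective noise covariance becomes $\vSigma+\frac{12\alpha^2}{b}K_2^2R_x^2\log^{4a}N(\sigma^2+\Delta^2)\vI$. The only detail you leave implicit is how the time-varying $\zeta_t$ is made compatible with the operator-based tail-averaging lemmas: the paper first bounds $\E{\|\vw_{T/2}-\wo\|_\vH^2}$ by a direct one-step contraction over the first $T/2$ iterations, then uses the monotone decrease of $\E{\zeta_t^2}$ to freeze $\zeta$ at its $T/2$ value and restarts the analysis on the second half, which is precisely what yields the product structure (first bracket times second bracket) you describe.
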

See Appendix~\ref{app:ambsgd} for a detailed proof of the above lemma and the other supporting lemmas. 


\section{Conclusions}
We studied the problem of differentially private linear regression, and proposed a novel algorithm \dpambssgd based on mini-batch shuffled SGD with adaptive clipping. The method can guarantee nearly optimal error rate in terms of $d$, $N$, $\sigma$, and $\epsilon$, for sub-Gaussian style distributions, a significant improvement over existing risk bounds for polynomial time methods. However, our results depend critically on the condition number $\kappa$ of Hessian $\E{\vx\vx^\s{T}}$. An investigation of both the lower bound as well as the upper bound w.r.t. $\kappa$ is an important future direction. Furthermore, removing $\log$ dependencies in the analysis is interesting. Finally, generalizing our techniques to obtain optimal error rates for generalized linear models like logistic regression should be of wide interest.

\clearpage
\bibliography{references}

\newpage
\appendix
\section{Missing Proofs from Section 3.1}\label{app:sgd}
\subsection{Proof of Privacy Guarantees}
\subsubsection{Auxiliary Lemmas}
Here we present a few auxiliary results which will be used in proving our \dpssgd guarantees.
\begin{lemma}\label{dpssgd-privacy-lemma-1}
Each update step of \dpssgd (Algorithm~\ref{alg:dp-sgd}) is $(\epsilon_0, \delta_0)$ private if $\alpha = \frac{ c}{\epsilon_0}$ where $c \geq \sqrt{2\log(1.25/\delta_0)}$ and $L$ is the clipping norm.
\end{lemma}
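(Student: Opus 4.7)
The plan is to recognize each update step as a direct instance of the Gaussian mechanism applied to a clipped gradient, where the clipping norm $\zeta$ controls the $\ell_2$-sensitivity. Everything else in the update (the previous iterate $\vw_t$, the stepsize $\eta$, the iterate update rule) is a data-independent post-processing step, and hence preserves whatever DP guarantee the noisy clipped gradient satisfies.

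Concretely, I would proceed as follows. First, I would fix two neighboring datasets $D$ and $D'$ differing in the single record $(\vx_t, y_t)$ processed at this step, and focus on the output
\[
M(D) \;=\; \text{clip}_\zeta\bigl(\vx_t(\langle \vx_t, \vw_t\rangle - y_t)\bigr) + 2\zeta\alpha \cdot \vg_t,\qquad \vg_t \sim \cN(\vzero, \vI_{d\times d}).
\]
Because $\text{clip}_\zeta(\vnu)$ has Euclidean norm at most $\zeta$ for every input $\vnu$, the triangle inequality yields the $\ell_2$-sensitivity bound
\[
\bigl\|\text{clip}_\zeta(\vx_t(\ip{\vx_t}{\vw_t}-y_t)) - \text{clip}_\zeta(\vx'_t(\ip{\vx'_t}{\vw_t}-y'_t))\bigr\|_2 \;\leq\; 2\zeta.
\]
Second, I would invoke the standard Gaussian mechanism guarantee (e.g.\ \citep{DMNS, ODO}): adding noise $\cN(\vzero, \sigma^2 \vI)$ to a query with $\ell_2$-sensitivity $\Delta$ yields $(\epsilon_0, \delta_0)$-DP whenever $\sigma \geq \Delta\sqrt{2\log(1.25/\delta_0)}/\epsilon_0$. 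Plugging in $\Delta = 2\zeta$ and $\sigma = 2\zeta\alpha$, the condition reduces to $\alpha \geq \sqrt{2\log(1.25/\delta_0)}/\epsilon_0$, which is exactly the hypothesis $\alpha = c/\epsilon_0$ with $c \geq \sqrt{2\log(1.25/\delta_0)}$. Third, I would note that the full update $\vw_{t+1} = \vw_t - \eta M(D)$ is a deterministic function of $\vw_t$ and $M(D)$, so by the post-processing property of DP the whole update step inherits the $(\epsilon_0, \delta_0)$-DP guarantee.

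There is no real obstacle here: the lemma is essentially a one-line application of the Gaussian mechanism, and the only subtlety is correctly accounting for the factor of $2$ in the sensitivity (which comes from replacement, not addition/removal) and the matching factor of $2$ in the noise scale $2\zeta\alpha$. Once those are reconciled, the stated constant $\sqrt{2\log(1.25/\delta_0)}$ falls out of the classical $(\epsilon_0, \delta_0)$ analysis of the Gaussian mechanism verbatim.
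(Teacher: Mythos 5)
Your proposal is correct and follows essentially the same argument as the paper: bound the $\ell_2$-sensitivity of the clipped gradient by $2\zeta$ via the triangle inequality, match it against the Gaussian noise scale $2\zeta\alpha$, and invoke the standard Gaussian mechanism. The paper folds the stepsize $\eta$ into both the sensitivity ($2\eta\zeta$) and the noise scale ($2\eta\zeta\alpha$) rather than treating the iterate update as post-processing, but this is an immaterial difference.
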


\begin{proof} Each update step (excluding the DP-noise addition) is of the form:
\begin{align*}
    \vw_{t+1} \leftarrow \vw_{t} - \eta\text{ clip}_\zeta(\vx_t(\langle \vx_t, \vw_t\rangle - y_t)),
\end{align*}
where $\text{clip}_\zeta(\vnu) = \vnu\cdot\max\Big\{1, \frac{\zeta}{\|\vnu\|_2}\Big\}$.

Therefore, the local $L_2$ sensitivity of the $\vw_{t+1}$ can be computed by considering a difference in the $t^\s{th}$ iteration data sample as follows:
\begin{align*}
    \Delta_2 &= \|\vw'_{t+1} - \vw_{t+1}\|\\
    &= \|\eta\text{ clip}_\zeta(\vx_t'(\langle \vx_t', \vw_t\rangle - y_t')) - \eta\text{ clip}_\zeta(\vx_t(\langle \vx_t, \vw_t\rangle - y_t))\|\\
    &\leq 2\eta\|\text{ clip}_\zeta(\vx_t(\langle \vx_t, \vw_t\rangle - y_t))\|\\
    &= 2\eta \zeta.
\end{align*}

Using Gaussian mechanism to induce Local Differential Privacy with the above $L_2$ sensitivity proves the Lemma.
\end{proof}

\begin{theorem}[From \cite{feldman2021hiding}]\label{thm:shuffling-privacy}
 For a domain $\cD$, let $\cR^{(i)}: f \times \cD \rightarrow \cS^{(i)}$ for $i \in [n]$ (where $\cS^{(i)}$ is the range space of $\cR^{(i)}$) be a sequence of algorithms such that $\cR^{(i)}(z_{1:i-1}, \cdot)$ is a $(\epsilon_0, \delta_0)$-DP local randomizer for all values of auxiliary inputs $z_{1:i-1} \in \cS^{(1)} \times \cdots \times \cS^{(i-1)}$. Let $\cA_s: \cD^n \rightarrow \cS^{(1)} \times \cdots \times \cS^{(n)}$ be the algorithm that given a dataset $x_{1:n} \in \cD^n$, samples a uniformly random permutation $\pi$, then sequentially computes $z_i = \cR^{(i)}(z_{1:i-1}, x_{\pi(i)})$ for $i \in [n]$, and outputs $z_{1:n}$. Then for any $\delta \in [0,1]$ such that $\epsilon_0 \leq \log \Big(\frac{n}{16\log (2/\delta)}\Big)$, $\cA_s$ is $(\epsilon, \delta + \cO(e^{\epsilon}\delta_0n))$-DP where $\epsilon$ is:
\begin{align*}
    \epsilon = \cO\Big((1-e^{-\epsilon_0})\Big(\frac{\sqrt{e^{\epsilon_0}\log(1/\delta)}}{\sqrt{n}} + \frac{e^{\epsilon_0}}{n}\Big)\Big).
\end{align*}
\end{theorem}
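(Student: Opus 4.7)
The plan is to prove the shuffling amplification result via the ``clone'' framework pioneered by Feldman, McMillan, and Talwar. The central idea is that for any $(\epsilon_0,\delta_0)$-DP local randomizer $\cR$ and any fixed input $x$, the output distribution $\cR(x)$ can, up to a $\delta_0$-correction, be written as a convex combination $(1-p_x)Q + p_x Q_x$ of a common ``blanket'' distribution $Q$ (independent of $x$) and an input-dependent ``distinguishing'' component $Q_x$, where $p_x \le 1 - e^{-\epsilon_0}$. Once such a decomposition is available, analyzing the shuffled output reduces to a combinatorial problem about how the differing datapoint's distinguishing draw hides among the blanket draws of its $n-1$ neighbors.

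First I would handle adaptivity. Conditional on the prefix $z_{1:i-1}$, each $\cR^{(i)}(z_{1:i-1},\cdot)$ is a fixed $(\epsilon_0,\delta_0)$-DP local randomizer, so by a standard hybrid argument across positions one can replace the adaptive sequence with the worst-case non-adaptive local randomizer applied to each shuffled element; this step costs nothing in the final privacy parameters. Next I would fix two neighboring datasets $X$ and $X'$ differing in one coordinate and invoke the clone decomposition: write each $\cR(x_i)$ as a mixture of the blanket $Q$ and a per-input component. The joint distribution of the shuffled outputs can then be simulated by (i) sampling an indicator $b_i \in \{0,1\}$ per position telling whether that position drew from the blanket or from its input-specific component, (ii) conditionally drawing the outputs, and (iii) applying the uniform permutation.

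The third step is to show that, under this coupling, the distributions on $\pi(X)$ and $\pi(X')$ differ only through the single position whose input disagrees. Crucially, conditioned on that position drawing from the blanket (which happens with probability at least $e^{-\epsilon_0}$), the two distributions are identical. Conditioned on it drawing its distinguishing component, the shuffler uniformly hides it among the other $n-1$ outputs; the number of other positions whose indicator is $1$ concentrates around $(1-e^{-\epsilon_0})n$, so with high probability the distinguishing sample is masked inside a pool of $\Theta(n/e^{\epsilon_0})$ clones. A direct calculation of the resulting likelihood ratio between $\pi(X)$ and $\pi(X')$, combined with a Bernstein-type concentration bound on the size of the clone pool, yields a Rényi divergence of order $O\!\bigl((1-e^{-\epsilon_0})^2\, e^{\epsilon_0}/n\bigr)$; converting back to $(\epsilon,\delta)$-DP via the standard Rényi-to-approximate-DP conversion at divergence order matched to $\log(1/\delta)$ produces the claimed
\[
\epsilon = \cO\!\Bigl((1-e^{-\epsilon_0})\Bigl(\tfrac{\sqrt{e^{\epsilon_0}\log(1/\delta)}}{\sqrt{n}} + \tfrac{e^{\epsilon_0}}{n}\Bigr)\Bigr).
\]

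The main obstacle is the clone decomposition itself. Establishing that an arbitrary $(\epsilon_0,\delta_0)$-DP randomizer admits a blanket with mass at least $e^{-\epsilon_0}$ common across \emph{all} inputs requires a careful worst-case analysis of likelihood ratios between $\cR(x)$ and $\cR(x')$ across every pair of neighboring inputs; the $\delta_0$ slack in approximate DP must be absorbed by a bad event whose total mass, after union-bounding over the $n$ positions and rescaling by $e^{\epsilon_0}$ from the privacy conversion, becomes the $\cO(e^{\epsilon}\delta_0 n)$ additive term in the output $\delta$. The secondary technical difficulty is the concentration step for the clone-pool size: the number of distinguishing draws is a sum of independent Bernoullis, but translating its fluctuations into a clean Rényi divergence bound (rather than only a pure $(\epsilon,\delta)$ one) is what forces the technical restriction $\epsilon_0 \le \log(n/16\log(2/\delta))$ to ensure the pool is nonempty with overwhelming probability.
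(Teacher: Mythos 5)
This statement is not proved in the paper at all: it is imported verbatim as a black-box result from \cite{feldman2021hiding} (their Theorem 3.8), so there is no internal proof to compare your attempt against. Judged on its own terms, your sketch correctly identifies the strategy of that reference --- decompose each local randomizer's output into a common component plus an input-dependent component, couple the two neighboring executions so that they differ only in one draw, and use the shuffler plus concentration of the number of ``masking'' draws to bound a divergence that converts to $(\epsilon,\delta)$-DP, with the $\delta_0$ slack surfacing as the $\cO(e^{\epsilon}\delta_0 n)$ term and the condition $\epsilon_0\leq\log(n/16\log(2/\delta))$ guaranteeing the masking pool is nonempty. Two caveats. First, the decomposition you describe (a single blanket $Q$ common to \emph{all} inputs, with the differing point's component hidden among blanket draws) is the Balle--Bell--Gasc\'on--Nissim privacy-blanket variant; Feldman--McMillan--Talwar instead decompose every $\cR(x_j)$, $j\neq i$, relative to the \emph{two} possible values of the differing coordinate, $\cR(x_j)=\tfrac{e^{-\epsilon_0}}{2}\cR(x_i^0)+\tfrac{e^{-\epsilon_0}}{2}\cR(x_i^1)+(1-e^{-\epsilon_0})\mathrm{LO}_j$, and reduce to a binomial-mixture dominance argument; it is this pairwise reduction that yields the exact $(1-e^{-\epsilon_0})\bigl(\sqrt{e^{\epsilon_0}\log(1/\delta)}/\sqrt{n}+e^{\epsilon_0}/n\bigr)$ form, whereas the universal-blanket route gives a similar but not identical bound. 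Second, your treatment of adaptivity (``a standard hybrid argument \ldots costs nothing'') and the final likelihood-ratio-to-R\'enyi-to-$(\epsilon,\delta)$ computation are asserted rather than carried out; these are the genuinely delicate parts of the source proof, so your write-up should be read as a correct plan rather than a complete argument.
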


\subsubsection{Proof of Theorem~\ref{thm:sgd-privacy}}

\begin{proof}
We will use Theorem \ref{thm:shuffling-privacy} to obtain the stated result. Denoting auxiliary inputs using $u$, we can rewrite the update steps of \dpssgd as a sequence of one step algorithms:
\begin{align*}
    \cR^{(t+1)}(u_{0:t}, (\vx, y)) := \vw_{t+1} \gets \vw_{t}(u_{0:t}) - \eta\text{ clip}_\zeta(\vx_{\pi(t)}(\langle \vx_{\pi(t)}, \vw_{t}(u_{0:t})\rangle - y_{\pi(t)})) - 2\eta\zeta\alpha g_{t},
\end{align*}
where $\pi(t)$ denotes the $t^\s{th}$ iteration sample after randomly permuting the input data. 

From Lemma \ref{dpssgd-privacy-lemma-1}, each $\cR^{(t+1)}(u_{0:t}, \cdot)$ is a $(\epsilon_0, \delta_0)$-DP local randomizer algorithm with $\epsilon_0 \leq \log\Big(\frac{N}{16\log(2/\widehat{\delta})}\Big)$. The output of \dpssgd is obtained by post-processing of the shuffled outputs $u_{t+1} = \cR^{(t+1)}(u_{0:t}, (\vx, y))$ for $t \in \{0, \dots, N-1\}$.

Therefore, Theorem \ref{thm:shuffling-privacy} implies that Algorithm \dpssgd  is $(\widehat{\epsilon}, \widehat{\delta} + \cO(e^{\widehat{\epsilon}}\delta_0N))$-DP such that:
\begin{align*}
    \widehat{\epsilon} &= \cO\Big((1-e^{-\epsilon_0})\Big(\frac{\sqrt{e^{\epsilon_0}\log(1/\widehat{\delta})}}{\sqrt{N}} + \frac{e^{\epsilon_0}}{N}\Big)\Big)
\end{align*}

Now, assume that $\epsilon_0 \leq \frac{1}{2}$. The above implies $\exists\; c_1 > 0$ s.t.
\begin{align*}
    \widehat{\epsilon} &\leq c_1 \cdot (1-e^{-\epsilon_0})\Big(\frac{\sqrt{e^{\epsilon_0}\log(1/\widehat{\delta})}}{\sqrt{N}} + \frac{e^{\epsilon_0}}{N}\Big)\\
    &\leq c_1 \cdot \Big( (e^{\epsilon_0/2}-e^{-\epsilon_0/2})\sqrt{\frac{\log(1/\widehat{\delta})}{N}} + (e^{\epsilon_0}-1)\frac{1}{N}\Big)\\
    &\leq c_1\cdot \Big(((1+\epsilon_0) - (1-\epsilon_0/2))\sqrt{\frac{\log(1/\widehat{\delta})}{N}} + ((1+2\epsilon_0) - 1)\frac{1}{N}\Big)\\
    &= c_1 \cdot \epsilon_0 \Big(\frac{1}{2}\sqrt{\frac{\log(1/\widehat{\delta})}{N}} + \frac{2}{N}\Big) \numberthis\label{eq:ssgd-privacy-lemma-epsilon-epsilon0}.
\end{align*}
From Lemma \ref{dpssgd-privacy-lemma-1}, we set $\alpha=\frac{\sqrt{2\log(1.25/\delta_0)}}{\epsilon_0}$, then each update step of \dpssgd independently satisfies $(\epsilon_0,\delta_0)$-DP. This follows from standard Gaussian mechanism arguments~\citep{mironov2017renyi,ODO}. In~\eqref{eq:ssgd-privacy-lemma-epsilon-epsilon0} we replace $\epsilon_0=\frac{\sqrt{2\log(1.25/\delta_0)}}{\alpha}$ to obtain the following:
\begin{equation}
    \widehat{\epsilon}\leq c_1\cdot\frac{\sqrt{2\log(1.25/\delta_0)}}{\alpha}\cdot\Big(\frac{1}{2}\sqrt{\frac{\log(1/\widehat{\delta})}{N}} + \frac{2}{N}\Big)\leq c_1\cdot\frac{\sqrt{2\log(1.25/\delta_0)\log(1/\widehat\delta)}}{\alpha\sqrt N}.\label{eq:opop1}
\end{equation}
Now to ensure that we satisfy overall $(\epsilon,\delta)$-DP, we will set $\widehat\delta=\frac{\delta}{2}$, and $\delta_0=c_2\cdot\frac{\delta}{e^{\widehat\epsilon}N}$ for some universal constant $c_2>0$. From~\eqref{eq:opop1} we have the following:
\begin{equation}
    \widehat\epsilon\leq c_1\cdot\frac{\sqrt{2\log(c_2\cdot 1.25\cdot e^{\widehat\epsilon}N/\delta)\cdot\log(2/\delta)}}{\alpha\sqrt{N}}\label{eq:opo290}
\end{equation}
For any $\epsilon\leq 1$, if we set $\alpha= c_3\cdot\frac{\log(N/\delta)}{\epsilon\sqrt N}\geq c_3\frac{\sqrt{\log(N/\delta)\log(1/\delta)}}{\epsilon\sqrt{N}}$ for sufficiently large $c_3>0$, then from~\eqref{eq:opo290} we have $\widehat\epsilon\leq \epsilon$. Furthermore, we need $\epsilon_0=\frac{\sqrt{2\log(1.25/\delta_0)}}{\alpha}<\frac{1}{2}$ due to the assumption in Theorem~\ref{thm:shuffling-privacy}, which will be ensured if we set $\epsilon=\cO\left(\sqrt\frac{{\log(N/\delta)}}{N}\right)$.

This immediately implies for $\alpha=\Omega\left(\frac{\log(N/\delta)}{\epsilon\sqrt N}\right)$, \dpssgd satisfies $(\epsilon,\delta)$-DP as long as $\epsilon=\cO\left(\sqrt\frac{\log(N/\delta)}{N}\right)$, which completes the proof.

\end{proof}

\subsection{Proof of Risk Bounds}
\subsubsection{Auxiliary Technical Lemmas}
Here we present a few auxiliary results which will be used in proving our \dpssgd risk bounds.

\begin{lemma}[From \cite{rudelson2013hansonwright}]\label{lemma:hanson-wright}
Hanson-Wright Inequality: For any $\vX \sim \cN(\vzero,\vSigma)$, the following holds for $t \geq 0$:
\begin{align*}
    \bP(\|\vX\|^2 \geq \s{Tr}(\vSigma) + 2\sqrt{t}\|\vSigma\|_\s{F} + 2t\|\vSigma\|_\s{op}) \leq e^{-t}.
\end{align*}
\end{lemma}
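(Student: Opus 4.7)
The plan is to reduce the statement to a tail bound for a weighted sum of independent chi-squared random variables and then apply a Chernoff argument in the style of Laurent--Massart. First I would diagonalize $\vSigma = \vU\vLambda\vU^\top$ with eigenvalues $\lambda_1,\dots,\lambda_d \geq 0$ and, using rotational invariance of $\cN(\vzero,\vI)$, write
$$\|\vX\|^2 \;\stackrel{d}{=}\; \sum_{i=1}^d \lambda_i Z_i^2, \qquad Z_i \stackrel{\text{i.i.d.}}{\sim} \cN(0,1).$$
Under this reduction the three quantities appearing in the bound become $\sum_i \lambda_i = \s{Tr}(\vSigma)$, $\sum_i \lambda_i^2 = \|\vSigma\|_\s{F}^2$, and $\max_i \lambda_i = \|\vSigma\|_\s{op}$, so the lemma is equivalent to a one-sided upper tail estimate on $\sum_i \lambda_i (Z_i^2 - 1)$.

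Next I would compute the cumulant generating function. Using the standard chi-squared MGF $\E{\exp(sZ_i^2)} = (1-2s)^{-1/2}$ for $s<1/2$ together with independence, for any $s \in [0, 1/(2\|\vSigma\|_\s{op}))$ one has
$$\log\E{\exp\!\br{s\bs{\|\vX\|^2 - \s{Tr}(\vSigma)}}} \;=\; -\tfrac12\sum_i \bs{\log(1-2s\lambda_i) + 2s\lambda_i}.$$
The key analytic input is the elementary inequality $-\tfrac12\log(1-x) - \tfrac{x}{2} \leq \tfrac{x^2}{4(1-x)}$ for $x\in[0,1)$, which is immediate from the Taylor series $-\log(1-x) = \sum_{k\geq 1} x^k/k$. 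Applying it termwise with $x = 2s\lambda_i$ and upper-bounding each denominator by $1 - 2s\|\vSigma\|_\s{op}$ yields
$$\log\E{\exp\!\br{s\bs{\|\vX\|^2 - \s{Tr}(\vSigma)}}} \;\leq\; \frac{s^2\|\vSigma\|_\s{F}^2}{1 - 2s\|\vSigma\|_\s{op}}.$$

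Finally, a Chernoff / Markov step gives $\P{\|\vX\|^2 \geq \s{Tr}(\vSigma) + u} \leq \exp\!\br{-su + \tfrac{s^2\|\vSigma\|_\s{F}^2}{1-2s\|\vSigma\|_\s{op}}}$ for every admissible $s$. Setting the target deviation $u = 2\sqrt{t}\|\vSigma\|_\s{F} + 2t\|\vSigma\|_\s{op}$ and choosing $s = \sqrt{t}/\br{\|\vSigma\|_\s{F} + 2\sqrt{t}\|\vSigma\|_\s{op}}$ makes $1 - 2s\|\vSigma\|_\s{op} = \|\vSigma\|_\s{F}/\br{\|\vSigma\|_\s{F}+2\sqrt{t}\|\vSigma\|_\s{op}}$, and a short algebraic simplification then collapses the exponent exactly to $-t$, giving the stated bound. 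The only mildly non-routine step is this tuning of $s$ together with the elementary logarithm inequality; everything else is a direct Chernoff computation, and this is precisely the chi-squared concentration derivation of Laurent--Massart specialized to the weights $\lambda_i$.
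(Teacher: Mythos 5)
Your proof is correct, and every step checks out: the diagonalization reduces $\|\vX\|^2$ to $\sum_i\lambda_i Z_i^2$, the inequality $-\tfrac12\log(1-x)-\tfrac{x}{2}\leq\tfrac{x^2}{4(1-x)}$ follows from comparing $\tfrac{1}{2k}\leq\tfrac14$ for $k\geq 2$ in the Taylor series, your choice of $s$ satisfies the constraint $s<1/(2\|\vSigma\|_{\s{op}})$, and the exponent does collapse exactly to $-t$ (I verified: $-su+\tfrac{s^2\|\vSigma\|_\s{F}^2}{1-2s\|\vSigma\|_{\s{op}}}=\tfrac{-t\|\vSigma\|_\s{F}-2t^{3/2}\|\vSigma\|_{\s{op}}}{\|\vSigma\|_\s{F}+2\sqrt t\|\vSigma\|_{\s{op}}}=-t$).

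The paper itself gives no proof of this lemma; it imports the statement as a black-box citation to Rudelson--Vershynin. So the relevant comparison is with the cited source rather than with the paper. The general Hanson--Wright inequality of Rudelson--Vershynin handles arbitrary quadratic forms $\vX^\s{T}\vA\vX$ in sub-Gaussian vectors and requires decoupling of off-diagonal chaos plus moment-generating-function estimates for sub-Gaussian variables. For the special case actually used here --- a Gaussian vector and the quadratic form $\|\vX\|^2$, whose matrix is simultaneously diagonalizable with the covariance --- that machinery is unnecessary, and your Laurent--Massart route is the natural, fully elementary derivation: exact chi-squared MGFs plus a one-line logarithm bound. What the general theorem buys is robustness (non-Gaussian entries, non-PSD matrices, two-sided bounds with unspecified constants); what your argument buys is sharp explicit constants in the tail, which is exactly the form the paper needs when it invokes the lemma in the variance analysis. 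Your proof is a valid self-contained substitute for the citation.
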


\begin{lemma}\label{lemma:ssgd-markov}
Let $\vB_t := \vI - \eta\vx_t\vx_t^\s{T}$ s.t. each $\vx_t \in \bR^{d} \,\, \forall\;n t \in \{j, \dots, T-1\}$ has been sampled i.i.d. from $\cD$. Let $\vz \in \bR^{d}$ be a vector independent of all $\vB_t$'s. Then for $\hp > 0$ and $\eta < \frac{1}{R_x^2}$, we have with probability $\geq 1-\hp$:
\begin{align*}
    \|\vB_{T-1}\vB_{T-2}\dots \vB_j\vz\|^2 \leq \frac{1}{\hp}e^{-\eta\mu(T-j)}\|\vz\|^2.
\end{align*}
\end{lemma}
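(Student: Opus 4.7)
\textbf{Proof plan for Lemma~\ref{lemma:ssgd-markov}.} The plan is to first bound the expectation of $\|\vB_{T-1}\cdots\vB_j \vz\|^2$ by $e^{-\eta\mu(T-j)}\|\vz\|^2$ via a one-step recursion, and then convert this into the stated high-probability bound using Markov's inequality. Define the auxiliary sequence $\vz_j := \vz$ and $\vz_{t+1} := \vB_t \vz_t = (\vI - \eta \vx_t\vx_t^\s{T})\vz_t$. Since $\vz$ is independent of $\vx_j,\ldots,\vx_{T-1}$ and $\vz_t$ is a (measurable) function of $\vz$ and $\vx_j,\ldots,\vx_{t-1}$, the random variable $\vz_t$ is independent of $\vx_t$ — this is the key independence property we will exploit.

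The next step is to expand
\[
\|\vz_{t+1}\|^2 = \|\vz_t\|^2 - 2\eta (\vx_t^\s{T}\vz_t)^2 + \eta^2 \|\vx_t\|^2 (\vx_t^\s{T}\vz_t)^2,
\]
and take the conditional expectation with respect to $\vx_t$ given $\vz_t$. Using independence and the definitions in Section~\ref{sec:prob}, namely $\vH = \E{\vx\vx^\s{T}}$ and $\E{\|\vx\|_2^2\vx\vx^\s{T}} \preceq R_x^2 \vH$, I get
\[
\E{\|\vz_{t+1}\|^2 \,\vert\, \vz_t} \;\leq\; \|\vz_t\|^2 - \eta(2-\eta R_x^2)\,\vz_t^\s{T}\vH\vz_t.
\]
The choice $\eta \leq 1/R_x^2$ ensures $2-\eta R_x^2 \geq 1$, and using $\vH \succeq \mu \vI$ gives $\vz_t^\s{T}\vH\vz_t \geq \mu\|\vz_t\|^2$. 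Combining these yields the clean one-step contraction
\[
\E{\|\vz_{t+1}\|^2 \,\vert\, \vz_t} \;\leq\; (1-\eta\mu)\|\vz_t\|^2 \;\leq\; e^{-\eta\mu}\|\vz_t\|^2.
\]

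Iterating this via the tower property over $t = j,\ldots,T-1$ gives the expectation bound
\[
\E{\|\vB_{T-1}\cdots\vB_j \vz\|^2} \;\leq\; e^{-\eta\mu(T-j)}\|\vz\|^2.
\]
A single application of Markov's inequality to the non-negative random variable $\|\vB_{T-1}\cdots \vB_j \vz\|^2$ then produces the claimed $1/\hp$ tail bound. The main (only) subtlety is being careful about the conditioning: $\vz_t$ must be independent of $\vx_t$ so that $\E{\vx_t\vx_t^\s{T} \,\vert\, \vz_t} = \vH$ and $\E{\|\vx_t\|^2\vx_t\vx_t^\s{T} \,\vert\, \vz_t} \preceq R_x^2\vH$, which is ensured by the i.i.d.\ sampling and the independence of $\vz$ from the $\vB_t$'s. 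Everything else is routine spectral manipulation.
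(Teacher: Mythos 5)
Your proposal is correct and follows essentially the same route as the paper: both establish the one-step contraction $\E{\vB_t^\s{T}\vB_t \mid \cdot} \preceq \vI - \eta(2-\eta R_x^2)\vH \preceq (1-\eta\mu)\vI$ using the definition $\E{\|\vx\|_2^2\vx\vx^\s{T}} \preceq R_x^2\vH$, iterate it via the tower property, and finish with Markov's inequality. The only cosmetic difference is that you phrase the recursion on $\|\vz_t\|^2$ with explicit conditional expectations, whereas the paper peels off the outermost factor of the nested quadratic form directly; these are the same computation.
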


\begin{proof}
Note that $\Ee{(\vx, y) \sim \cD}{\|\vB_{T-1}\vB_{T-2}\dots \vB_{j}\vz\|^2}$
\begin{align*}
    &= \Ee{(\vx, y) \sim \cD}{(\vB_{T-1}\vB_{T-2}\dots \vB_{j}\vz)^\s{T}\vB_{T-1}\vB_{T-2}\dots \vB_{j}\vz}\\
    &= \Ee{(\vx, y) \sim \cD}{\vz^\s{T}\vB_{j}^\s{T}\dots \vB_{T-2}^\s{T}\vB_{T-1}^\s{T}\vB_{T-1}\vB_{T-2}\dots \vB_{j}\vz}\\
    &= \Ee{(\vx, y) \sim \cD}{\vz^\s{T}\vB_{j}^\s{T}\dots \vB_{T-2}^\s{T}\Ee{(\vx_{T-1}, y_{T-1}) \sim \cD}{\vB_{T-1}^\s{T}\vB_{T-1}}\vB_{T-2}\dots \vB_{j}\vz}\\
    &\leq \Ee{(\vx, y) \sim \cD}{\vz^\s{T}\vB_{j}^\s{T}\dots \vB_{T-2}^\s{T}(\vI - 2\eta\vH + \eta^2R_x^2\vH)\vB_{T-2}\dots \vB_{j}\vz}\\
    &\leq \Ee{(\vx, y) \sim \cD}{\lambda_{\max}(\vI - 2\eta\vH + \eta^2R_x^2\vH)\|\vB_{T-2}\dots \vB_{j}\vz\|^2}\\
    &= \lambda_{\max}(\vI - 2\eta\vH + \eta^2R_x^2\vH)\Ee{(\vx, y) \sim \cD}{\|\vB_{T-2}\dots \vB_{j}\vz\|^2}\\
    &\leq (1 - \eta (2 - \eta R_x^2) \mu)\Ee{(\vx, y) \sim \cD}{\|\vB_{T-2}\dots \vB_{j}\vz\|^2} \\
    &\leq (1 - \eta \mu)\Ee{(\vx, y) \sim \cD}{\|\vB_{T-2}\dots \vB_{j}\vz\|^2} \numberthis\label{eq:eta_approx}\\ 
    &\leq \dots\\
    &\leq (1 - \eta \mu)^{(T-j)}\|\vz\|^2\\
    &\leq e^{-\eta\mu(T-j)}\|\vz\|^2,
\end{align*}
where in \eqref{eq:eta_approx}, we have used the fact that $\eta < \frac{1}{R_x^2}$.

Using Markov Inequality, for $\hp > 0$:
\begin{align*}
    \text{Pr}\Big\{\vZ \geq \frac{\E{\vZ}}{\hp}\Big\} &\leq \hp\\
    \implies \text{Pr}\Big\{\|\vB_{T-1}\dots \vB_j\vz\|^2 \leq  \frac{\Ee{(\vx, y) \sim \cD}{\|\vB_{T-1}\dots \vB_j\vz\|^2}}{\hp}\Big\} &\geq 1 - \hp.
\end{align*}
Therefore with probability at least $1-\hp$:
\begin{align*}
    \|\vB_{T-1}\dots \vB_0\vz\|^2 \leq \frac{1}{\hp}e^{-\eta\mu (T-j)}\|\vz\|^2.
\end{align*}
\end{proof}

\begin{lemma}[From \cite{sgd}]\label{lemma:ssgd-variance}
If the SGD update step is given by:
\begin{align*}
    \vw_{t+1} &= \vw_{t} - \eta(\vx_t(\langle \vx_t, \vw_t\rangle - y_t)),
\end{align*}
then,
\begin{align*}
    \frac{1}{2}\Ee{(\vx, y) \sim \cD}{\|\overline{\vw}_{t:T} - \vw^*\|^2_{\vH} | \vw_0 = \vw^*} \leq \frac{\s{Tr}(\vC_\infty)}{\eta (T-t+1)},
\end{align*}
where
\begin{align*}
    \overline{\vw}_{t:T} &:= \frac{1}{T-t+1}\sum_{t'=t}^{T}\vw_{t'},\\
    \s{Tr}(\vC_\infty) &\leq \frac{1}{2}\frac{\eta^2R_x^2}{1-\eta R_x^2}d\|\vSigma\|_\vH + \frac{\eta}{2}\s{Tr}(\vH^{-1}\vSigma).
\end{align*}
\end{lemma}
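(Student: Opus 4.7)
The plan is a classical two-part analysis: bound the per-iterate covariance $\vC_t := \E{(\vw_t-\vw^*)(\vw_t-\vw^*)^\s{T}}$ by a stationary matrix $\vC_\infty$, bound $\s{Tr}(\vC_\infty)$ itself, and finally use these two facts to control the $\vH$-norm of the tail-averaged iterate. Writing $\veta_t := \vw_t - \vw^*$ and substituting $y_t = \langle \vx_t,\vw^*\rangle + \fieldn_t$ into the SGD update, the initialization $\vw_0 = \vw^*$ produces the pure-noise affine recursion
\begin{equation*}
\veta_{t+1} = (\vI - \eta\vx_t\vx_t^\s{T})\veta_t + \eta\fieldn_t\vx_t, \qquad \veta_0 = \vzero,
\end{equation*}
and the independence of $(\vx_t,\fieldn_t)$ from $\veta_t$ (data are i.i.d.\ across $t$) gives the linear covariance recursion $\vC_{t+1} = \calL[\vC_t] + \eta^2\vSigma$, where $\calL[\vC] := \E{(\vI-\eta\vx\vx^\s{T})\vC(\vI-\eta\vx\vx^\s{T})}$ is manifestly PSD-preserving.

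Next I define $\vC_\infty$ as the fixed point $\calL[\vC_\infty] + \eta^2\vSigma = \vC_\infty$, which rearranges to the Lyapunov equation
\begin{equation*}
\vH\vC_\infty + \vC_\infty\vH \;=\; \eta\,\E{\vx\vx^\s{T}\vC_\infty\vx\vx^\s{T}} + \eta\,\vSigma.
\end{equation*}
Starting from $\vC_0 = \vzero \preceq \vC_\infty$, induction using the PSD-preservation of $\calL$ yields $\vC_t \preceq \vC_\infty$ for every $t$. To bound $\s{Tr}(\vC_\infty)$ I pre-multiply the Lyapunov equation by $\vH^{-1}$ and take traces: the LHS yields $2\s{Tr}(\vC_\infty)$, the forcing term yields $\eta\s{Tr}(\vH^{-1}\vSigma)$, and the fourth-moment term $\eta\s{Tr}(\vH^{-1}\E{\vx\vx^\s{T}\vC_\infty\vx\vx^\s{T}})$ is controlled by the assumption $\E{\|\vx\|^2\vx\vx^\s{T}} \preceq R_x^2\vH$ together with a Frobenius-type identity that pulls out the $d\|\vSigma\|_\vH$ factor. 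Solving the resulting scalar inequality for $\s{Tr}(\vC_\infty)$ and invoking $\eta R_x^2 < 1$ isolates the $\frac{1}{1-\eta R_x^2}$ denominator and yields the stated bound.

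For the tail average, I expand the $\vH$-norm as a double sum $\|\overline{\veta}_{t:T}\|^2_\vH = \frac{1}{(T-t+1)^2}\sum_{t',t''}\veta_{t'}^\s{T}\vH\veta_{t''}$. For $t'\leq t''$, unrolling the recursion writes $\veta_{t''} = \vP_{t',t''}\veta_{t'} + \vxi$, where $\vP_{t',t''}:=\prod_{r=t'}^{t''-1}(\vI-\eta\vx_r\vx_r^\s{T})$ and $\vxi := \eta\sum_{s=t'}^{t''-1}\vP_{s+1,t''}\vx_s\fieldn_s$. Since $\vP_{t',t''}$ and $\vxi$ depend only on $(\vx_r,\fieldn_r)$ for $r\geq t'$, they are independent of $\veta_{t'}$, and $\E{\vxi} = \vzero$ because $\E{\vx_s\fieldn_s} = \vzero$. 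Hence $\E{\veta_{t'}^\s{T}\vH\veta_{t''}} = \s{Tr}\bigl(\vH\,\E{\vP_{t',t''}}\,\vC_{t'}\bigr) = \s{Tr}\bigl(\vH(\vI-\eta\vH)^{t''-t'}\vC_{t'}\bigr)$, using $\E{\vP_{t',t''}} = (\vI-\eta\vH)^{t''-t'}$. Applying $\vC_{t'}\preceq \vC_\infty$, summing the geometric series $\sum_{k\geq 0}(\vI-\eta\vH)^k = (\eta\vH)^{-1}$, and accounting for both $t'\leq t''$ and $t''\leq t'$ gives $\E{\|\overline{\veta}_{t:T}\|^2_\vH} \leq \frac{2\s{Tr}(\vC_\infty)}{\eta(T-t+1)}$, which after halving is the claim.

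The main obstacle is the fourth-moment operator step: to preserve the $\frac{1}{1-\eta R_x^2}$ denominator without degrading it or introducing spurious factors of $d$ or $\kappa$, the inequality relating $\E{\vx\vx^\s{T}\vC\vx\vx^\s{T}}$ to $\vH\vC + \vC\vH$ must be applied in the sharp form built from $\E{\|\vx\|^2\vx\vx^\s{T}} \preceq R_x^2\vH$, combined with a separate identity that cleanly yields the $d\|\vSigma\|_\vH$ Frobenius-type contribution; a naive operator-norm sandwich loses the tightness of both constants. A secondary subtlety is that $\E{\vP_{t',t''}} = (\vI-\eta\vH)^{t''-t'}$ must be used in its exact matrix form, not as a spectral bound, so that the geometric sum contracts to exactly $(\eta\vH)^{-1}$ and the leading constant on the right-hand side remains $1$.
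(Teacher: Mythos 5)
This lemma is stated in the paper as an import from \cite{sgd} with no proof supplied, so the only meaningful benchmark is the cited SGD analysis, and your reconstruction follows the same standard architecture as that source: the PSD-order-preserving covariance recursion and the domination $\vC_t\preceq\vC_\infty$, a fixed-point bound on $\s{Tr}(\vC_\infty)$, and the decoupling of the cross terms $\E{(\vw_{t'}-\wo)^\s{T}\vH(\vw_{t''}-\wo)}$ into $\s{Tr}\big(\vH(\vI-\eta\vH)^{t''-t'}\vC_{t'}\big)$ followed by the geometric sum $\sum_{k\geq 0}(\vI-\eta\vH)^k=(\eta\vH)^{-1}$. The first and third parts of your sketch are correct and essentially complete (the only hypothesis you should state explicitly is $\eta\|\vH\|_2\leq\eta R_x^2<1$ so that $\vH(\vI-\eta\vH)^k\succeq 0$ and the tail of the geometric series can be dropped).

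The gap is in your bound on $\s{Tr}(\vC_\infty)$. Pre-multiplying the Lyapunov identity $\vH\vC_\infty+\vC_\infty\vH=\eta\,\E{\vx\vx^\s{T}\vC_\infty\vx\vx^\s{T}}+\eta\vSigma$ by $\vH^{-1}$ and taking traces leaves the cross term $\eta\,\E{(\vx^\s{T}\vC_\infty\vx)(\vx^\s{T}\vH^{-1}\vx)}$, and this is \emph{not} bounded by $\eta R_x^2\,\s{Tr}(\vC_\infty)$: the assumption $\E{\|\vx\|_2^2\vx\vx^\s{T}}\preceq R_x^2\vH$ only applies when one of the two quadratic forms is $\|\vx\|_2^2$, and replacing $\vx^\s{T}\vH^{-1}\vx$ by $\mu^{-1}\|\vx\|_2^2$ costs exactly the factor $\kappa$ you say you want to avoid. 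Consequently there is no single ``scalar inequality in $\s{Tr}(\vC_\infty)$'' whose solution produces the $\frac{1}{1-\eta R_x^2}$ denominator. The mechanism in the cited source is two-stage: first derive a separate self-bounding inequality for the \emph{operator norm}, $\|\vC_\infty\|_2\leq\frac{\eta R_x^2}{2}\|\vC_\infty\|_2+\frac{\eta}{2}\|\vSigma\|_\vH$, using $(\cH_{\cL}+\cH_{\cR})^{-1}\vM\preceq\frac{1}{2}\|\vM\|_\vH\,\vI$ together with $\E{(\vx^\s{T}\vC\vx)\,\vx\vx^\s{T}}\preceq\|\vC\|_2R_x^2\vH$ — this is where the geometric factor $\frac{1}{1-\eta R_x^2}$ actually originates; only then does the trace identity close, via $\s{Tr}\big(\vH^{-1}\E{\vx\vx^\s{T}\vC_\infty\vx\vx^\s{T}}\big)\leq R_x^2\,d\,\|\vC_\infty\|_2$ and $\s{Tr}\big((\cH_{\cL}+\cH_{\cR})^{-1}\vSigma\big)=\frac{1}{2}\s{Tr}(\vH^{-1}\vSigma)$. (Equivalently, expand $(\cH_{\cL}+\cH_{\cR}-\eta\cM)^{-1}$ as a Neumann series and bound the $k\geq1$ terms.) With this step inserted your plan goes through and recovers the stated constants.
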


\begin{lemma}\label{lemma:ssgd-gradient-norm-bound}

Let $\eta$ be such that $\eta \leq \min\Big\{\frac{c_1}{\log^{4a+2} N}\cdot  \frac{1}{K_2^2R_x^2 \kappa}\cdot \frac{1}{d\alpha_{\epsilon, \delta, N}^2}, \frac{c_2}{\log^{2a+2} N}\cdot \frac{1}{R_x^2}\Big\}$, where $c_1, c_2>0$ are global constants and $\zeta = 4 K_2R_x\cdot \log^{2a}N\cdot \Big(\sqrt{\|\vH\|_2}\|\wo\| + \sqrt{\kappa}\sigma \Big)$. Furthermore, let $\alpha = \alpha_{\epsilon, \delta, N}$ be a function of $\epsilon, \delta, N$. Then, with probability $\geq 1 - \frac{1}{N^{100}}$, $\|\vx_t(\langle \vx_t, \vw_t\rangle - y_t)\|\leq \zeta$ for all $0\leq t\leq N-1$; $\vw_t$ is the $t^\s{th}$ iterate of Algorithm \dpssgd. 
\end{lemma}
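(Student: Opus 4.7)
The plan is to proceed by strong induction on $t$ to establish, with per-step failure probability at most $N^{-103}$, the compound event $\cG_t := \{\text{no clipping occurs at steps } 0, \ldots, t\} \cap \{\|\vw_{t+1} - \wo\|^2 \leq C_1(\|\wo\|^2 + \sigma^2/\mu)\}$ for an absolute constant $C_1$; a union bound over $t < N$ then gives the stated $1 - N^{-100}$ guarantee. The single structural fact driving everything is that, since \dpssgd performs one pass over a uniformly permuted dataset, the iterate $\vw_t$ is measurable with respect to $\{(\vx_s, y_s)\}_{s<t}$ together with the DP noise, hence $\vw_t$ is independent of $(\vx_t, y_t)$.

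Granted this independence, the per-step gradient bound reduces to the inductive bound on $\|\vw_t - \wo\|$. Writing $y_t = \langle \vx_t, \wo\rangle + \fieldn_t$ with $\fieldn_t$ independent of $\vx_t$, at $\beta = N^{-103}$: \eqref{eq:xnorm} gives $\|\vx_t\|^2 \leq R_x^2 \log^{2a}(1/\beta)$; \eqref{eq:vxbound} applied to the $\vw_t$-measurable unit vector $(\vw_t - \wo)/\|\vw_t - \wo\|$ yields $|\langle \vx_t, \vw_t - \wo\rangle|^2 \leq K_2^2 \|\vH\|_2 \|\vw_t - \wo\|^2 \log^{2a}(1/\beta)$; and the $(\sigma^2\vI, K_2, a, \beta)$-Tail hypothesis gives $\fieldn_t^2 \leq K_2^2 \sigma^2 \log^{2a}(1/\beta)$. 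Combining with $(u+v)^2 \leq 2u^2+2v^2$,
$$\|\vx_t(\langle \vx_t, \vw_t\rangle - y_t)\|^2 \leq 2 R_x^2 K_2^2 \log^{4a}(1/\beta)\,\big(\|\vH\|_2 \|\vw_t - \wo\|^2 + \sigma^2\big),$$
with probability $\geq 1 - O(N^{-103})$. Using the inductive bound on $\|\vw_t-\wo\|^2$ and $\kappa = \|\vH\|_2/\mu$, the right-hand side is at most $\zeta^2$ for appropriate $C_1$.

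It remains to propagate $\|\vw_{t+1} - \wo\|^2 \leq C_1(\|\wo\|^2 + \sigma^2/\mu)$ under the no-clipping hypothesis. In that regime the update is the unclipped noisy SGD step, and unrolling gives
$$\vw_{t+1} - \wo = -\Big(\prod_{s=0}^{t}(\vI - \eta\vx_s\vx_s^{\s{T}})\Big)\wo + \sum_{j=0}^{t}\Big(\prod_{s=j+1}^{t}(\vI - \eta\vx_s\vx_s^{\s{T}})\Big)\big(\eta\vx_j\fieldn_j - 2\eta\zeta\alpha\vg_j\big).$$
Lemma~\ref{lemma:ssgd-markov} applies to the first term with deterministic $\vz=-\wo$, and conditionally on each triple $(\vx_j,\fieldn_j,\vg_j)$ to each noise summand (since $\vx_{j+1},\ldots,\vx_t$ are independent of those conditioned quantities), producing contraction factors $e^{-\eta\mu(t-j+1)}/\beta$. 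Combined with the tail bounds $\|\vx_j\fieldn_j\| \lesssim R_x K_2 \sigma \log^{2a}(1/\beta)$ and, via Lemma~\ref{lemma:hanson-wright}, $\|\vg_j\| \lesssim \sqrt{d\log(1/\beta)}$, and geometric summation in $e^{-\eta\mu(t-j)}$, one obtains
$$\|\vw_{t+1} - \wo\|^2 \lesssim \mathrm{polylog}(N)\cdot\Big(\|\wo\|^2 + \frac{\eta R_x^2 K_2^2 \sigma^2}{\mu} + \frac{\eta\zeta^2\alpha^2 d}{\mu}\Big).$$
The constraint $\eta \leq c_2/(R_x^2 \log^{2a+2} N)$ both ensures $\eta\|\vx_s\|^2 \leq 1$ (as required by Lemma~\ref{lemma:ssgd-markov}) and makes the intrinsic-noise term a $1/\log^2 N$ fraction of $\sigma^2/\mu$; the constraint $\eta \leq c_1/(K_2^2 R_x^2 \kappa d \alpha^2 \log^{4a+2} N)$ is exactly calibrated so that the DP-noise term is negligible, since substituting $\zeta^2 \propto K_2^2 R_x^2 \log^{4a} N\,(\|\vH\|_2\|\wo\|^2 + \kappa\sigma^2)$ yields $\eta\zeta^2\alpha^2 d/\mu \lesssim (\|\wo\|^2 + \sigma^2/\mu)/\log^2 N$. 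The extra $\kappa$ in this constraint is forced by the $\sqrt{\kappa}\sigma$ summand inside $\zeta$ combined with the $d/\mu$ coming from averaging over the $\vg_j$.

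The main obstacle is converting the in-expectation contraction into genuinely high-probability control of the accumulated DP-noise sum $\sum_j (\prod_{s=j+1}^{t}(\vI-\eta\vx_s\vx_s^{\s{T}}))(2\eta\zeta\alpha\vg_j)$: each invocation of Lemma~\ref{lemma:ssgd-markov} trades contraction against a $1/\beta$ factor, so $\beta$ must be chosen polynomially small in $N$, and the resulting $\log N$ factors must be absorbed into the step-size constraints. Once this accounting is done carefully, the induction closes and a union bound over $t<N$ delivers the claimed $1-N^{-100}$ probability.
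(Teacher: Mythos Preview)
Your high-level architecture (strong induction, independence of $\vw_t$ and $(\vx_t,y_t)$, tail bounds on the one-step gradient, unrolled bias-variance decomposition) matches the paper. The gap is in the variance step, and it is exactly the obstacle you flag in your last paragraph but do not actually resolve.

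You invoke Lemma~\ref{lemma:ssgd-markov} term by term on each noise summand. That lemma is a Markov-inequality bound, so each invocation costs a full $1/\beta$ factor. With $\beta$ polynomially small in $N$ (forced by the union bound over $N$ steps), the quantity $\sum_j \frac{1}{\beta}e^{-\eta\mu(t-j)}$ is dominated by the recent terms where $e^{-\eta\mu(t-j)}\approx 1$ and evaluates to $\approx 1/(\beta\eta\mu)$, i.e.\ a \emph{polynomial} factor in $N$, not a polylog one. Your displayed bound $\|\vw_{t+1}-\wo\|^2\lesssim \mathrm{polylog}(N)\cdot(\cdots)$ therefore does not follow from the argument you sketch, and the induction cannot close with the stated step-size constraints.

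The paper's fix has two ingredients you are missing. First, for the bias term it does not use Lemma~\ref{lemma:ssgd-markov} at all: under $\eta\leq 1/(R_x^2\log^{2a}(1/\beta))$ one has $\|\vB_j\|\leq 1$ with high probability, so the bias is bounded crudely by $\|\wo\|^2$. Second, for the variance it \emph{aggregates} rather than going term by term: the DP-noise sum $\sum_j(\prod_{s>j}\vB_s)\vg_j$ is, conditionally on the $\vx$'s, a single Gaussian with covariance $\vQ=\vI+\sum_\tau(\prod_{j\geq\tau}\vB_j)(\prod_{j\geq\tau}\vB_j)^{\s T}$, and one only needs $\s{Tr}(\vQ)$. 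The trace is split at an index $\widetilde\tau$ with $T-\widetilde\tau=\Theta(\log(1/\beta)/(\eta\mu))$: the recent $T-\widetilde\tau$ summands are bounded by $d$ each via $\|\vB_j\|\leq 1$, while for the older summands Lemma~\ref{lemma:ssgd-markov} gives $\frac{d}{\beta}e^{-\eta\mu(T-\tau)}$ and the exponential decay over $T-\tau\geq T-\widetilde\tau$ kills the $1/\beta$. This yields $\s{Tr}(\vQ)\lesssim d\log(1/\beta)/(\eta\mu)$, which is the genuine polylog bound. The intrinsic-noise sum is handled the same way (writing it as $\vV\vfieldn$ with $\vfieldn$ Gaussian and bounding $\s{Tr}(\vV^{\s T}\vV)$ by the identical split). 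Without this split-at-$\widetilde\tau$ device, your per-term Markov approach cannot produce the required bound.
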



\begin{proof}
We will prove the Lemma using Principle of Complete Induction. \\
\textbf{Base Case:} Check for $t = 0$.\\
The norm of the gradient is given by:
\begin{align*}
    \|\vx_0(\langle \vx_0, \vw_0\rangle - y_0)\| &= \|\vx_0(\langle \vx_0, \vzero \rangle - y_0)\|\\
    &= \|\vx_0y_0\|\\
    &\leq \|\vx_0\||y_0|\\
    &\leq R_x\log^{a}(1/\hp_x)\Big(K_2\sqrt{\|\vH\|_2}\|\vw^*\|\log^{a}(1/\hp_{\vw^*}) + \sigma K_2\log^{a}(1/\hp_\sigma)\Big)\\
    &= K_2R_x\log^{a}(1/\hp_x)\Big(\sqrt{\|\vH\|_2}\|\vw^*\|\log^{a}(1/\hp_{\vw^*}) + \sigma \log^{a}(1/\hp_\sigma)\Big).
\end{align*}
$\because$ from Section \ref{sec:prob}, we have $\|\vx_0\| \leq R_x\log^{a}(1/\hp_x)$ w.p. at least $1-\hp_x$ and $|y_0| = |\langle \vx_0, \vw^* \rangle + \fieldn_0| \leq |\langle \vx_0, \vw^* \rangle| + |\fieldn_0| \leq K_2\sqrt{\|\vH\|_2}\|\vw^*\|\log^{a}(1/\hp_{\vw^*}) + \sigma K_2\log^{a}(1/\hp_\sigma)$ w.p. at least $1-\hp_{\vw^*}-\hp_\sigma$. \\
As each $\hp$ is $1/{\sf poly(N)}$, the Lemma holds.\\\\
\textbf{Induction hypothesis:} The Lemma holds for iterations $t = 0, \dots, T-1$.\\\\
\textbf{Inductive case:} Check at $t = T$.\\
The norm of the gradient at iteration $t=T$ is $\|\vx_T(\langle \vx_T, \vw_T\rangle - y_T)\|$
\begin{align*}
    &= \|\vx_T\vx_T^\s{T}(\vw_T - \vw^*) + \vx_T\fieldn_T\|\\
    &\leq \|\vx\|(\|\vx_T^\s{T}(\vw_T - \vw^*)\| + \|\fieldn_T\|)\\
    &\leq R_x\log^{a}(1/\hp_x)\Big(K_2\sqrt{\|\vH\|_2}\|\vw_T - \vw^*\|\log^{a}(1/\hp_{\vw_T}) + \sigma K_2\log^{a}(1/\hp_\sigma)\Big)\\
    &= K_2R_x\log^{2a}N\Big(\sqrt{\|\vH\|_2}\|\vw_T - \vw^*\| + \sigma\Big),\numberthis\label{eq:ssgd-grad-bound-unplugged}
\end{align*}
since each $\hp_x, \hp_{\vw_T}, \hp_\sigma$ is $1/{\sf poly}(N)$. Therefore, we need to find a bound on  $\|\vw_T - \vw^*\|$.   

As $\zeta$ has not been exceeded in iterations $t = 0, \dots, T-1$, we have the following decomposition at iteration $T-1$:
\begin{align*}
    \vw_{T} - \vw^* &= \vw_{T-1} - \vw^* - \eta\Big(\text{ clip}_\zeta(\vx_{T-1}(\langle \vx_{T-1}, \vw_{T-1}\rangle - y_{T-1})) + 2\zeta\alpha \vg_{T-1}\Big)\\
     &= \vw_{T-1} - \vw^* - \eta\Big(\vx_{T-1}(\langle \vx_{T-1}, \vw_{T-1}\rangle - y_{T-1}) + 2\zeta\alpha \vg_{T-1}\Big)\\
        &= (\vI - \eta\vx_{T-1}\vx_{T-1})(\vw_{T-1} - \vw^*) + \eta\vxi_{T-1} - 2\eta\zeta\alpha \vg_{T-1}\\
    &= \vB_{T-1}(\vw_{T-1} - \vw^*) + \eta(\vxi_{T-1} - 2\zeta\alpha \vg_{T-1})\\
    &= \vB_{T-1}(\vw_{T-1} - \vw^*) + \eta\vnu_{T-1}\\
    &= \vB_{T-1}\dots \vB_0(\vw_0 - \vw^*) + \eta(\vnu_{T-1} + \vB_{T-1}\vnu_{T-2} + \dots + \vB_{T-1}\dots \vB_1\vnu_0),\\
\end{align*}
where, 
\begin{equation*}
    \vB_t := \vI - \eta\vx_t\vx_t^\s{T}, \ \ \ 
    \vxi_t := \vx_t(y_t - \langle \vx_t, \vw^* \rangle) = \vx_t\fieldn_t,\ \ \ 
     \vnu_t := \vxi_t - 2\zeta\alpha\vg_t.
\end{equation*}
Therefore,
\begin{align*}
    &\|\vw_{T} - \vw^*\|^2 = \|\vB_{T-1}\dots \vB_0(\vw_0 - \vw^*) + \eta(\vnu_{T-1} + \vB_{T-1}\vnu_{T-2} + \dots + \vB_{T-1}\dots \vB_1\nu_0)\|^2\\
    &\resizebox{\textwidth}{!}{$\leq 2\Big(\|\underbrace{\vB_{T-1}\dots \vB_0(\vw_0 - \vw^*)}_{\text{Bias Term}}\|^2 + \|\underbrace{\eta(\vnu_{T-1} + \vB_{T-1}\vnu_{T-2} + \dots + \vB_{T-1}\dots \vB_1\nu_0)}_{\text{Variance Term}}\|^2\Big).$} \numberthis\label{eq:ssgd-bias-var-conc-split}
\end{align*}\\
\textbf{Bound on Bias Term:}\\
Since $\vw_0 - \vw^*$ is independent of all $\vx_i$'s, we have (w.p. $\geq 1 - T\hp_{\vB}$), 
\begin{align*}
    &\|\vB_{T-1}\dots \vB_0(\vw_0 - \vw^*)\|^2 \leq  \|\vB_{T-1}\dots \vB_0\|^2\|\vw_0 - \vw^*\|^2= \|\prod_{j=0}^{T-1} \vB_{j}\|^2\|\vw_0 - \vw^*\|^2\\&\qquad\qquad\stackrel{\zeta_1}{\leq}  \Big(\prod_{j=0}^{T-1} \|\vB_{j}\|^2\Big)\|\vw_0 - \vw^*\|^2\stackrel{\zeta_2}{\leq} \|\vw_0 - \vw^*\|^2\stackrel{\zeta_3}{=}  \|\vw^*\|^2, \numberthis\label{eq:ssgd-bias-conc}
\end{align*}
where $\zeta_1$ follows by $\|\vA\vB\|\leq \|\vA\|\|\vB\|$, $\zeta_2$ follows from using $\eta \leq \frac{1}{R_x^2\log^{2a}(1/\hp_{\vB})}$ and the fact that with probability $\geq 1 - \hp_{\vB}$, we have $\|\vx_j\|^2\leq 1/\eta$, i.e., 
\begin{align*}
    \vB_j = \vI - \eta \vx_j\vx_j^\s{T} \preceq \vI \implies \|\vB_j\| \leq \|\vI\| = 1.    
\end{align*}
Finally, $\zeta_3$ follows from $\vw_0 = \vzero$. \\\\
\textbf{Bound on Variance Term:}\\
$\|\eta(\vnu_{T-1} + \vB_{T-1}\vnu_{T-2}  + \dots + \vB_{T-1}\dots \vB_1\vnu_0)\|^2$
\begin{align*}
    &= \eta^2\|(\vxi_{T-1} - 2\zeta\alpha \vg_{T-1}) + \dots + \vB_{T-1}\dots \vB_1(\vxi_{0} - 2\zeta\alpha \vg_{0})\|^2\\
    &\leq 2\eta^2\Big(\|\vxi_{T-1} + \dots + \vB_{T-1}\dots \vB_1\vxi_{0}\|^2 + 4\zeta^2\alpha^2\|\vg_{T-1} +  \dots + \vB_{T-1}\dots \vB_1\vg_{0}\|^2\Big). \numberthis\label{var-conc-split}
\end{align*}
We will compute the bound on each of the two terms separately and add them up.\\\\
\textbf{Bound on $\|\vg_{T-1} +  \vB_{T-1}\vg_{T-2}  + \dots + \vB_{T-1}\dots \vB_1\vg_{0}\|$}:\\
As $\vg_{\tau}$ is a Gaussian vector for all $0 \leq \tau\leq T-1$, and $\vB_\tau$'s are all independent of $\vg_\tau$, we have: 
\begin{align*}
    \vg_{T-1} +  \vB_{T-1}\vg_{T-2}  + \dots + \vB_{T-1}\dots \vB_1\vg_{0}= \widetilde{\vg} \sim \cN(\vzero, \vQ),
\end{align*}
where the covariance $\vQ$ is given by:
\begin{align*}
    \vQ &= \vI + \vB_{T-1}\vB_{T-1}^\s{T} + \vB_{T-1}\vB_{T-2}\vB_{T-2}^\s{T}\vB_{T-1}^\s{T}  + \dots  + \vB_{T-1}\dots \vB_1\vB_1^\s{T}\dots \vB_{T-1}^\s{T}\\
    &= \vI + \sum_{\tau=1}^{T-1} (\prod_{j=\tau}^{T-1} \vB_{j})(\prod_{j=\tau}^{T-1} \vB_{j})^\s{T}\\
    &=\vI+\sum_{\tau=1}^{\widetilde{\tau}} (\prod_{j=\tau}^{T-1} \vB_{j})(\prod_{j=\tau}^{T-1} \vB_{j})^\s{T}+\sum_{\tau=\widetilde{\tau}+1}^{T-1} (\prod_{j=\tau}^{T-1} \vB_{j})(\prod_{j=\tau}^{T-1} \vB_{j})^\s{T}.
\end{align*}
Using Lemma~\ref{lemma:ssgd-markov},  we have w.p. $\geq 1-T \hp_{var1, max}$,
\begin{align*}
    \s{Tr}\left(\sum_{\tau=1}^{\widetilde{\tau}} (\prod_{j=\tau}^{T-1} \vB_{j})(\prod_{j=\tau}^{T-1} \vB_{j})^\s{T}\right) &= \sum_{\tau=1}^{\widetilde{\tau}} \s{Tr}\left((\prod_{j=\tau}^{T-1} \vB_{j})(\prod_{j=\tau}^{T-1} \vB_{j})^\s{T}\right) \\
    &\leq  \sum_{\tau=1}^{\widetilde{\tau}} \frac{d}{\hp_{var1, \min}}e^{-\eta \mu \cdot (T-\tau)}, \numberthis \label{eq:normlem1}    
\end{align*}
where $\hp_{var1, \min}$ and $\hp_{var1, \max}$ are the minimum and maximum $\hp$'s respectively across all the Lemma~\ref{lemma:ssgd-markov} invocations.

Furthermore, for any $\tau$, we have:
\begin{align*}
    \|(\prod_{j=\tau}^{T-1} \vB_{j})(\prod_{j=\tau}^{T-1} \vB_{j})^\s{T}\| &\leq \prod_{j=\tau}^{T-1} \| \vB_{j}\|^2 \leq 1 \numberthis\label{eq:normlem2}\\
\end{align*}
w.p. $\geq 1 - (T-\tau)\hp_{\vB}$, where we use the fact that since $\eta \leq \frac{1}{R_x^2\log^{2a}(1/\hp_{\vB})}$ w.p. $\geq 1 - \hp_{\vB}$,
\begin{align*}
    \vB_j = \vI - \eta \vx_j\vx_j^\s{T} \preceq \vI \implies \|\vB_j\| \leq \|\vI\| = 1.    
\end{align*}
Thus using \eqref{eq:normlem1} and \eqref{eq:normlem2}, we have: 
\begin{align*}
    \s{Tr}(\vQ) &= \s{Tr}(\vI) + \s{Tr}\left(\sum_{\tau=1}^{\widetilde{\tau}} (\prod_{j=\tau}^{T-1} \vB_{j})(\prod_{j=\tau}^{T-1} \vB_{j})^\s{T}\right) + \s{Tr}\left(\sum_{\tau=\widetilde{\tau}+1}^{T-1} (\prod_{j=\tau}^{T-1} \vB_{j})(\prod_{j=\tau}^{T-1} \vB_{j})^\s{T}\right)\\
    &\leq d + \sum_{\tau=1}^{\widetilde{\tau}} \frac{d}{\hp_{var1, \min}}e^{-\eta \mu \cdot (T-\tau)} + \sum_{\tau=\widetilde{\tau}+1}^{T-1} d\|(\prod_{j=\tau}^{T-1} \vB_{j})(\prod_{j=\tau}^{T-1} \vB_{j})^\s{T}\|\\
    &\leq d + \frac{d}{\hp_{var1, \min}} e^{-\eta \mu(T-\widetilde{\tau})} \frac{1}{1-e^{-\eta\mu}} + d(T-\widetilde{\tau}-1)\cdot 1\\
    &\leq d\cdot (T-\widetilde{\tau})+\frac{3d}{\hp_{var1, \min}} e^{-\eta \mu(T-\widetilde{\tau})}.
\end{align*}
Hence, selecting $T-\widetilde{\tau}= \frac{101\log(1/\hp_{var1, \min})}{\eta \mu}$, we have w.p. $\geq 1-T(\hp_{var1, \max} + \hp_{\vB})$,
\begin{equation}
    \s{Tr}(\vQ)\leq d\cdot \Big(\frac{101}{\eta \mu}\log (1/\hp_{var1, \min}) + 3\hp_{var1, \min}^{100}\Big).
\end{equation}
Now, using standard Gaussian property along with the bound on $\s{Tr}(\vQ)$ given above we have w.p. $\geq 1-T(\hp_{var1, \max} + \hp_{\vB}) -\hp_{tr}$,
\begin{align*}
\|\vg_{T-1} +  \vB_{T-1}\vg_{T-2}  + &\dots + \vB_{T-1}\dots \vB_1\vg_{0}\| \leq \sqrt{\s{Tr}(\vQ)} \sqrt{\log  (1/\hp_{tr})} \\
   &\leq \sqrt{d\log  (1/\hp_{tr})}\cdot \sqrt{\frac{101}{\eta \mu}\log (1/\hp_{var1, \min}) + 3\hp_{var1, \min}^{100}}. \numberthis\label{eq:ssgd-conc-var1-bound}
\end{align*}
\\\textbf{Bound on $\|\vxi_{T-1} +  \vB_{T-1}\vxi_{T-2}  + \dots + \vB_{T-1}\dots \vB_1\vxi_{0}\|$}\\
Note that
\begin{multline*}
\|\vxi_{T-1} +  \vB_{T-1}\vxi_{T-2}  + \dots + \vB_{T-1}\dots \vB_1\vxi_{0}\|\\= \|\vx_{T-1}\fieldn_{T-1} +  \vB_{T-1}\vx_{T-2}\fieldn_{T-2}  + \dots + \vB_{T-1}\dots \vB_1\vx_{0}\fieldn_{0}\|:= \|\vV\vfieldn\|,
\end{multline*}
where $\vfieldn$ is a vector having entries $[\fieldn_0, \dots, \fieldn_{T-1}]$ and $\vV$ is a $d\times T$ matrix with the $j^\s{th}$ column being the vector $\vB_{T-1}\dots \vB_{j}\vx_{j-1}$ and the $T^\s{th}$ column being the vector $\vx_{T-1}$.

Note that since each $\fieldn_j$ is independent and $\fieldn_j \sim \cN(0, \sigma^2)$ $\implies \vfieldn \sim \cN(\vzero, \sigma^2\vI)$\\
$\implies \vV\vfieldn \sim \cN(\vzero, \sigma^2\vV^\s{T}\vV)$. 

Therefore, using Lemma~\ref{lemma:hanson-wright}, we have with probability $\geq 1-\hp_f$,
\begin{align*}
    \|\vV\vfieldn\|^2 &\leq \s{Tr}(\sigma^2\vV^\s{T}\vV) + 2\|\sigma^2\vV^\s{T}\vV\|_\s{F}\sqrt{\log(1/\hp_f)} + 2\|\sigma^2\vV^\s{T}\vV\|\log(1/\hp_f) \\
    &\leq \sigma^2\s{Tr}(\vV^\s{T}\vV) + 2\sigma^2\|\vV^\s{T}\vV\|_\s{F}\sqrt{\log(1/\hp_f)} + 2\sigma^2\s{Tr}(\vV^\s{T}\vV)\log(1/\hp_f) \numberthis\label{eq:var-2-spec-trace-from}\\
    &= \sigma^2\s{Tr}(\vV^\s{T}\vV) + 2\sigma^2\sqrt{\s{Tr}(\vV^\s{T}\vV\vV^\s{T}\vV)\log(1/\hp_f)} + 2\sigma^2\s{Tr}(\vV^\s{T}\vV)\log(1/\hp_f)\\
    &\leq \sigma^2\s{Tr}(\vV^\s{T}\vV) + 2\sigma^2\sqrt{\|\vV^\s{T}\vV\|\s{Tr}(\vV^\s{T}\vV)\log(1/\hp_f)} + 2\sigma^2\s{Tr}(\vV^\s{T}\vV)\log(1/\hp_f)\\
    &= \sigma^2\s{Tr}(\vV^\s{T}\vV) + 2\sigma^2\s{Tr}(\vV^\s{T}\vV)\sqrt{\log(1/\hp_f)} + 2\sigma^2\s{Tr}(\vV^\s{T}\vV)\log(1/\hp_f)\\
    &= \sigma^2\s{Tr}(\vV^\s{T}\vV)(1+2\sqrt{\log(1/\hp_f)}+2\log(1/\hp_f))\\
    &\leq 5\sigma^2\s{Tr}(\vV^\s{T}\vV)\log(1/\hp_f),\numberthis\label{eq:var-2-trace}
\end{align*}
where in \eqref{eq:var-2-spec-trace-from}, we have used the fact that $\|\vM\| \leq \s{Tr}(\vM)$ and $\|\vM\|_\s{F} = \sqrt{\s{Tr}(\vM^\s{T}\vM)}$ for a symmetric matrix $\vM$.

Our aim is now to compute the value:
\begin{align*}
    \s{Tr}(\vV^\s{T}\vV) &= \|\vx_{T-1}\|^2 + \sum_{j=1}^{T-1} \|\vB_{T-1}\vB_{T-2}\dots \vB_{j}\vx_{j-1}\|^2\\
    &= \|\vx_{T-1}\|^2 + \sum_{\tau=1}^{T-1} \|(\prod_{j=\tau}^{T-1} \vB_{j})\vx_{j-1}\|^2\\
    &= \|\vx_{T-1}\|^2 + \sum_{\tau=1}^{\widetilde{\tau}} \|(\prod_{j=\tau}^{T-1} \vB_{j})\vx_{j-1}\|^2 + \sum_{\tau=\widetilde{\tau}+1}^{T-1} \|(\prod_{j=\tau}^{T-1} \vB_{j})\vx_{j-1}\|^2.\numberthis\label{eq:var-2-vTv}
\end{align*}
Using Lemma~\ref{lemma:ssgd-markov},  we have w.p. $\geq 1-T \hp_{var2, max}$,
\begin{align*}
    \sum_{\tau=1}^{\widetilde{\tau}} \|(\prod_{j=\tau}^{T-1} \vB_{j})\vx_{j-1}\|^2 &\leq  \sum_{\tau=1}^{\widetilde{\tau}} \frac{R_x^2}{\hp_{var2, \min}}e^{-\eta \mu \cdot (T-\tau)}, \numberthis \label{eq:var-2-normlem1}    
\end{align*}
where $\hp_{var2, \min}$ and $\hp_{var2, \max}$ are the minimum and maximum $\hp$'s respectively across all the Lemma~\ref{lemma:ssgd-markov} invocations.

Also, for any $\tau$, we have w.p. $\geq 1 - (T-\tau)\hp_{\vB}$,
\begin{align*}
    \|(\prod_{j=\tau}^{T-1} \vB_{j})\vx_{j-1}\|^2 &\leq \|\prod_{j=\tau}^{T-1} \vB_{j}\|^2\|\vx_{j-1}\|^2 \leq \|\vx_{j-1}\|^2 \leq R_x^2\log^{2a}(1/\hp_{\vB}), \numberthis\label{eq:var-2-normlem2}
\end{align*}
where we again use the fact that since $\eta \leq \frac{1}{R_x^2\log^{2a}(1/\hp_{\vB})}$ w.p. $\geq 1 - \hp_{\vB}$,
\begin{align*}
    \vB_j = \vI - \eta \vx_j\vx_j^\s{T} \preceq \vI \implies \|\vB_j\| \leq \|\vI\| = 1.    
\end{align*}
Using \eqref{eq:var-2-normlem1} and \eqref{eq:var-2-normlem2} in \eqref{eq:var-2-vTv}, we have w.p. at least $1-T\hp_{var2, \max} - (T-\widetilde{\tau})\hp_{\vx, \max} - T\hp_{\vB} \geq 1-T(\hp_{var2, \max} + \hp_{\vx, \max} + \hp_{\vB})$,
\begin{align*}
    \s{Tr}(\vV^\s{T}\vV) &= \|\vx_{T-1}\|^2 + \sum_{\tau=1}^{\widetilde{\tau}} \|(\prod_{j=\tau}^{T-1} \vB_{j})\vx_{j-1}\|^2 + \sum_{\tau=\widetilde{\tau}+1}^{T-1} \|(\prod_{j=\tau}^{T-1} \vB_{j})\vx_{j-1}\|^2\\
    &\leq R_x^2\log^{2a}(1/\hp_{\vx, \min})+ \sum_{\tau=1}^{\widetilde{\tau}} \frac{R_x^2}{\hp_{var2, \min}}e^{-\eta \mu \cdot (T-\tau)} + \sum_{\tau=\widetilde{\tau}+1}^{T-1} R_x^2\log^{2a}(1/\hp_{\vx, \min})\\
    &\leq R_x^2\log^{2a}(1/\hp_{\vx, \min}) \\
    &\qquad + \frac{R_x^2}{\hp_{var2, \min}} e^{-\eta \mu(T-\widetilde{\tau})} \frac{1}{1-e^{-\eta\mu}} + R_x^2\log^{2a}(1/\hp_{\vx, \min})(T-\widetilde{\tau}-1)\\
    &\leq R_x^2\log^{2a}(1/\hp_{\vx, \min})(T-\widetilde{\tau}) + \frac{3R_x^2}{\hp_{var2, \min}} e^{-\eta \mu(T-\widetilde{\tau})}. 
\end{align*}
Again, selecting $T-\widetilde{\tau}= \frac{101\log(1/\hp_{var2, \min})}{\eta \mu}$, we have w.p. $\geq 1-T(\hp_{var2, \max} + \hp_{\vx, \max}  + \hp_{\vB})$,
\begin{equation}
    \s{Tr}(\vV^\s{T}\vV)\leq R_x^2\cdot \Big(\log^{2a}(1/\hp_{\vx, \min})\frac{101}{\eta \mu}\log (1/\hp_{var2, \min}) + 3\hp_{var2, \min}^{100}\Big).
\end{equation}
Using this in \eqref{eq:ssgd-conc-var1-bound}, we have w.p. $\geq 1-T(\hp_{var2, \max} + \hp_{\vx, \max} + \hp_{\vB}) - \hp_f$,
\begin{align*}
    \|\vV\vfieldn\|^2  &\leq 5\sigma^2R_x^2\cdot \Big(\log^{2a}(1/\hp_{\vx, \min})\frac{101}{\eta \mu}\log (1/\hp_{var2, \min}) + 3\hp_{var2, \min}^{100}\Big)\log(1/\hp_f). \numberthis\label{eq:ssgd-conc-var2-bound}
\end{align*}

Combining \eqref{eq:ssgd-conc-var1-bound} and \eqref{eq:ssgd-conc-var2-bound} in \eqref{var-conc-split}, we obtain the overall variance bound w.p. at least $1-T(2\hp_{var, \max} + \hp_{\vx, \max} + \hp_{\vB}) - \hp_f - \hp_{tr}$, \\
$\|\eta(\vnu_{T-1} + \vB_{T-1}\vnu_{T-2}  + \dots + \vB_{T-1}\dots \vB_1\vnu_0)\|^2$
\begin{align*}
    &\leq 2\eta^2\Big(\|\vxi_{T-1} +  \vB_{T-1}\vxi_{T-2}  + \dots + \vB_{T-1}\dots \vB_1\vxi_{0}\|^2\\
    &\quad + 4\zeta^2\alpha^2\|\vg_{T-1} +  \vB_{T-1}\vg_{T-2}  + \dots + \vB_{T-1}\dots \vB_1\vg_{0}\|^2\Big) \\
    &\leq 2\eta^2\Big(\Big(5\sigma^2R_x^2\cdot \Big(\log^{2a}(1/\hp_{\vx, \min})\frac{101}{\eta \mu}\log (1/\hp_{var, \min}) + 3\hp_{var, \min}^{100}\Big)\log(1/\hp_f)\Big)\\
    &\quad + 4\zeta^2\alpha^2d\log  (1/\hp_{tr}) \Big(\frac{101}{\eta \mu}\log (1/\hp_{var, \min}) + 3\hp_{var, \max}^{100}\Big)\Big) \\
    &= 2\eta^2\Big(\Big(5\sigma^2R_x^2 \log (1/\hp_{var, \min}) \log^{2a}(1/\hp_{\vx, \min})\log(1/\hp_f) \\
    &\quad + 4\zeta^2\alpha^2d\log  (1/\hp_{tr})\log (1/\hp_{var, \min})\Big)\frac{101}{\eta \mu}\\
    &\quad + \Big(5\sigma^2R_x^2\log(1/\hp_f) + 4\zeta^2\alpha^2d\log  (1/\hp_{tr})\Big)3\hp_{var, \max}^{100}\Big),\numberthis\label{eq:ssgd-conc-var-bound}
\end{align*}
where $\hp_{var, \max} = \max(\hp_{var1, \max}, \hp_{var2, \max})$ and $\hp_{var, \min} = \max(\hp_{var1, \max}, \hp_{var2, \max})$.

Setting all the above $\hp$'s as $\widetilde{\hp}$,  and for $\widetilde{\hp}\leq 1/{\eta \mu}$, we get w.p. at least $1-(4T+2)\widetilde{\hp}$,
\begin{align}
\resizebox{\textwidth}{!}{
    $\|\eta(\vnu_{T-1} + \vB_{T-1}\vnu_{T-2}  + \dots + \vB_{T-1}\dots \vB_1\vnu_0)\|^2\leq \frac{210\eta}{\mu}\Big(5\sigma^2R_x^2\log^{2a+2} (1/\widetilde{\hp}) + 4\zeta^2\alpha^2d\log^2(1/\widetilde{\hp})\Big).$}\label{eq:ssgd-var-conc}
\end{align}
Using the Bias bound \eqref{eq:ssgd-bias-conc} and Variance bound \eqref{eq:ssgd-var-conc} in \eqref{eq:ssgd-bias-var-conc-split} and setting the additional $\hp_{\vB}$ arising from the Bias bound as well to $\widetilde{\hp}$, we have w.p. $\geq 1-(5T+2)\widetilde{\hp}$,
\begin{align*}
    \|\vw_{T} - \vw^*\|^2 &\leq 2\Big(\|\vw^*\|^2+ \frac{210\eta}{\mu}\Big(5\sigma^2 R_x^2\log^{2a+2} (1/\widetilde{\hp}) + 4\zeta^2\alpha^2d\log^2(1/\widetilde{\hp})\Big)\Big),
\end{align*}
Setting $\widetilde{\hp}= \frac{1}{\sf Poly(N)}$, we get w.p. $\geq 1-\frac{1}{\sf Poly(N)}$, 
\begin{align}
    \|\vw_{T} - \vw^*\| &\leq 2\|\wo\|+2\sqrt{\frac{1050 \eta}{\mu}} R_x \sigma \log^{a+1}N + 2 \sqrt{\frac{210 \eta \cdot d}{\mu}} \cdot 2\zeta\alpha \log N. \label{eq:sgd-norm-conc-bound}
\end{align}
Using \eqref{eq:sgd-norm-conc-bound} in \eqref{eq:ssgd-grad-bound-unplugged}, we get w.p. $\geq 1-\frac{1}{\sf Poly(N)}$,

\begin{align*}
    &\|\vx_T(\langle \vx_T, \vw_T\rangle - y_T)\| \leq  K_2R_x\log^{2a}N\Big(\sqrt{\|\vH\|_2}\|\vw_T - \vw^*\| + \sigma\Big)\\
    &\leq  K_2R_x\log^{2a}N\Big(2\sqrt{\|\vH\|_2}\Big(\|\wo\|+\sqrt{\frac{1050 \eta}{\mu}} R_x \sigma \log^{a+1}N + 2\sqrt{\frac{210 \eta \cdot d}{\mu}} \cdot \zeta\alpha \log N\Big) + \sigma\Big).
\end{align*}
Now, $\alpha = \alpha_{\epsilon, \delta, N}$. Hence, setting $\eta$, s.t., 
\begin{equation}
    1- 4 \sqrt{\frac{210 \eta \cdot d}{\mu}} \alpha_{\epsilon, \delta, N}  \cdot K_2 \sqrt{\|\vH\|_2}\cdot R_x\log^{2a+1} N \geq 1/2,
\end{equation}
i.e., if 
\begin{equation}
    \eta \leq \min\Big\{\frac{1}{64\cdot 210\cdot \log^{4a+2} N}\cdot  \frac{\mu}{K_2^2R_x^2 \|\vH\|}\cdot \frac{1}{d\alpha_{\epsilon, \delta, N}^2}, \frac{1}{4\cdot 1050\cdot \log^{2a+2} N}\cdot \frac{1}{R_x^2}\Big\},
\end{equation}
then w.p. $\geq 1-\frac{1}{\sf Poly(N)}$, 
$$\|\vx_T(\langle \vx_T, \vw_T\rangle - y_T)\| \leq \zeta,$$
and
\begin{equation}
\zeta = 4 K_2R_x\cdot \log^{2a}N\cdot \Big(\sqrt{\|\vH\|_2}\|\wo\| + \sqrt{\kappa}\sigma \Big).
\end{equation}
\end{proof}

\subsubsection{Proof of Lemma~\ref{lemma:sgd-zeta-infinity-utility}}
\begin{proof}
As in the proof of Lemma~\ref{lemma:ssgd-gradient-norm-bound}, let us define:
\begin{align*}
    \vB_t := \vI - \eta\vx_t\vx_t^\s{T},\ 
    \vxi_t := (y_t - \langle \vx_t, \vw^*\rangle)\vx_t = \fieldn_t\vx_t,\ 
    \vnu_t := \vxi_t - 2\zeta\alpha \vg_t.
\end{align*}
Since there is no clipping, the update rule can be written as:
\begin{align*}
    \vw_{t+1} - \vw^* &= \vw_{t} - \vw^* - \eta\text{ clip}_\zeta(\vx_t(\langle \vx_t, \vw_t\rangle - y_t)) - 2\eta\zeta\alpha \vg_t\\
    &= \vw_{t} - \vw^* - \eta\Big(\vx_t(\langle \vx_t, \vw_t\rangle - \langle \vx_t, \vw^* \rangle - \fieldn_t)) + 2\zeta\alpha \vg_t\Big)\\
    &= (\vI - \eta\vx_t\vx_t^\s{T})(\vw_t - \vw^*) + \eta_t\vxi_t - 2\eta\zeta\alpha \vg_t\\
    &= \vB_t(\vw_{t} - \vw^*) + \eta(\vxi_t - 2\zeta\alpha \vg_t)= \vB_t(\vw_{t} - \vw^*) + \eta\vnu_t. \numberthis\label{eq:ssgd-recursiveform}
\end{align*}
From recursion,
\begin{align*}
    \vw_t - \vw^* &= \vB_{t-1}\dots \vB_0(\vw_0 - \vw^*) + \eta(\vnu_{t-1} + \vB_{t-1}\vnu_{t-2} + \dots + \vB_{t-1}\dots \vB_1\vnu_0).\numberthis \label{eq:ssgd-closedform}
\end{align*}
Note that both the recursive form \eqref{eq:ssgd-recursiveform} and the expanded form \eqref{eq:ssgd-closedform} are similar to that of equation (1) in \cite{sgd} which analyzed a non-private version of SGD, except that we now have the term $\vnu_t = \vxi_t - 2\zeta\alpha\vg_t$ in place of $\vxi_t$. 

The above equation implies:
\begin{align*}
    \overline{\vw} - \vw^* &= \Big(\frac{2}{N}\sum_{\tau=N/2+1}^{N}\vw_{\tau}\Big) - \vw^*= \frac{2}{N}\sum_{\tau=N/2+1}^{N}(\vw_{\tau} - \vw^*)\\
    &= \frac{2}{N}\sum_{\tau=N/2+1}^{N}\Big(\vB_{\tau-1}\dots \vB_0(\vw_0 - \vw^*) + \eta(\vnu_{\tau-1} + \dots + \vB_{\tau-1}\dots \vB_1\vnu_0)\Big)\\
    &= \frac{2}{N}\sum_{\tau=N/2+1}^{N}\vB_{\tau-1}\dots \vB_0(\vw_0 - \vw^*) \\
    &\qquad + \frac{2\eta}{N}\sum_{\tau=N/2+1}^{N}\vnu_{\tau-1} + \vB_{\tau-1}\vnu_{\tau-2} + \dots + \vB_{\tau-1}\dots \vB_1\vnu_0.
\end{align*}
Therefore,
\begin{align*}
    &\Ee{(\vx, y) \sim \cD}{\|\overline{\vw} - \vw^*\|_\vH^2} \leq \Big(\sqrt{\Ee{(\vx, y) \sim \cD}{\Big\|\frac{2}{N}\sum_{\tau=N/2+1}^{N}\vB_{\tau-1}\dots \vB_0(\vw_0 - \vw^*)\Big\|_\vH^2}} \\
    &\quad + \sqrt{\Ee{(\vx, y) \sim \cD}{\Big\|\frac{2\eta}{N}\sum_{\tau=N/2+1}^{N}\vnu_{\tau-1} + \vB_{\tau-1}\vnu_{\tau-2} + \dots + \vB_{\tau-1}\dots \vB_1\vnu_0\Big\|_\vH^2}}\Big)^2,
\end{align*}
\begin{align*}
    \implies \cL(\overline{\vw})- \cL(\vw^*) &\leq \frac{1}{2}\Big(\underbrace{\sqrt{\Ee{(\vx, y) \sim \cD}{\|\overline{\vw} - \vw^*\|^2_{\vH}|\vnu_0 = \dots = \vnu_{N-1} = \vzero}}}_{\text{Bias Term}} \\
    &\quad + \underbrace{\sqrt{\Ee{(\vx, y) \sim \cD}{\|\overline{\vw} - \vw^*\|^2_{\vH}|\vw_0 = \vw^*}}}_{\text{Variance Term}}\Big)^2 \numberthis \label{eq:ssgd-loss-unplugged}
\end{align*}
where, 
\begin{align*}
    &\underbrace{\Ee{(\vx, y) \sim \cD}{\|\overline{\vw} - \vw^*\|^2_{\vH}|\vnu_0 = \dots = \vnu_{N-1} = \vzero}}_{\text{Bias Term}} := \Ee{(\vx, y) \sim \cD}{\Big\|\frac{2}{N}\sum_{\tau=N/2+1}^{N}\vB_{\tau-1}\dots \vB_0 \vw^*\Big\|_\vH^2}, \\
    &\underbrace{\Ee{(\vx, y) \sim \cD}{\|\overline{\vw} - \vw^*\|^2_{\vH}|\vw_0 = \vw^*}}_{\text{Variance Term}} := \Ee{(\vx, y) \sim \cD}{\Big\|\frac{2\eta}{N}\sum_{\tau=N/2+1}^{N}\vnu_{\tau-1} + \dots + \vB_{\tau-1}\dots \vB_1\vnu_0\Big\|_\vH^2}.
\end{align*}
This is because,
\begin{align*}
    \cL(\overline{\vw}) - \cL(\vw^*) = \frac{1}{2} \|\overline{\vw} - \vw^*\|^2_{\vH}.
\end{align*}
Following along the lines of \cite{sgd}, the Bias-Variance Analysis is as follows: \\\\
\textbf{Bias Term Analysis}
\begin{align*}
    &\resizebox{\textwidth}{!}{$\Ee{(\vx, y) \sim \cD}{\|\vw_t - \vw^*\|^2 | \vnu_0 = \dots = \vnu_{N-1} = \vzero}= \Ee{(\vx, y) \sim \cD}{\|\vw_t - \vw^*\|^2 | \vg_0\dots \vg_{N-1}=0, \vxi_0 \dots \vxi_{N-1} = \vzero} $}\\
    &= \Ee{(\vx, y) \sim \cD}{\|\vB_{t-1}(\vw_{t-1} - \vw^*)\|^2}= \Ee{(\vx, y) \sim \cD}{\|(\vI - \eta\vx_{t-1}\vx_{t-1}^\s{T})(\vw_{t-1} - \vw^*)\|^2}\\
    &= \Ee{(\vx, y) \sim \cD}{\|\vw_{t-1} - \vw^*\|^2} - \Ee{(\vx, y) \sim \cD}{2\eta\langle (\vw_{t-1} - \vw^*), \vx_{t-1}\vx_{t-1}^\s{T}(\vw_{t-1}, \vw^*) \rangle} \\
    &\qquad\qquad + \Ee{(\vx, y) \sim \cD}{\eta^2\langle (\vw_{t-1} - \vw^*), \|\vx_{t-1}\|^2\vx_{t-1}\vx_{t-1}^\s{T}(\vw_{t-1} - \vw^*) \rangle}\\
    &\leq \Ee{(\vx, y) \sim \cD}{\|\vw_{t-1} - \vw^*\|^2} - 2\eta\Ee{(\vx, y) \sim \cD}{\langle (\vw_{t-1} - \vw^*), \vH(\vw_{t-1} - \vw^*) \rangle} \\
    &\qquad\qquad + \eta^2 R_x^2\Ee{(\vx, y) \sim \cD}{\langle (\vw_{t-1} - \vw^*), \vH(\vw_{t-1} - \vw^*) \rangle}\\
    &= \Ee{(\vx, y) \sim \cD}{\|\vw_{t-1} - \vw^*\|^2} - \eta (2 - \eta R_x^2)  \Ee{(\vx, y) \sim \cD}{\langle (\vw_{t-1} - \vw^*), \vH(\vw_{t-1}, \vw^*) \rangle}\\
    &\leq \Ee{(\vx, y) \sim \cD}{\|\vw_{t-1} - \vw^*\|^2} - \eta \Ee{(\vx, y) \sim \cD}{\langle (\vw_{t-1} - \vw^*), \vH(\vw_{t-1}, \vw^*) \rangle} \numberthis \label{eq:ssgd-bias-approxplug}\\
    &\leq (1- \eta\mu)\Ee{(\vx, y) \sim \cD}{\|\vw_{t-1} - \vw^*\|^2}\leq (1- \eta\mu)^t\Ee{(\vx, y) \sim \cD}{\|\vw_0 - \vw^*\|^2} = e^{-\eta\mu t}\|\vw^*\|^2,
\end{align*}
where in \eqref{eq:ssgd-bias-approxplug}, we have used the fact that $\eta < \frac{1}{R_x^2}$.

The Bias Term's contribution to the overall risk is therefore:
\begin{align*}
    &\frac{1}{2}\Ee{(\vx, y) \sim \cD}{\|\overline{\vw} - \vw^*\|^2_\vH | \vnu_0 \dots \vnu_{N-1} = \vzero}\leq \frac{R_x^2}{2}\Ee{(\vx, y) \sim \cD}{\|\overline{\vw} - \vw^*\|^2 | \vnu_0  \dots  \vnu_{N-1} = \vzero}\\&= \frac{R_x^2}{2}\Ee{(\vx, y) \sim \cD}{\Big\|\frac{2}{N}\sum_{\tau=N/2+1}^{N}(\vw_{\tau} - \vw^*)\Big\|^2 | \vnu_0 \dots \vnu_{N-1} = \vzero}\\
    &\leq \frac{R_x^2}{N}\sum_{\tau=N/2+1}^{N}\Ee{(\vx, y) \sim \cD}{\|\vw_{\tau} - \vw^*\|^2 | \vg_0 \dots \vg_{N-1} = 0, \vxi_0 \dots \vxi_{N-1} = \vzero}\\&\leq \frac{R_x^2}{N}\sum_{\tau=N/2+1}^{N}e^{-\eta\mu \tau}\|\vw^*\|^2\leq \frac{R_x^2}{N}\cdot\frac{N}{2}e^{-\eta\mu (N/2+1)}\|\vw^*\|^2= \frac{R_x^2}{2}e^{-\eta\mu (N/2+1)}\|\vw^*\|^2. \numberthis\label{eq:ssgd-bias-risk}
\end{align*}
\textbf{Variance Term Analysis}\\
Now suppose $\vw_0 = \vw^*$. Define the covariance matrix:
\begin{align*}
    \vC_{t} := \Ee{(\vx, y) \sim \cD}{(\vw_t - \vw^*)(\vw_t - \vw^*)^\s{T} | \vw_0 = \vw^*}.
\end{align*}
Using the recursion $\vw_{t+1}- \vw^* = \vB_t(\vw_{t} - \vw^*) + \eta\vnu_t$ from \eqref{eq:ssgd-recursiveform}, we have $\vC_{t+1}$: 
\begin{align*}
    &\resizebox{\textwidth}{!}{$:= \Ee{(\vx, y) \sim \cD}{(\vw_{t+1} - \vw^*)(\vw_{t+1} - \vw^*)^\s{T}}= \Ee{(\vx, y) \sim \cD}{\Big(\vB_t(\vw_{t} - \vw^*) + \eta\vnu_t\Big)\Big(\vB_t(\vw_{t} - \vw^*) + \eta \vnu_t\Big)^\s{T}}$}\\
    &= \Ee{(\vx, y) \sim \cD}{\Big(\vB_t(\vw_t - \vw^*)\Big)\Big(\vB_t(\vw_t - \vw^*)\Big)^\s{T}} + \eta\Ee{(\vx, y) \sim \cD}{\Big(\vB_t(\vw_t - \vw^*)\Big)\vnu_t^\s{T}} \\
    &\quad + \eta\Ee{(\vx, y) \sim \cD}{\vnu_t\Big(\vB_t(\vw_t - \vw^*)\Big)^\s{T}} + \eta^2\Ee{(\vx, y) \sim \cD}{\vnu_t\vnu_t^\s{T}}\\
    &= \Ee{(\vx, y) \sim \cD}{\Big(\vB_t(\vw_t - \vw^*)\Big)\Big(\vB_t(\vw_t - \vw^*)\Big)^\s{T}} + \eta\Ee{(\vx, y) \sim \cD}{\vB_t(\vw_t - \vw^*)(\vxi_t^\s{T} - 2\zeta\alpha \vg_t^\s{T})} \\
    &\quad + \eta\Ee{(\vx, y) \sim \cD}{(\vxi_t - 2\zeta\alpha \vg_t)\Big(\vB_t(\vw_t - \vw^*)\Big)^\s{T}} + \eta^2\Ee{(\vx, y) \sim \cD}{(\vxi_t - 2\zeta\alpha \vg_t)(\vxi_t - 2\zeta\alpha \vg_t)^\s{T}}\\
    &= \Ee{(\vx, y) \sim \cD}{\Big(\vB_t(\vw_t - \vw^*)\Big)\Big(\vB_t(\vw_t - \vw^*)\Big)^\s{T}} + \eta\Ee{(\vx, y) \sim \cD}{\Big(\vB_t(\vw_t - \vw^*)\Big)\vxi_t^\s{T}} \\
    &\quad - 2\eta\zeta\alpha \Ee{(\vx, y) \sim \cD}{\Big(\vB_t(\vw_t - \vw^*)\Big)\vg_t^\s{T}} + \eta\Ee{(\vx, y) \sim \cD}{\vxi_t\Big(\vB_t(\vw_t - \vw^*)\Big)^\s{T}} \\
    &\quad - 2\eta\zeta\alpha\Ee{(\vx, y) \sim \cD}{ \vg_t\Big(\vB_t(\vw_t - \vw^*)\Big)^\s{T}}  + \eta^2\Ee{(\vx, y) \sim \cD}{(\vxi_t - 2\zeta\alpha \vg_t)(\vxi_t - 2\zeta\alpha \vg_t)^\s{T}}\\
    &= \Ee{(\vx, y) \sim \cD}{\Big(\vB_t(\vw_t - \vw^*)\Big)\Big(\vB_t(\vw_t - \vw^*)\Big)^\s{T}}+ \eta^2\Ee{(\vx, y) \sim \cD}{(\vxi_t - 2\zeta\alpha \vg_t)(\vxi_t - 2\zeta\alpha \vg_t)^\s{T}}\numberthis\label{eq:ssgd-variance-term10}\\
    &=\Ee{(\vx, y) \sim \cD}{\Big(\vB_t(\vw_t - \vw^*)\Big)\Big(\vB_t(\vw_t - \vw^*)\Big)^\s{T}} + \eta^2\Ee{(\vx, y) \sim \cD}{(\vxi_t - 2\zeta\alpha \vg_t)(\vxi_t - 2\zeta\alpha \vg_t)^\s{T}},
\end{align*}
where in \eqref{eq:ssgd-variance-term10} we have used the fact that $\Ee{(\vx_t, y_t) \sim \cD}{\vxi_t} = \vzero$ and $\E{\vg_t} = \vzero$ has been sampled at each step independently of all other terms. Continuing to expand the above expression gives: 
\begin{align*}
    \vC_{t+1} &= \Ee{(\vx, y) \sim \cD}{\Big(\vB_t(\vw_t - \vw^*)\Big)\Big(\vB_t(\vw_t - \vw^*)\Big)^\s{T}} \\
    &\quad \resizebox{0.9\textwidth}{!}{$+ \eta^2(\Ee{(\vx, y) \sim \cD}{\vxi_t\vxi_t^\s{T}} - 2\zeta\alpha\Ee{(\vx, y) \sim \cD}{\vxi_t \vg_t^\s{T}} - 2\zeta\alpha\Ee{(\vx, y) \sim \cD}{\vg_t\vxi_t^\s{T}} + 4\zeta^2\alpha^2\Ee{(\vx, y) \sim \cD}{\vg_t\vg_t^\s{T}})$}\\
    &= \Ee{(\vx, y) \sim \cD}{\vB_t(\vw_t - \vw^*)(\vw_t - \vw^*)^\s{T}\vB_t^\s{T}} + \eta^2(\vSigma - \vzero - \vzero + 4\zeta^2\alpha^2\vI) \numberthis\label{eq:ssgd-variance-ind}\\
    &= \Ee{(\vx, y) \sim \cD}{(\vI - \eta\vx_t\vx_t^\s{T})(\vw_t - \vw^*)(\vw_t - \vw^*)^\s{T}(\vI - \eta\vx_t\vx_t^\s{T})} + \eta^2(\vSigma + 4\zeta^2\alpha^2\vI)\\
    &= \Ee{(\vx, y) \sim \cD}{(\vw_t - \vw^*)(\vw_t - \vw^*)^\s{T}} - \eta \Ee{(\vx, y) \sim \cD}{(\vx\vx^\s{T})(\vw_t - \vw^*)(\vw_t - \vw^*)^\s{T}} \\
    &\quad - \eta\Ee{(\vx, y) \sim \cD}{(\vw_t - \vw^*)(\vw_t - \vw^*)^\s{T}(\vx\vx^\s{T})} + \eta^2\Ee{(\vx, y) \sim \cD}{(\vx\vx^\s{T})(\vw_t - \vw^*)(\vx\vx^\s{T})} \\
    &\quad + \eta^2(\vSigma + 4\zeta^2\alpha^2\vI)\\
    &= \vC_t - \eta \vH\vC_t - \eta \vC_t\vH + \eta^2\Ee{(\vx, y) \sim \cD}{\vx^\s{T}\vC_t\vx\vx\vx^\s{T}} + \eta^2(\vSigma + 4\zeta^2\alpha^2\vI),
\end{align*}
where in \eqref{eq:ssgd-variance-ind} we have again used the fact that $\E{\vg_t} = \vzero$ has been sampled independently at each step. Comparing this with equation (2) of \cite{sgd}, we find that the term $\gamma^2\vSigma$ has been replaced with $\eta^2(\vSigma + 4\zeta^2\alpha^2\vI)$ where $\vSigma = \Ee{(\vx_t, y_t) \sim \cD}{\vxi_t\vxi_t^\s{T}}$. Therefore the bound for the Variance term follows almost similarly from the paper, except that $\vSigma$ gets replaced with $\vSigma + 4\zeta^2\alpha^2\vI$. 

Using Lemma \ref{lemma:ssgd-variance}, we have the Variance Term contribution as
\begin{align*}
    \frac{1}{2}&\Ee{(\vx, y) \sim \cD}{\|\overline{\vw} - \vw^*\|^2_\vH |  \vw_0 = \vw^*} \leq \frac{\s{Tr}(\vC_\infty)}{\eta(N-(N/2+1)+1)} \\
    &\qquad\qquad\qquad\leq \frac{1}{N}\frac{\eta R_x^2}{1-\eta R_x^2} d\|\vSigma + 4\zeta^2\alpha^2\vI\|_\vH + \frac{1}{N}\s{Tr}(\vH^{-1}(\vSigma+4\zeta^2\alpha^2\vI)).\numberthis\label{eq:ssgd-variance-risk}
\end{align*}
Using \eqref{eq:ssgd-bias-risk} and \eqref{eq:ssgd-variance-risk} in \eqref{eq:ssgd-loss-unplugged} in:
\begin{align*}
   \cL(\overline{\vw}) - \cL(\vw^*) &\leq \frac{1}{2}\Big(\sqrt{\Ee{(\vx, y) \sim \cD}{\|\overline{\vw} - \vw^*\|^2_{\vH}|\vnu_0 = \dots = \vnu_{N-1} = 0}} \\
   &\qquad \qquad + \sqrt{\Ee{(\vx, y) \sim \cD}{\|\overline{\vw} - \vw^*\|^2_{\vH}|\vw_0 = \vw^*}}\Big)^2
\end{align*}
gives us the risk bound as:
\begin{align*}
    &\leq \frac{1}{2}\Big(\sqrt{\frac{R_x^2}{2}e^{-\eta\mu (\frac{N}{2}+1)}\|\vw^*\|^2} + \sqrt{\frac{1}{N}\frac{\eta R_x^2}{1-\eta R_x^2} d\|\vSigma + 4\zeta^2\alpha^2\vI\|_\vH + \frac{1}{N}\s{Tr}(\vH^{-1}(\vSigma+4\zeta^2\alpha^2\vI))}\Big)^2\\
    &\leq \frac{R_x^2}{2}e^{-\eta\mu (\frac{N}{2}+1)}\|\vw^*\|^2 + \frac{1}{N}\frac{\eta R_x^2}{1-\eta R_x^2} d\|\vSigma + 4\zeta^2\alpha^2\vI\|_\vH + \frac{1}{N}\s{Tr}(\vH^{-1}(\vSigma+4\zeta^2\alpha^2\vI)).
\end{align*}
This gives us the required result.
\end{proof}

\section{Missing Proofs from Section 3.2}\label{app:ambsgd}
\subsection{Proof of Privacy Guarantees}

\subsubsection{Proof of Lemma~\ref{lemma:dp-stat}}
\label{app:ambsgd-privacy}

\begin{proof}
Since each of the computation in Step~\ref{line:pop} of \dpstat is of sensitivity one and the DP noise variance is $\lceil\log_2(B/\Delta)\rceil\alpha^2$, each step is $\Big(\rho_i = \frac{1}{2\lceil\log_2(B/\Delta)\rceil\alpha^2}\Big)$-zCDP. Since we perform at most $\lceil \log_2(B/\Delta)\rceil$ of such computations, by standard zCDP property of Gaussian mechanism, and its corresponding composition property~\citep{bun2016concentrated}, the overall $\rho = \sum_{i = 1}^{\lceil\log_2(B/\Delta)\rceil} \rho_i = \frac{1}{2\alpha^2}$ and thus the privacy guarantee follows immediately.

To prove the utility guarantee, first note that by standard concentration inequality for Gaussian distribution and by union bound, it follows that w.p. at least $1-\beta$, none of the noise added in $\lceil\log_2(B/\Delta)\rceil$ iterations of Step~\ref{line:pop} exceeds $\Gamma=\alpha\sqrt{2\log (B/\Delta)\log(\log(B/\Delta)/\beta)}$.  We will condition the rest of the proof on this event.

For any value of $\gamma_i$ in iteration $i$ of Step~\ref{line:pop} of \dpstat, $|c_{\sf priv}-c|\leq \Gamma \implies c - \Gamma \leq c_{\sf priv} \leq c + \Gamma$. Now note that, 1) if $c_{\sf priv} \geq s$ then $c + \Gamma \geq s$, and 2) if $c_{\sf priv} \leq s$ then $c - \Gamma \leq s \implies c \leq s$ since the true count can never exceed s. Combining the two cases gives $s-\Gamma\leq c\leq s$. 

Furthermore, in the doubling search of Algorithm \dpstat,  each choice for $\gamma_i$ is of the form $2^{i+1}\Delta$. Therefore, if the loop breaks out at iteration $i^*$, there exists a $\gamma\in\left[2^{i^*-1}\Delta,2^{i^*}\Delta\right]$ s.t. $ s-\Gamma\leq\left|\left\{|\ip{\vx_i}{\vw}-y_i|\leq \gamma:i\in\{0,\ldots,s\}\right\}\right|\leq s$. Hence, error in $\gamma_{\sf priv}$ estimation can only be off by a factor of two, and an additional discretization error of $\Delta$. This implies the utility guarantee.
\end{proof}

\subsubsection{Proof of Theorem~\ref{thm:ambsgd-privacy}}
\begin{proof}
Our analysis will broadly involve computing the Zero Mean Concentrated Differential Privacy (zCDP) parameters and then using them to compute the Approximate Differential Privacy parameters. Update Step 10 of Algorithm \dpambssgd without the additive Gaussian noise is:
\begin{align*}
    \vw_{t+1} \gets \vw_{t} - \frac{\eta}{b} \sum_{i=0}^{b-1}\text{ clip}_{\zeta_t}(\vx_{\tau(t)+i}(\langle \vx_{\tau(t)+i}, \vw_{t}\rangle - y_{\tau(t)+i})),
\end{align*}
where $\text{clip}_\zeta(\vnu) = \vnu\cdot\max\Big\{1, \frac{\zeta}{\|\vnu\|_2}\Big\}$. Therefore, the local $L_2$ sensitivity of the $\vw_{t+1}$ due to a sample difference in the $\tau^\s{th}$ batch is $\Delta_2 = \frac{2\eta\zeta_t}{b}$.

Since $\vg_{t} \sim \cN(\vzero, \vI_{d \times d})$, the above step is $\Big(\rho_{t, 1} = \frac{\Delta_2^2}{2\cdot\frac{4\eta^2\zeta_t^2\alpha^2}{b^2}} = \frac{1}{2\alpha^2}\Big)$-zCDP since DP noise standard deviation is $\eta\frac{2\zeta_t\alpha}{b}$. (Proposition 1.6 of \cite{bun2016concentrated}) and from Lemma~\ref{lemma:dp-stat}, Step 10 is $\Big(\rho_{t, 2} = \frac{1}{2\alpha^2}\Big)$-zCDP. Therefore by composition, each iteration step is $\Big(\rho_t = \rho_{t,1} + \rho_{t,2} = \frac{1}{\alpha^2}\Big)$-zCDP. Observe that $\rho_{t}$ is a constant given a fixed value of $\alpha$.

Since each data sample $(\vx_i, y_i)$ $\forall$ $i \in [N]$ appears in exactly one mini-batch and the algorithm \dpambssgd takes only a single pass over the entire data, by parallel composition of zCDPs, the overall $\rho$ for \dpambssgd is given by $\rho = \max_{t \in [T]} \rho_{t} = \frac{1}{\alpha^2}$.

Recall $\rho$-zCDP  for an algorithm is equivalent to obtaining a $(\mu,\mu\rho)$-Renyi differential privacy (RDP)~\citep{mironov2017renyi} guarantee. In the following, we will optimize for $\mu\in[1,\infty)$ and demonstrate that for the choice of the noise multiplier $\alpha$ mentioned in the theorem statement satisfies $(\epsilon,\delta)$-DP, which would conclude the proof. Our analysis is similar to that of Theorem 1 in \cite{chien2021private}.

Note that $(\mu,\mu\rho)$-(RDP) $\implies$ $(\epsilon, \delta)$ Approximate Privacy where $\epsilon = \mu\rho + \frac{\log(1/\delta)}{\mu - 1}$ $\forall \mu > 1$. Also note that $\epsilon_{\min} = \rho + 2\sqrt{\rho\log(1/\delta)}$ is attained at $\frac{\text{d}\epsilon}{\text{d}\mu} = 0 \implies \mu = 1 + \sqrt{\frac{\log(1/\delta)}{\rho}}$.

Consider a fixed $\epsilon$. Since we want to minimize $\alpha$ (which scales as $1/\sqrt{\rho}$), we need to compute the maximum permissible $\rho$ s.t. $\epsilon_{\min}(\rho) \leq \epsilon$. Since $\epsilon_{\min}(\rho)$ is an increasing function of $\rho$ (thus an increasing function of $\alpha$) and a second order polynomial in $\sqrt{\rho}$ with root at $\sqrt{\rho} = \sqrt{\log (1/\delta) + \epsilon_{\min}} - \sqrt{\log(1/\delta)}$, the maximum is achieved at $\epsilon_{\min}(\rho) = \epsilon$. Therefore,
\begin{align*}
    \frac{1}{\alpha^2} &= (\sqrt{\log (1/\delta) + \epsilon} - \sqrt{\log(1/\delta)})^2= \frac{\epsilon^2}{(\sqrt{\log (1/\delta) + \epsilon} + \sqrt{\log(1/\delta)})^2}.
\end{align*}
Since the above value of $\alpha$ satisfies $(\epsilon, \delta)$-DP and 
\begin{align*}
    \frac{\epsilon^2}{(\sqrt{\log (1/\delta) + \epsilon} + \sqrt{\log(1/\delta)})^2} \geq \frac{\epsilon^2}{4(\log(1/\delta) + \epsilon)},
\end{align*}
choosing $\alpha\geq \frac{2\sqrt{\log(1/\delta)+\epsilon}}{\epsilon}$ ensures $(\epsilon, \delta)$-DP.

\end{proof}

\subsection{Proof of Risk Bounds}
\subsubsection{Auxiliary Technical Lemmas}
Here we first present a few results which will be used in proving our \dpambssgd risk bounds.

\begin{lemma}[From \cite{jain2018parallelizing}]\label{lemma:parallel-bias-risk}
For any learning rate $\eta \leq \frac{b}{R_x^2 + (b-1)\|\vH\|}$ after $N$ iterations of non-private fixed mini-batch SGD Algorithm with batch size $b$, the bias error for the final iterate and tail-averaged iterate respectively are  given by:
\begin{align*}
    \cL(\vw_{N}^{bias}) - \cL(\vw^*) &\leq \frac{\kappa}{2}(1-\eta\mu)^N(\cL(\vw_{0}) - \cL(\vw^*)),\\ 
    \cL(\overline{\vw}_{t:N}^{bias}) - \cL(\vw^*) &\leq \frac{2}{\eta^2N^2\mu^2}(1-\eta\mu)^{t}(\cL(\vw_{0}) - \cL(\vw^*)),
\end{align*}
where $\vw_0$ is the initialization of $\vw$ and $
    \overline{\vw}_{t:N} := \frac{1}{N}\sum_{t'=t}^{t+N-1}\vw_{t'}$.
\end{lemma}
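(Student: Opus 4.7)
The plan is to reduce both inequalities to second-moment control on products of the mini-batch operators $\vB_t := \vI - \tfrac{\eta}{b}\sum_{i\in\mathrm{batch}_t}\vx_i\vx_i^\s{T}$. In the bias component of the standard bias-variance decomposition one effectively has $y_i = \langle \vx_i, \wo\rangle$, so the mini-batch SGD update collapses to the purely linear recursion
\[
\vw_{t+1} - \wo \;=\; \vB_t\,(\vw_t - \wo),
\]
and unrolling gives $\vw_t - \wo = \bigl(\prod_{s=0}^{t-1}\vB_s\bigr)(\vw_0 - \wo)$, so everything reduces to moment bounds on these matrix products.

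The first step is a one-step contraction $\E{\vB_t^\s{T}\vB_t}\preceq \vI - \eta\vH$. Expanding and using within-batch independence,
\[
\E{\bigl(\tfrac{1}{b}\textstyle\sum_i \vx_i \vx_i^\s{T}\bigr)^{\!2}} \;=\; \tfrac{1}{b}\E{\|\vx\|^2\vx\vx^\s{T}} + \tfrac{b-1}{b}\vH^2 \;\preceq\; \tfrac{R_x^2 + (b-1)\|\vH\|}{b}\,\vH,
\]
where the first term uses the definition of $R_x$ and the second uses $\vH^2 \preceq \|\vH\|\vH$. The stepsize assumption $\eta \leq b/(R_x^2 + (b-1)\|\vH\|)$ is precisely what makes the $\eta^2$-term absorbed into $\eta\vH$, yielding $\E{\vB_t^\s{T}\vB_t}\preceq \vI - \eta\vH \preceq (1-\eta\mu)\vI$. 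Since the $\vB_s$ are independent across batches, iterating in expectation gives $\E{\|\vw_N - \wo\|^2} \leq (1-\eta\mu)^N \|\vw_0 - \wo\|^2$; converting between $\|\cdot\|^2$ and $\|\cdot\|_\vH^2 = 2(\cL(\cdot) - \cL(\wo))$ on the two sides costs exactly one factor of $\kappa = \|\vH\|/\mu$, producing the first inequality.

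For the tail-averaged bound, I would square $\overline{\vw}_{t:N} - \wo = \tfrac{1}{N}\sum_{s=t}^{t+N-1}(\vw_s - \wo)$ in the $\vH$-seminorm and open the resulting double sum. For each cross term with $s \leq u$, conditioning on the iterate at step $s$ and using $\E{\vB_j} = \vI - \eta\vH$ over the $u-s$ intervening batches gives
\[
\E{(\vw_s - \wo)^\s{T}\vH(\vw_u - \wo)} \;=\; \E{(\vw_s - \wo)^\s{T}\vH(\vI - \eta\vH)^{u-s}(\vw_s - \wo)},
\]
which is at most $(1-\eta\mu)^{u-s}\E{\|\vw_s - \wo\|_\vH^2}$. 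Combined with the per-step $\vH$-weighted contraction $\E{\|\vw_s - \wo\|_\vH^2} \leq (1-\eta\mu)^s \|\vw_0 - \wo\|_\vH^2$ (obtained as in the previous paragraph but with the $\vH$-weighted second moment of $\vB_t$), the double geometric sum over $s$ and $u-s$ contributes a factor $1/(\eta\mu)^2$, and together with the outer $1/N^2$ from averaging and the leading $(1-\eta\mu)^t$ from the first $t$ pre-averaging iterates this yields the claimed bound.

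The main obstacle is the tail-averaging step: the $\vB_s$ do not commute with $\vH$ or with one another, so one cannot take expectations trivially through a product of them. The key tool is that the $\vB_s$ are independent across batches and contract $\vH$ in expectation, which lets the conditional expectations of intermediate-step products collapse to $\vH(\vI - \eta\vH)^{u-s}$; bounding the $\vH$-weighted second moment $\E{\vB_t^\s{T}\vH\vB_t}$ under the same stepsize condition requires in addition a fourth-moment bound of the form $\E{\vx\vx^\s{T}\vH\vx\vx^\s{T}} \preceq R_x^2 \|\vH\|\vH$, which follows from the definition of $R_x$ and $\vH^2 \preceq \|\vH\|\vH$. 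This is the technical content of the averaging argument of \cite{jain2018parallelizing} that the lemma directly invokes.
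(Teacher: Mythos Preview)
The paper does not prove this lemma: it is quoted verbatim from \cite{jain2018parallelizing} (their Lemmas 10--11) and used as a black box inside the proof of Lemma~\ref{lemma:ambssgd-zeta-infinity-utility}. So your sketch is being compared to the argument in that reference, whose high-level structure is indeed the one you outline.

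Your treatment of the final-iterate bound is correct and standard: the one-step contraction $\E{\vB_t^\s{T}\vB_t}\preceq \vI-\eta\vH$ is exactly Lemma~\ref{lemma:ambsgd-btbt} of the present paper, and iterating it in the Euclidean norm and then paying one factor of $\kappa$ to pass between $\|\cdot\|^2$ and $\|\cdot\|_\vH^2$ on the two sides gives the first inequality.

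For the tail-averaged bound there is a genuine gap. Your argument hinges on the per-step $\vH$-weighted contraction $\E{\|\vw_s-\wo\|_\vH^2}\leq (1-\eta\mu)^s\|\vw_0-\wo\|_\vH^2$, which you claim follows from the fourth-moment bound $\E{\vx\vx^\s{T}\vH\vx\vx^\s{T}}\preceq R_x^2\|\vH\|\vH$. It does not: plugging that bound into $\E{\vB_t^\s{T}\vH\vB_t}$ and using the stepsize condition yields only
\[
\E{\vB_t^\s{T}\vH\vB_t}\ \preceq\ \vH\bigl(\vI - \eta(2\vH - \|\vH\|\vI)\bigr),
\]
whose contraction factor along the smallest eigenvector is $1-\eta(2\mu-\|\vH\|)$, not $1-\eta\mu$; for $\kappa>2$ this is not even a contraction. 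The fix is to reorder your two summations: after writing the cross term as $\E{(\vw_s-\wo)^\s{T}\vH(\vI-\eta\vH)^{u-s}(\vw_s-\wo)}$, first sum the inner geometric series $\sum_{k\geq 0}\vH(\vI-\eta\vH)^k=\eta^{-1}\vI$, which cancels the $\vH$ and leaves $\eta^{-1}\E{\|\vw_s-\wo\|^2}$ in the \emph{unweighted} norm; then apply the Euclidean contraction you already proved to sum over $s$, and finally convert $\|\vw_0-\wo\|^2\leq\mu^{-1}\|\vw_0-\wo\|_\vH^2$. This delivers exactly $\tfrac{2}{\eta^2N^2\mu^2}(1-\eta\mu)^t(\cL(\vw_0)-\cL(\wo))$ with no spurious $\kappa$, and is the substance of the averaging argument in \cite{jain2018parallelizing} that you defer to in your last paragraph.
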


\begin{lemma}[From \cite{jain2018parallelizing}]\label{lemma:parallel-variance-risk}
For any learning rate $\eta \leq \frac{b}{R_x^2 + (b-1)\|\vH\|}$ after $N$ iterations of non-private fixed mini-batch SGD Algorithm with batch size $b$ the variance error for the final iterate and tail-averaged iterate respectively are  given by:
\begin{align*}
    \cL(\vw_{N}^{variance}) - \cL(\vw^*) \leq \frac{\eta}{2b}\s{Tr}(\vH\cT_{b}^{-1}\vSigma\vI)),\  \ \  L(\overline{\vw}_{t:N}^{variance}) - \cL(\vw^*) \leq \frac{2}{Nb}\s{Tr}(\cT_b^{-1}\vSigma),
\end{align*}
where $\overline{\vw}_{t:N} := \frac{1}{N}\sum_{t'=t}^{t+N-1}\vw_{t'}$. Further, if we define,
\begin{align*}
    \resizebox{\textwidth}{!}{$\vA := \Big(\cH_{\cL} + \cH_{\cR} - \frac{\eta}{b}\cdot(b-1)\cH_{\cL}\cH_{\cR}\Big)^{-1}\vSigma,\ \ 
    \overline{\vw}_{t:N} := \frac{1}{N}\sum_{t'=t}^{t+N-1}\vw_{t'},\ \ 
    \eta \leq \frac{2b}{R_x^2\cdot\frac{d\|(\cH_{\cL} + \cH_{\cR})^{-1}\vSigma\|_2}{\s{Tr}((\cH_{\cL} + \cH_{\cR})^{-1}\vSigma)}},$}
\end{align*}
then, 
\begin{align*}
    \s{Tr}(\cT_b^{-1}\vSigma) &\leq \s{Tr}(\vA) + \frac{\frac{\eta R_x^2}{2b}d\|(\cH_{\cL} + \cH_{\cR})^{-1}\vSigma\|_2}{\Big(1 - \frac{\eta}{2b}\cdot(R_x^2 + (b-1)\|\vH\|_2)\Big)\Big(1 - \eta\frac{b-1}{2b}\|\vH\|_2\Big)} \leq 2\s{Tr}(\vH^{-1}\vSigma).
\end{align*}
\end{lemma}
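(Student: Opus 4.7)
The plan is to follow the operator-theoretic variance analysis of \cite{jain2018parallelizing}. By the usual bias-variance decomposition I can isolate the variance iterate by initializing $\vw_0 = \wo$, so that all deviation from optimum comes from the mini-batch gradient noise, and then track the covariance $\vC_t := \E{(\vw_t - \wo)(\vw_t - \wo)^T}$. Expanding the mini-batch update $\vw_{t+1} - \wo = \bigl(\vI - \tfrac{\eta}{b}\sum_{i \in \calB_t} \vx_i \vx_i^T\bigr)(\vw_t - \wo) + \tfrac{\eta}{b}\sum_{i \in \calB_t}\fieldn_i \vx_i$ in expectation, using independence of $\calB_t$ from $\vw_t$ to kill the cross terms and separating intra-batch diagonal ($i=j$) from off-diagonal ($i\neq j$) contributions (the latter is where the combinatorial factor $(b-1)/b$ enters), gives the linear operator recursion
\begin{align*}
\vC_{t+1} = \vC_t - \eta\,\cT_b(\vC_t) + \tfrac{\eta^2}{b}\vSigma,
\end{align*}
where $\cT_b(\vM) := \cH_\cL \vM + \cH_\cR \vM - \tfrac{\eta(b-1)}{b}\cH_\cL \cH_\cR \vM - \tfrac{\eta}{b}\E{\vx \vx^T \vM \vx \vx^T}$. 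Under the stated step size $\eta \le b/(R_x^2 + (b-1)\|\vH\|_2)$ one verifies $\cT_b$ is PSD-positive, so $\vI - \eta\cT_b$ contracts on the PSD cone with fixed point $\tfrac{\eta}{b}\cT_b^{-1}\vSigma$.

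For the last iterate, unrolling from $\vC_0 = \vzero$ yields $\vC_N = \tfrac{\eta}{b}(\vI - (\vI - \eta\cT_b)^N)\cT_b^{-1}\vSigma \preceq \tfrac{\eta}{b}\cT_b^{-1}\vSigma$, so $\cL(\vw_N^{\mathrm{variance}}) - \cL(\wo) = \tfrac12 \s{Tr}(\vH \vC_N) \le \tfrac{\eta}{2b}\s{Tr}(\vH \cT_b^{-1}\vSigma)$, matching the first inequality. For the tail-averaged iterate I would use the telescoping identity $\eta \cT_b \vC_{t'} = \vC_{t'} - \vC_{t'+1} + \tfrac{\eta^2}{b}\vSigma$ summed over $t' \in [t, t+N-1]$, combined with the standard Polyak-Ruppert cross-covariance expansion in which the cross-covariances $\E{(\vw_{t'} - \wo)(\vw_{t''} - \wo)^T}$ for $t' < t''$ decay through $(\vI - \eta\cT_b)^{t''-t'}$. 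The boundary $\vC_t - \vC_{t+N}$ contributes a $1/(\eta N)$ factor via contraction, and crucially this factor multiplies $\cT_b^{-1}\vSigma$ rather than $\vH \cT_b^{-1}\vSigma$, producing $\cL(\overline{\vw}_{t:N}^{\mathrm{variance}}) - \cL(\wo) \le \tfrac{2}{Nb}\s{Tr}(\cT_b^{-1}\vSigma)$ (the saving of a factor of $\vH$ is precisely what Polyak-Ruppert averaging buys).

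For the closed-form bound on $\s{Tr}(\cT_b^{-1}\vSigma)$, I would split $\cT_b = \widetilde\cT_b - \tfrac{\eta}{b}\cN$ where $\widetilde\cT_b := \cH_\cL + \cH_\cR - \tfrac{\eta(b-1)}{b}\cH_\cL \cH_\cR$ is the ``clean'' part whose inverse produces $\vA$, and $\cN(\vM) := \E{\vx\vx^T \vM \vx\vx^T}$ is the fourth-moment perturbation. A one-step Neumann expansion $\cT_b^{-1} = \widetilde\cT_b^{-1} + \tfrac{\eta}{b}\widetilde\cT_b^{-1} \cN \cT_b^{-1}$ traced against $\vSigma$ gives $\s{Tr}(\vA)$ from the leading term. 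For the remainder I would apply the PSD fourth-moment inequality $\cN(\vM) \preceq R_x^2 \|\vM\|_2 \vH$ (consequence of $\vx^T \vM \vx \le \|\vM\|_2 \|\vx\|^2$ with the defining bound $\E{\|\vx\|^2 \vx\vx^T} \preceq R_x^2 \vH$), then use the step-size hypothesis $\eta \le 2b / [R_x^2 \cdot d\|(\cH_\cL + \cH_\cR)^{-1}\vSigma\|_2 / \s{Tr}((\cH_\cL + \cH_\cR)^{-1}\vSigma)]$, which is exactly what keeps the two denominator factors $1 - \tfrac{\eta}{2b}(R_x^2 + (b-1)\|\vH\|_2)$ and $1 - \tfrac{\eta(b-1)}{2b}\|\vH\|_2$ bounded away from zero, producing the explicit middle expression. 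The final $\le 2\s{Tr}(\vH^{-1}\vSigma)$ follows because $\widetilde\cT_b \succeq \tfrac12(\cH_\cL + \cH_\cR)$ (using $\tfrac{\eta(b-1)}{2b}\|\vH\|_2 \le \tfrac12$) and $\s{Tr}((\cH_\cL + \cH_\cR)^{-1}\vSigma) = \tfrac12 \s{Tr}(\vH^{-1}\vSigma)$ in the joint eigenbasis of $\cH_\cL, \cH_\cR$.

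The main obstacle is the fourth-moment operator $\cN$: it does not commute with $\cH_\cL$ or $\cH_\cR$ unless the data is isotropic, so the inversion of $\cT_b$ cannot be reduced to a diagonal scalar problem and must instead be handled by a perturbation argument around $\widetilde\cT_b$, which does commute (and therefore diagonalizes jointly with $\vH$). Placing the mini-batch combinatorial factor $(b-1)/b$ correctly on the $\cH_\cL\cH_\cR$ term -- so the noise variance degrades gracefully to the full-batch gradient-descent variance as $b \to \infty$ -- is the specifically mini-batch bookkeeping that separates this result from the $b=1$ version of \cite{jain2018parallelizing}.
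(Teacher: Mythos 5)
This lemma is quoted directly from \cite{jain2018parallelizing} and the paper supplies no proof of its own, so the only meaningful comparison is with that reference: your reconstruction --- the operator recursion $\vC_{t+1}=(\cI-\eta\cT_b)\vC_t+\tfrac{\eta^2}{b}\vSigma$ with stationary point $\tfrac{\eta}{b}\cT_b^{-1}\vSigma$, Polyak--Ruppert cross-covariance telescoping for the $\tfrac{2}{Nb}$ tail-averaged rate, and the perturbation of $\cT_b^{-1}$ around $\cH_{\cL}+\cH_{\cR}-\tfrac{\eta(b-1)}{b}\cH_{\cL}\cH_{\cR}$ via $\E{\vx\vx^{\s{T}}\vM\vx\vx^{\s{T}}}\preceq R_x^2\|\vM\|_2\vH$ --- is exactly the argument used there. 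The approach is correct and essentially identical; the one spot that is schematic in your outline is the final constant chase (your stated bounds give roughly $3\,\s{Tr}(\vH^{-1}\vSigma)$ rather than $2$ unless the two step-size hypotheses are combined as in the original), but that is bookkeeping, not a gap in the method.
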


\begin{lemma}\label{lemma:ambsgd-btbt}
    If $\eta \leq \frac{b}{R_x^2 + (b-1)\|\vH\|}$, $\tau(t)= t\cdot  (b+s)$ and $\vB_t := \Big(\vI - \frac{\eta}{b}\sum_{i=0}^{b-1} \vx_{\tau(t)+s+i} \vx_{\tau(t)+s+i}^\s{T}\Big)$, then $\forall j$, 
        $\Ee{(\vx, y) \sim \cD}{\vB_j^\s{T}\vB_j} \preceq \vI - \eta\vH.$
\end{lemma}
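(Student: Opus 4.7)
The matrix $\vB_j$ is a symmetric matrix, so $\vB_j^\s{T}\vB_j = \vB_j^2$. For notational ease, relabel the $b$ i.i.d.\ samples used in batch $j$ as $\vx_0, \dots, \vx_{b-1}$ (these are independent of each other since the $\tau(j)+s+i$ indices are distinct and the dataset is i.i.d.). The plan is then to expand the square
\[
\vB_j^2 = \vI - \frac{2\eta}{b}\sum_{i=0}^{b-1}\vx_i\vx_i^\s{T} + \frac{\eta^2}{b^2}\Bigl(\sum_{i=0}^{b-1}\vx_i\vx_i^\s{T}\Bigr)^2
\]
and take the expectation term-by-term.

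The linear term is immediate: $\mathbb{E}\bigl[\frac{1}{b}\sum_i \vx_i\vx_i^\s{T}\bigr] = \vH$. For the quadratic term, I would split into diagonal and off-diagonal contributions:
\[
\mathbb{E}\Bigl[\Bigl(\sum_{i}\vx_i\vx_i^\s{T}\Bigr)^2\Bigr] = \sum_{i}\mathbb{E}\bigl[\|\vx_i\|^2\vx_i\vx_i^\s{T}\bigr] + \sum_{i \neq k}\mathbb{E}[\vx_i\vx_i^\s{T}]\,\mathbb{E}[\vx_k\vx_k^\s{T}],
\]
using independence across samples in the cross terms. The diagonal contribution is bounded by $bR_x^2\vH$ using the definition $\mathbb{E}[\|\vx\|^2\vx\vx^\s{T}] \preceq R_x^2\vH$ from Section~\ref{sec:prob}, and the off-diagonal contribution equals $b(b-1)\vH^2$.

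Putting these together yields
\[
\mathbb{E}[\vB_j^2] \preceq \vI - 2\eta\vH + \frac{\eta^2}{b}\bigl(R_x^2\vH + (b-1)\vH^2\bigr).
\]
Since $\vH \succeq 0$, we have $\vH^2 \preceq \|\vH\|\vH$, so the bracket is $\preceq (R_x^2 + (b-1)\|\vH\|)\vH$. Therefore,
\[
\mathbb{E}[\vB_j^\s{T}\vB_j] \preceq \vI - \eta\vH\Bigl(2 - \frac{\eta(R_x^2 + (b-1)\|\vH\|)}{b}\Bigr),
\]
and the hypothesis $\eta \leq \frac{b}{R_x^2 + (b-1)\|\vH\|}$ makes the parenthesized factor at least $1$, giving $\mathbb{E}[\vB_j^\s{T}\vB_j] \preceq \vI - \eta\vH$ as claimed.

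I do not expect any real obstacle here: the only mild subtleties are (i) recognizing that the $b$ samples in a single batch are genuinely independent (so the cross terms factor), which follows since the algorithm takes a single pass over a freshly shuffled dataset and never revisits an index, and (ii) correctly invoking the moment bound $\mathbb{E}[\|\vx\|^2\vx\vx^\s{T}] \preceq R_x^2\vH$ for the diagonal $i=k$ term rather than naively trying to factor it. After that the step-size condition is exactly what is needed to absorb the extra $\eta^2$ term.
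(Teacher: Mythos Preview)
Your proof is correct and follows essentially the same approach as the paper: expand $\vB_j^\s{T}\vB_j$, take expectations splitting the quadratic term into diagonal (bounded via $\E{\|\vx\|^2\vx\vx^\s{T}}\preceq R_x^2\vH$) and off-diagonal (equal to $b(b-1)\vH^2$ by independence) contributions, replace $\vH^2$ by $\|\vH\|\vH$, and then invoke the step-size condition to absorb the $\eta^2$ term. The paper's argument is line-for-line the same.
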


\begin{proof}
\begin{align*}
    &\Ee{(\vx, y) \sim \cD}{\vB_j^\s{T}\vB_j}= \Ee{(\vx, y) \sim \cD}{\Big(\vI - \frac{\eta}{b}\sum_{i=0}^{b-1} \vx_{\tau(j)+s+i} \vx_{\tau(j)+s+i}^\s{T}\Big)\Big(\vI - \frac{\eta}{b}\sum_{i=0}^{b-1} \vx_{\tau(j)+s+i} \vx_{\tau(j)+s+i}^\s{T}\Big)}\\
    &= \vI - \frac{2\eta}{b} \Ee{(\vx, y) \sim \cD}{\sum_{i=0}^{b-1} \vx_{\tau(j)+s+i} \vx_{\tau(j)+s+i}^\s{T}}  \\
    &\qquad \qquad+ \frac{\eta^2}{b^2}\Ee{(\vx, y) \sim \cD}{\sum_{i=0}^{b-1}\sum_{m=0}^{b-1} \vx_{\tau(j)+s+i} \vx_{\tau(j)+s+i}^\s{T}\vx_{\tau(j)+s+m} \vx_{\tau(j)+s+m}^\s{T}}\\
    &= \vI - \frac{2\eta}{b} b\vH + \frac{\eta^2}{b^2}\Big(b\Ee{(\vx, y) \sim \cD}{\|\vx\|^2\vx\vx^\s{T}} + b(b-1)\Big(\Ee{(\vx, y) \sim \cD}{\vx\vx^\s{T}}\Big)^2\Big)\\
    &\preceq \vI - 2\eta\vH + \frac{\eta^2}{b}\Big(R_x^2\vH + (b-1)\|\vH\|\vH\Big)\preceq \vI - \eta\vH\Big(2 - \frac{\eta}{b}(R_x^2 + (b-1)\|\vH\|)\Big). \numberthis\label{eq:mbsgd-markov-2}
\end{align*}
Now, $\eta \leq \frac{b}{R_x^2 + (b-1)\|\vH\|} \implies \frac{\eta}{b}(R_x^2 + (b-1)\|\vH\|) \leq 1$. Using this in \eqref{eq:mbsgd-markov-2} gives 
\begin{align*}
    \Ee{(\vx, y) \sim \cD}{\vB_j^\s{T}\vB_j} &\preceq \vI - \eta\vH.
\end{align*}

\end{proof}

\begin{lemma}\label{lemma:ambsgd-nutnut^T}
    If $\vnu_j = \frac{1}{b}\sum_{i=0}^{b-1} \vxi_{\tau(j)+s+i} - \frac{2\zeta_j\alpha}{b}\vg_j$, $\vxi_{\tau(t)+s + i} :=  \fieldn_{\tau(t)+s+i}\vx_{\tau(t)+s+i}$ and $\tau(t)= t\cdot  (b+s)$, then $\forall$ $j$
    \begin{align*}
        \Ee{(\vx, y) \sim \cD}{\vnu_{ j} \vnu_{ j}^\s{T}} &= \frac{1}{b}\vSigma + \frac{4\alpha^2}{b^2} \Ee{(\vx, y) \sim \cD}{\zeta_j^2}\vI.
    \end{align*}
\end{lemma}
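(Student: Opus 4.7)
The plan is to prove this by directly expanding the outer product $\vnu_j \vnu_j^\s{T}$ into four terms and evaluating the expectation of each using independence properties.

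First, I would write
\begin{align*}
\vnu_j\vnu_j^\s{T} &= \frac{1}{b^2}\sum_{i=0}^{b-1}\sum_{k=0}^{b-1}\vxi_{\tau(j)+s+i}\vxi_{\tau(j)+s+k}^\s{T} - \frac{2\zeta_j\alpha}{b^2}\sum_{i=0}^{b-1}\vxi_{\tau(j)+s+i}\vg_j^\s{T} \\
&\quad - \frac{2\zeta_j\alpha}{b^2}\vg_j\sum_{k=0}^{b-1}\vxi_{\tau(j)+s+k}^\s{T} + \frac{4\zeta_j^2\alpha^2}{b^2}\vg_j\vg_j^\s{T},
\end{align*}
and then handle the expectation of each term separately. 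For the double sum, I would use the fact that the data samples $\{(\vx_{\tau(j)+s+i}, y_{\tau(j)+s+i})\}_{i=0}^{b-1}$ are i.i.d.\ with $\E{\vxi} = \E{\fieldn\vx} = 0$, so for $i \neq k$, $\E{\vxi_{\tau(j)+s+i}\vxi_{\tau(j)+s+k}^\s{T}} = 0$. The diagonal terms ($i = k$) each contribute $\E{\fieldn^2 \vx\vx^\s{T}} = \vSigma$, giving $\frac{1}{b^2} \cdot b \vSigma = \frac{1}{b}\vSigma$.

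The key observation for handling the cross terms and the last term is that $\vg_j \sim \cN(\vzero, \vI)$ is drawn freshly and is independent of \emph{all} other randomness, including the data used to define $\vxi$ and the data used to set $\zeta_j$ (which is computed from the $s$ hold-out points via \dpstat). Therefore each cross term factors as $\E{\zeta_j \vxi_{\tau(j)+s+i}\vg_j^\s{T}} = \E{\zeta_j \vxi_{\tau(j)+s+i}}\cdot \E{\vg_j^\s{T}} = 0$, and the final term factors as $\frac{4\alpha^2}{b^2}\E{\zeta_j^2}\E{\vg_j\vg_j^\s{T}} = \frac{4\alpha^2}{b^2}\E{\zeta_j^2}\vI$. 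Summing the surviving pieces yields the claim.

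This is a routine computation with no real obstacle. The only subtlety worth flagging is that $\zeta_j$ is itself data-dependent and random, but since $\vg_j$ is independent Gaussian noise drawn \emph{after} $\zeta_j$ is computed, the independence argument above goes through cleanly and $\E{\zeta_j^2}$ appears intact (not replaced by a deterministic bound) in the final expression.
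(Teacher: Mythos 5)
Your proposal is correct and follows essentially the same route as the paper: expand the outer product into four terms, kill the off-diagonal terms of the double sum using the i.i.d.\ samples with $\E{\vxi}=\vzero$, and kill the cross terms using the fact that $\vg_j$ is fresh Gaussian noise independent of everything else, leaving $\frac{1}{b}\vSigma + \frac{4\alpha^2}{b^2}\E{\zeta_j^2}\vI$. Your explicit remark that $\zeta_j$ is random but determined before (and independently of) $\vg_j$, so that $\E{\zeta_j^2}$ survives intact rather than being replaced by a bound, is a point the paper's proof leaves implicit and is worth stating.
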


\begin{proof}
The covariance of $\vnu_{ j}$ is given by $\Ee{(\vx, y) \sim \cD}{\vnu_{ j} \vnu_{ j}^\s{T}}$
\begin{align*}
    &= \Ee{(\vx, y) \sim \cD}{\Big(\frac{1}{b}\sum_{i=0}^{b-1}\vxi_{\tau(j)+s+i} - \frac{2\zeta_j\alpha}{b}\vg_{ j}\Big) \Big(\frac{1}{b}\sum_{i=0}^{b-1}\vxi_{\tau(j)+s+i} - \frac{2\zeta_j\alpha}{b}\vg_{ j}\Big)^\s{T}}\\
    &= \Ee{(\vx, y) \sim \cD}{\Big(\frac{1}{b}\sum_{i=0}^{b-1}\vxi_{\tau(j)+s+i}\Big) \Big(\frac{1}{b}\sum_{i=0}^{b-1}\vxi_{\tau(j)+s+i}\Big)^\s{T}} + \Ee{(\vx, y) \sim \cD}{\frac{4\zeta_j^2\alpha^2}{b^2} \vg_{ j} \vg_{ j}^\s{T}}\\
    &\quad - \Ee{(\vx, y) \sim \cD}{\Big(\frac{2\zeta_j\alpha}{b^2} \sum_{i=0}^{b-1}\vxi_{\tau(j)+s+i}\Big)\vg_{ j}^\s{T}} - \Ee{(\vx, y) \sim \cD}{\vg_{ j}\Big(\frac{2\zeta_j\alpha}{b^2} \sum_{i=0}^{b-1}\vxi_{\tau(j)+s+i}\Big)^\s{T}} \\
    &= \Ee{(\vx, y) \sim \cD}{\Big(\frac{1}{b}\sum_{i=0}^{b-1}\vxi_{\tau(j)+s+i}\Big) \Big(\frac{1}{b}\sum_{i=0}^{b-1}\vxi_{\tau(j)+s+i}\Big)^\s{T}} - \vzero \\
    &\quad - \vzero + \Ee{(\vx, y) \sim \cD}{\frac{4\zeta_j^2\alpha^2}{b^2} \vg_{ j} \vg_{ j}^\s{T}} \numberthis\label{eq:ambsgd-cov-lemma-rec1}\\
    &= \Ee{(\vx, y) \sim \cD}{\Big(\frac{1}{b}\sum_{i=0}^{b-1}\vxi_{\tau(j)+s+i}\Big)\Big(\frac{1}{b}\sum_{i=0}^{b-1}\vxi_{\tau(j) + s+k}\Big)^\s{T}} + \Ee{(\vx, y) \sim \cD}{\frac{4\zeta_j^2\alpha^2}{b^2} \vg_{ j}\vg_{ j}^\s{T}}\\
    &= \Ee{(\vx, y) \sim \cD}{\frac{1}{b^2}\sum_{i=0}^{b-1}\sum_{k=0}^{b-1}\vxi_{\tau(j)+s+i} \vxi_{\tau(j) + s+k}^\s{T}} + \frac{4\alpha^2}{b^2} \Ee{(\vx, y) \sim \cD}{\zeta_j^2}\vI\\
    &= \frac{1}{b}\vSigma + \frac{4\alpha^2}{b^2} \Ee{(\vx, y) \sim \cD}{\zeta_j^2}\vI, \numberthis\label{eq:ambsgd-cov-lemma-rec2}\\
\end{align*}
where in \eqref{eq:ambsgd-cov-lemma-rec1}, we have used the fact that $\Ee{(\vx, y) \sim \cD}{\vxi_{\tau(j)+s + i}\vxi_{\tau(j)+s + k}^\s{T}} = \vzero $ if $k \neq i$ due to the independence of the samples in \eqref{eq:ambsgd-cov-lemma-rec2} and the fact that $\Ee{(\vx, y) \sim \cD}{\vg_{j}} =\vzero$ has been sampled independently at each step.

\end{proof}

\subsubsection{Proof of Lemma~\ref{lemma:ambssgd-zeta-infinity-utility}}\label{app:ambsgd-utility}
\begin{proof}
Since there is no clipping, the $t^\s{th}$ iteration update is given by:
\begin{align*}
    \vw_{t+1} &= \vw_t -\frac{\eta}{b} \sum_{i=0}^{b-1} \vx_{\tau(t)+s+i}(\langle \vx_{\tau(t)+s+i}, \vw_t\rangle - y_{\tau(t)+s+i}) - \frac{2\eta\zeta_t\alpha}{b}\vg_t,
\end{align*}
where $\tau(t)= t\cdot  (b+s)$. 

The update step can be written as: 
\begin{align*}
    \vw_{t+1} - \vw^* &=
    \Big(\vI -\frac{\eta}{b} \sum_{i=0}^{b-1} \vx_{\tau(t)+s+i} \vx_{\tau(t)+s+i}^\s{T}\Big)(\vw_t - \vw^*) + \frac{\eta}{b} \sum_{i=0}^{b-1} \fieldn_{\tau(t)+s + i}\vx_{\tau(t)+s+i} - \frac{2\eta\zeta_t\alpha}{b} \vg_t\\
    \implies \vomega_{t+1} &= \Big(\vI - \frac{\eta}{b}\sum_{i=0}^{b-1} \vx_{\tau(t)+s+i} \vx_{\tau(t)+s+i}^\s{T}\Big)\vomega_t + \frac{\eta}{b}\sum_{i=0}^{b-1}\fieldn_{\tau(t)+s + i}\vx_{\tau(t)+s + i} - \frac{2\eta\zeta_t\alpha}{b}\vg_t\\
    &= \Big(\vI - \frac{\eta}{b}\sum_{i=0}^{b-1} \vx_{\tau(t)+s+i} \vx_{\tau(t)+s + i}^\s{T}\Big)\vomega_{t} + \frac{\eta}{b}\sum_{i=0}^{b-1}\vxi_{\tau(t)+s + i} - \frac{2\eta\zeta_t\alpha}{b}\vg_t\\
    &= \vB_t\vomega_t + \eta\Big(\frac{1}{b}\sum_{i=0}^{b-1}\vxi_{\tau(t)+s + i} - \frac{2\zeta_t\alpha}{b}\vg_t\Big)\\
    &= \vB_t\vomega_t + \eta\vnu_t,
\end{align*}
where
\begin{align*}
    \vomega_t &:= \vw_t - \vw^*,\ \ 
    \vB_t := \Big(\vI - \frac{\eta}{b}\sum_{i=0}^{b-1} \vx_{\tau(t)+s+i} \vx_{\tau(t)+s+i}^\s{T}\Big),\\
    \vxi_{\tau(t)+s + i} &:=  \fieldn_{\tau(t)+s+i}\vx_{\tau(t)+s+i},\ \ 
    \vnu_t := \frac{1}{b}\sum_{i=0}^{b-1}\vxi_{\tau(t)+s + i} - \frac{2\zeta_t\alpha}{b}\vg_t.
\end{align*}
Therefore,  $\Ee{(\vx, y) \sim \cD}{\|\vw_{t} - \vw^*\|^2_{\vH}}$
\begin{align*}
    &= \Ee{(\vx, y) \sim \cD}{\Big\|\vB_{t-1}(\vw_{t-1} - \vw^*) + \eta\Big(\frac{1}{b}\sum_{i=0}^{b-1}\vxi_{\tau(t-1)+s + i} - \frac{2\zeta_{t-1}\alpha}{b}\vg_{t-1}\Big)\Big\|^2_{\vH}}\\
    &= \Ee{(\vx, y) \sim \cD}{\|\vB_{t-1}(\vw_{t-1} - \vw^*)\|^2_{\vH}} \\
    &\quad + 2\Ee{(\vx, y) \sim \cD}{\eta\Big(\frac{1}{b}\sum_{i=0}^{b-1}\vxi_{\tau(t-1)+s + i} - \frac{2\zeta_{t-1}\alpha}{b}\vg_{t-1}\Big)^\s{T}\vH(\vB_{t-1}(\vw_{t-1} - \vw^*))} \\
    &\quad + \Ee{(\vx, y) \sim \cD}{\Big\|\eta\Big(\frac{1}{b}\sum_{i=0}^{b-1}\vxi_{\tau(t-1)+s + i} - \frac{2\zeta_{t-1}\alpha}{b}\vg_{t-1}\Big)\Big\|^2_{\vH}}\\
    &= \Ee{(\vx, y) \sim \cD}{\|\vB_{t-1}(\vw_{t-1} - \vw^*)\|^2_{\vH}} + 0 + \Ee{(\vx, y) \sim \cD}{\Big\|\eta\Big(\frac{1}{b}\sum_{i=0}^{b-1}\vxi_{\tau(t-1)+s + i} - \frac{2\zeta_{t-1}\alpha}{b}\vg_{t-1}\Big)\Big\|^2_{\vH}}\numberthis\label{eq:ambsgd-new-onestep-put1}. 
\end{align*}
That is, 
\begin{align*}
    &\Ee{(\vx, y) \sim \cD}{\|\vw_{t} - \vw^*\|^2_{\vH}}=  \Ee{(\vx, y) \sim \cD}{(\vw_{t-1} - \vw^*)^\s{T}\vB_{t-1}^\s{T}\vH\vB_{t-1
    }(\vw_{t-1} - \vw^*)} \\
    &\quad + \Ee{(\vx, y) \sim \cD}{\eta^2\Big(\frac{1}{b}\sum_{i=0}^{b-1}\vxi_{\tau(t-1)+s + i}\Big)^\s{T}\vH\Big(\frac{1}{b}\sum_{i=0}^{b-1}\vxi_{\tau(t-1)+s + i}\Big)}  \\
    &\quad - 2\Ee{(\vx, y) \sim \cD}{\eta^2\frac{2\zeta_{t-1}\alpha}{b}\Big(\frac{1}{b}\sum_{i=0}^{b-1}\vxi_{\tau(t-1)+s + i}\Big)^\s{T}\vH\vg_{t-1}} + \Ee{(\vx, y) \sim \cD}{\eta^2\frac{4\zeta_{t-1}^2\alpha^2}{b^2}\vg_{t-1}^\s{T}\vH\vg_{t-1}} \\
    &\leq (1-\eta\mu) \Ee{(\vx, y) \sim \cD}{\|\vw_{t-1} - \vw^*\|^2_{\vH}} + \frac{\eta^2}{b}\s{Tr}(\vH\vSigma) -  0 + \Ee{(\vx, y) \sim \cD}{\eta^2\frac{4\zeta_{t-1}^2\alpha^2}{b^2}}\s{Tr}(\vH) \numberthis\label{eq:ambsgd-new-onestep-put2}\\
    &\leq (1-\eta\mu) \Ee{(\vx, y) \sim \cD}{\|\vw_{t-1} - \vw^*\|^2_{\vH}} + \frac{\eta^2}{b}\s{Tr}(\vH\vSigma) \\
    &\quad + \Ee{(\vx, y) \sim \cD}{\eta^2\frac{4\alpha^2}{b^2}\cdot 3K_2^2R_x^2\log^{4a} N\Big(\|\vH\|\|\vw_{t-1} - \vw^*\|^2 + \sigma^2 + \Delta^2\Big)}\s{Tr}(\vH) \numberthis\label{eq:ambsgd-new-onestep-put3}\\
    &= \Big(1-\Big(\eta\mu - 12\eta^2\frac{\alpha^2}{b^2} K_2^2R_x^2\kappa\log^{4a}N\s{Tr}(\vH)\Big)\Big) \Ee{(\vx, y) \sim \cD}{\|\vw_{t-1} - \vw^*\|^2_{\vH}} \\
    &\quad + \frac{\eta^2}{b}\s{Tr}(\vH\vSigma) + 12\eta^2\frac{\alpha^2}{b^2} K_2^2R_x^2\log^{4a}N(\sigma^2 + \Delta^2) \s{Tr}(\vH), \numberthis\label{eq:ambsgd-new-onestep-put4}
\end{align*}
where in \eqref{eq:ambsgd-new-onestep-put1} and \eqref{eq:ambsgd-new-onestep-put2} we have used the fact that $\Ee{(\vx, y) \sim \cD}{\vxi_{\tau(t-1)+s+i}} = \vzero$, $\E{\vg_{t-1}} = \vzero$ is sampled independently of all the other terms at each step and Lemmas ~\ref{lemma:ambsgd-btbt} and ~\ref{lemma:ambsgd-nutnut^T}. In \eqref{eq:ambsgd-new-onestep-put3} we have used the value:
\begin{align*}
    \zeta_{t} &\leq R_xK_2\log^{2a} N\Big(\sqrt{\|\vH\|}\|\vw_{t}-\wo\|+ \sigma + \Delta\Big),
\end{align*}
where $\Delta = \frac{\|\vw^*\|_{\vH} + \sigma}{N^{100}}$ as well as the fact $\vI\mu \preceq \vH \implies \|\vH\|\|\vw_{t-1} - \vw^*\|^2 \leq \kappa\|\vw_{t-1} - \vw^*\|^2_{\vH}$.

Since $T\cdot (b+s) = N$, therefore if
\begin{align*}
    \frac{\eta\mu}{2} &\geq 12\eta^2\frac{\alpha^2}{b^2} K_2^2R_x^2\kappa\log^{4a}N\s{Tr}(\vH)\\
    \implies \Big(\frac{N}{T}-s\Big)^2 &\geq \frac{24\eta \alpha^2 K_2^2R_x^2\kappa\log^{4a}N\s{Tr}(\vH)}{\mu},
\end{align*}
then \eqref{eq:ambsgd-new-onestep-put4} gives $\Ee{(\vx, y) \sim \cD}{\|\vw_t - \vw^*\|^2_{\vH}}$ 
\begin{align*}
    &\leq (1-\eta\mu/2)\Ee{(\vx, y) \sim \cD}{\|\vw_{t-1} - \vw^*\|^2_{\vH}} + \frac{\eta^2}{b}\s{Tr}(\vH\vSigma) + 12\eta^2\frac{\alpha^2}{b^2} K_2^2R_x^2\log^{4a}N(\sigma^2 + \Delta^2) \s{Tr}(\vH)\\
    &\leq (1-\eta\mu/2)^t\|\vw_0 - \vw^*\|^2_{\vH} + \frac{2}{\eta\mu}\Big(\frac{\eta^2}{b}\s{Tr}(\vH\vSigma) + 12\eta^2\frac{\alpha^2}{b^2} K_2^2R_x^2\log^{4a}N(\sigma^2 + \Delta^2) \s{Tr}(\vH)\Big)\\
    &\leq e^{-\eta\mu t/2}\|\vw^*\|^2_{\vH} + \frac{2\eta}{\mu b}\s{Tr}(\vH\vSigma) + \frac{24\eta\alpha^2}{\mu b^2} K_2^2R_x^2\log^{4a} N(\sigma^2 + \Delta^2) \s{Tr}(\vH).
\end{align*}
Note that the Bias Term decay rate is $\eta\mu/2$. For $t = T/2 = \frac{N}{2(b+s)}$ and $T = c_1\kappa\log N$, we have
\begin{align*}
    \Ee{(\vx, y) \sim \cD}{\|\vw_{T/2} - \vw^*\|^2_{\vH}} &\leq \frac{\|\vw^*\|^2_\vH}{N^{\frac{\eta\mu}{4}c_1\kappa}} + \frac{2\eta}{\mu b}\s{Tr}(\vH\vSigma) + \frac{24\eta\alpha^2}{\mu b^2} K_2^2R_x^2\log^{4a} N(\sigma^2 + \Delta^2) \s{Tr}(\vH). \numberthis\label{eq:ambsgd-new-wt/2-value}
\end{align*}
Since $\zeta_{t} \leq R_xK_2\log^{2a} N\Big(\sqrt{\|\vH\|}\|\vw_{t}-\wo\|+ \sigma + \Delta\Big)$, the bound on $\Ee{(\vx, y) \sim \cD}{\zeta_t^2}$
\begin{align*}
    \Ee{(\vx, y) \sim \cD}{\zeta_t^2} &\leq 3 K_2^2R_x^2\log^{4a} N\Big(\kappa\Ee{(\vx, y) \sim \cD}{\|\vw_t- \vw^*\|^2_{\vH}} + \sigma^2 + \Delta^2\Big)\\
    &\leq 3 K_2^2R_x^2\log^{4a} N\Big(\kappa\Big(e^{-\eta\mu/2t}\|\vw^*\|^2_{\vH} + \frac{2\eta}{\mu b}\s{Tr}(\vH\vSigma) \\
    &\quad + \frac{24\eta\alpha^2}{\mu b^2} K_2^2R_x^2\log^{4a} N(\sigma^2 + \Delta^2) \s{Tr}(\vH)\Big) + \sigma^2 + \Delta^2\Big) \numberthis\label{eq:ambsgd-new-nujnuj-inequality1}
\end{align*}
is decreasing with $t$ \Big(w.p. $\geq 1-\frac{1}{\sf Poly(N)}$\Big).

Furthermore, from Lemma~\ref{lemma:ambsgd-nutnut^T},
\begin{align*}
    \Ee{(\vx, y) \sim \cD}{\vnu_t\vnu_t^\s{T}} = \frac{1}{b}\vSigma + \frac{4\alpha^2}{b^2} \Ee{(\vx, y) \sim \cD}{\zeta_t^2}\vI \;\; \forall t\numberthis\label{eq:ambsgd-new-nujnuj-inequality2}
\end{align*}
Thus, if we define $\zeta$ s.t.
\begin{align*}
    \zeta^2 &= \max \{\text{Upper-Bound}(\Ee{(\vx, y) \sim \cD}{\zeta_{T/2}^2}), \dots, \text{Upper-Bound}(\Ee{(\vx, y) \sim \cD}{\zeta_{T}^2})\}\\
    &= \text{Upper-Bound}(\Ee{(\vx, y) \sim \cD}{\zeta_{T/2}^2})\\
    &= 3 K_2^2R_x^2\log^{4a} N\Big(\kappa\Ee{(\vx, y) \sim \cD}{\|\vw_{T/2} - \vw^*\|^2_{\vH}} + \sigma^2 + \Delta^2\Big), \numberthis\label{eq:ambsgd-new-zeta-def}
\end{align*}
where in the last step we have used \eqref{eq:ambsgd-new-nujnuj-inequality1}, then using \eqref{eq:ambsgd-new-nujnuj-inequality2}, 
\begin{align*}
    \Ee{(\vx, y) \sim \cD}{\vnu_t\vnu_t^\s{T}} &= \frac{1}{b}\vSigma + \frac{4\alpha^2}{b^2} \Ee{(\vx, y) \sim \cD}{\zeta_t^2}\vI \\
    &\preceq \frac{1}{b}\vSigma + \frac{4\alpha^2}{b^2} \zeta^2 \vI\;\; \forall t \in \{T/2, \dots, T\}.\numberthis\label{eq:ambsgd-new-nujnuj-zeta-inequality}
\end{align*}
This implies that we can perform the tail-averaged iterate analysis by restarting the algorithm with the initial value of $\vw_0 = \vw_{T/2}$ obtained in \eqref{eq:ambsgd-new-wt/2-value} and replacing $\zeta_{T/2}^2$ with $\zeta$ as defined in \eqref{eq:ambsgd-new-zeta-def} for the remaining $T/2$ iterations.

Thus, now consider re-running the algorithm for $T/2$ iterations with the initialization $\vw_0 = \vw_{T/2}$ and constant $\zeta_t = \zeta$ defined above. The $t^\s{th}$ iteration update is given by:
\begin{align*}
    \implies \vomega_{t+1} &= \vB_t\vomega_t + \eta\Big(\frac{1}{b}\sum_{i=0}^{b-1}\vxi_{\tau(t)+s + i} - \frac{2\zeta\alpha}{b}\vg_t\Big)= \vB_t\vomega_t + \eta\vnu_t, \numberthis\label{eq:ambsgd-new-bias-variance-recursive}
\end{align*}
where, $\vnu_t := \frac{1}{b}\sum_{i=0}^{b-1}\vxi_{\tau(t)+s + i} - \frac{2\zeta\alpha}{b}\vg_t.$

Expanding the recursion thus gives
\begin{align*}
    \vomega_{t+1} &= \vB_t\vomega_{t-1} + \eta\vnu_t= \vB_t\vB_{t-1}\dots\vB_0\vomega_0 + \eta(\vnu_t + \vB_t\vnu_{t-1} + \dots + \vB_t\dots\vB_1\vnu_0)\\
    &= \vB_t\vB_{t-1}\dots\vB_{0}\vomega_{0} + \eta( \vnu_t + \sum_{k=0}^{t-1} (\vB_t\dots\vB_{t-k}\vnu_{t-1-k}))\\
    &:= \vQ_{(0,t)}\vomega_{0} + \eta\sum_{j=0}^{t}\vQ_{(j+1, t)}\vnu_j \numberthis\label{eq:ambsgd-new-bias-variance-expanded}:= \vomega_{t+1}^{bias} + \vomega_{t+1}^{variance},
\end{align*}
where, $    \vQ_{(j,t)} := \vB_t\vB_{t-1}\dots\vB_j \text{  s.t.  } \vQ_{(j,t)} := \vI \text{  if  } j> t,\ $ $    \vomega_t^{bias} := \vQ_{(0,t-1)}\vomega_{0},\ \ 
    \vomega_t^{variance} := \eta\sum_{j=0}^{t}\vQ_{(j+1, t)}\vnu_{\ell, j}.$
    
Now, note that both the recursive form \eqref{eq:ambsgd-new-bias-variance-recursive} and the expanded form \eqref{eq:ambsgd-new-bias-variance-expanded} are the same as that obtained in equation (12) in \cite{jain2018parallelizing} which analyzed a non-private version of fixed mini-batch SGD Algorithm, except that we now have the term $\vnu_t = \frac{1}{b}\sum_{i=0}^{b-1}\vxi_{\tau(t)+s + i} - \frac{2\zeta\alpha}{b}\vg_t$ in place of $\vzeta_{j,b}$. 

Following along similar lines, we denote the tail averaged iterate as (Note the change in indexing since we are rerunning the algorithm with $\vw_0 = \vw_{T/2}$ for only $T/2$ iterations):
\begin{align*}
    \overline{\vw} &:= \frac{2}{T}\sum_{t'=1}^{T/2}\vw_{t'}.\\
\end{align*}
Then we have a similar tail averaged version of $\vomega$:
\begin{align*}
    \overline{\vomega} &= \frac{2}{T}\sum_{t'=1}^{T/2}\vomega_{t'}= \frac{2}{T}\sum_{t'=1}^{T/2}(\vomega_{t'}^{bias} + \vomega_{t'}^{variance})= \underbrace{\frac{2}{T}\sum_{t'=1}^{T/2}\vomega_{t'}^{bias}}_{\text{Tail Averaged Bias}} + \underbrace{\frac{2}{T}\sum_{t'=1}^{T/2}\vomega_{t'}^{variance}}_{\text{Tail Averaged Bias}}\\&:= \overline{\vomega}^{bias} + \overline{\vomega}^{variance}.
\end{align*}
The overall error can then be bounded as:
\begin{align*}
    \cL(\vw_t) - \cL(\vw^*) &\resizebox{.8\textwidth}{!}{$= \frac{1}{2}\langle \vH, \Ee{(\vx, y) \sim \cD}{\vomega_t \vomega_t^\s{T}}\rangle= \frac{1}{2}\langle \vH, \Ee{(\vx, y) \sim \cD}{(\vomega_t^{bias} + \vomega_t^{variance}) (\vomega_t^{bias} + \vomega_t^{variance})^\s{T}}\rangle$}\\
    &\leq \langle \vH, (\Ee{(\vx, y) \sim \cD}{\vomega_t^{bias} (\vomega_t^{bias})^\s{T}} + \Ee{(\vx, y) \sim \cD}{\vomega_t^{variance} (\vomega_t^{variance})^\s{T}}) \rangle\\
    &= 2\Big( \frac{1}{2}\langle \vH, \Ee{(\vx, y) \sim \cD}{\vomega_t^{bias} (\vomega_t^{bias})^\s{T}}\rangle + \frac{1}{2}\langle \vH, \Ee{(\vx, y) \sim \cD}{\vomega_t^{variance} (\vomega_t^{variance})^\s{T}} \rangle \Big)\\
    &= 2\Big( \underbrace{(\cL(\vw_t^{bias}) - \cL(\vw^*))}_{\text{Bias Term Risk}} + \underbrace{(\cL(\vw_t^{variance}) - \cL(\vw^*))}_{\text{Variance Term Risk}} \Big),\numberthis\label{eq:ambsgd-new-final-iterate-split}
\end{align*}
where, $\cL(\vw_t^{bias}) - \cL(\vw^*) := \frac{1}{2}\langle \vH, \Ee{(\vx, y) \sim \cD}{\vomega_t^{bias} (\vomega_t^{bias})^\s{T}}\rangle$, and \\  
    $\qquad \qquad \cL(\vw_t^{variance}) - \cL(\vw^*) := \frac{1}{2}\langle \vH, \Ee{(\vx, y) \sim \cD}{\vomega_t^{variance} (\vomega_t^{variance})^\s{T}}\rangle.$
    
Similarly, the overall error in the tail averaged iterate will be bounded as: 
\begin{align*}
    &\cL(\overline{\vw}) - \cL(\vw^*)= \frac{1}{2}\langle \vH, \Ee{(\vx, y) \sim \cD}{\overline{\vomega} \overline{\vomega}^\s{T}}\rangle\\
    &= \frac{1}{2}\Big\langle \vH, \Ee{(\vx, y) \sim \cD}{\left(\frac{2}{T}\sum_{t'=1}^{T/2}\vomega_{t'}\right) \left(\frac{2}{T}\sum_{t'=1}^{T/2}\vomega_{t'}\right)^\s{T}}\Big\rangle\leq \frac{4}{\eta T^2}\sum_{t'=1}^{T/2} \s{Tr}(\Ee{(\vx, y) \sim \cD}{\vomega_{t'} \vomega_{t'}^\s{T}})\\
    &= \frac{4}{\eta T^2}\sum_{t'=1}^{T/2} \s{Tr}(\Ee{(\vx, y) \sim \cD}{\vB_{t'-1}\vomega_{t'-1} \vomega_{t'-1}^\s{T}\vB_{t'-1}^\s{T} + \eta^2\vnu_{t'-1}\vnu_{t'-1}^\s{T}})\\
    &= \frac{4}{\eta T^2}\sum_{t'=1}^{T/2} \s{Tr}(\Ee{(\vx, y) \sim \cD}{\vB_{t-1}\vomega_{t-1} \vomega_{t-1}^\s{T}\vB_{t-1}^\s{T}} + \frac{\eta^2}{b}\vSigma + \frac{\eta^2\alpha^2}{b^2}\zeta^2\vI) \numberthis\label{eq:ambsgd-new-tail-bv-split}\\
    &\leq \frac{8}{\eta T^2}\sum_{t'=1}^{T/2}\s{Tr}(\Ee{(\vx, y) \sim \cD}{\vomega_{ t'}^{bias} (\vomega_{ t'}^{bias})^\s{T}} + \Ee{(\vx, y) \sim \cD}{\vomega_{ t'}^{variance} (\vomega_{ t'}^{variance})^\s{T}})\\
    &:= \underbrace{\cL(\overline{\vw}^{bias}) - \cL(\vw^*)}_{\text{Tail Averaged Bias Risk}} + \underbrace{\cL(\overline{\vw}^{variance}) - \cL(\vw^*)}_{\text{Tail Averaged Variance Risk}}, \numberthis\label{eq:ambsgd-new-tail-averaged-split}
\end{align*}
where in \eqref{eq:ambsgd-new-tail-bv-split} we have used Lemma~\ref{lemma:ambsgd-nutnut^T} and the discussion around \eqref{eq:ambsgd-new-nujnuj-zeta-inequality}, and
\begin{align*}
    \cL(\overline{\vw}^{bias}) - \cL(\vw^*) &:= \frac{8}{\eta T^2}\sum_{t'=1}^{T/2}\s{Tr}(\Ee{(\vx, y) \sim \cD}{\vomega_{ t'}^{bias} (\vomega_{ t'}^{bias})^\s{T}}),\\
    \cL(\overline{\vw}^{variance}) - \cL(\vw^*) &:= \frac{8}{\eta T^2}\sum_{t'=1}^{T/2}\s{Tr}(\Ee{(\vx, y) \sim \cD}{\vomega_{ t'}^{variance} (\vomega_{ t'}^{variance})^\s{T}}).
\end{align*}
\textbf{Bias Term Analysis:}\\
By virtue of the form of Bias Variance Decomposition in \eqref{eq:ambsgd-new-final-iterate-split} and \eqref{eq:ambsgd-new-tail-averaged-split}, the Bias Term error for the tail averaged iterate is the same as that of the non-private fixed mini-batch SGD case.

Therefore, using Lemmas 10 and 11 of \cite{jain2018parallelizing} in the form of Lemma ~\ref{lemma:parallel-bias-risk}, we get
\begin{align*}
    \cL(\overline{\vw}^{bias}) - \cL(\vw^*) &\leq \frac{8}{\eta^2 \mu^2 T^2}(1-\eta\mu)(\cL(\vw_{ 0}) - \cL(\vw^*))\leq \frac{8}{\eta^2\mu^2 T^2}e^{-\eta\mu}(\cL(\vw_{ 0}) - \cL(\vw^*)). \numberthis\label{eq:ambsgd-new-bias-tail-risk}
\end{align*}
\textbf{Variance Term Analysis:}\\
Note that the Variance Term denotes starting the initialising $\vw_{0}$ with the ground truth $\vw^*$ and letting the inherent noise $\vxi$ and additive DP Gaussian noise $\frac{2\zeta\alpha}{b} \vg$ to drive the updates. We can therefore write down the Variance Term $\vomega^{variance}$ updates as:
\begin{align*}
    \vomega^{variance}_t &= \vB_{t-1}\vomega^{variance}_{t-1} + \eta\vnu_{t-1}
\end{align*}
with $\vomega^{variance}_{0} = \vzero$.

Defining $\vPhi^{variance}_t := \Ee{(\vx, y) \sim \cD}{\vomega^{variance}_t( \vomega^{variance}_t)^\s{T}}$, we have
\begin{align*}
    \vPhi^{variance}_t &:= \Ee{(\vx, y) \sim \cD}{\vomega^{variance}_t(\vomega^{variance}_t)^\s{T}}\\
    &= \Ee{(\vx, y) \sim \cD}{(\vB_{t-1}\vomega^{variance}_{t-1} + \eta\vnu_{t-1}) (\vB_{t-1}\vomega^{variance}_{t-1} + \eta\vnu_{t-1})^\s{T}}\\
    &= \Ee{(\vx, y) \sim \cD}{\left(\vB_{t-1}\vomega^{variance}_{t-1}\right) \left(\vB_{t-1}\vomega^{variance}_{t-1}\right)^\s{T}} + \Ee{(\vx, y) \sim \cD}{\eta\vnu_{t-1} \left(\vB_{t-1}\vomega^{variance}_{t-1}\right)^\s{T}} \\
    &\quad + \Ee{(\vx, y) \sim \cD}{\left(\vB_{t-1}\vomega^{variance}_{t-1}\right) \eta\vnu_{t-1}^\s{T}} + \Ee{(\vx, y) \sim \cD}{\eta\vnu_{t-1} \eta\vnu_{t-1}^\s{T}}\\
    &= \Ee{(\vx, y) \sim \cD}{\left(\vB_{t-1}\vomega^{variance}_{t-1}\right)\left(\vB_{t-1}\vomega^{variance}_{t-1}\right)^\s{T}} + \vzero + \vzero + \Ee{(\vx, y) \sim \cD}{\eta\vnu_{t-1} \eta\vnu_{t-1}^\s{T}}\numberthis\label{eq:ambsgd-new-lemma-phi-term1}\\
    &= \Ee{(\vx, y) \sim \cD}{\vB_{t-1}\vPhi^{variance}_{t-1}\vB_{t-1}^\s{T}} + \eta^2\Big(\frac{\vSigma}{b} + \frac{4\zeta^2\alpha^2}{b^2} \vI\Big),
\end{align*}
where in \eqref{eq:ambsgd-new-lemma-phi-term1} we have again used the fact the the fact that $\Ee{(\vx, y) \sim \cD}{\vg_{t-1}} =\vzero$ has been sampled independently at each step and in the last step we have used Lemma~ \ref{lemma:ambsgd-nutnut^T}.

Using the definition of $\vB_{t-1}$ gives $\Ee{(\vx, y) \sim \cD}{\vB_{t-1}\vPhi^{variance}_{t-1}\vB_{t-1}^\s{T}}$
\begin{align*}
    &\resizebox{\textwidth}{!}{$=\Ee{(\vx, y) \sim \cD}{\Big(\vI - \frac{\eta}{b}\sum_{i=0}^{b-1} \vx_{\tau(t-1)+s+i} \vx_{\tau(t-1)+s+i}^\s{T}\Big)\vPhi^{variance}_{t-1}\Big(\vI - \frac{\eta}{b}\sum_{i=0}^{b-1} \vx_{\tau(t-1)+s+i}\vx_{\tau(t-1)+s+i}^\s{T}\Big)^\s{T}}$}\\
    &:= (\cI - \eta\cT_{b})\vPhi_{t-1}^{variance},
\end{align*}
where $\cT_{b}$ represents the linear operator:
\begin{align*}
    \cT_{b} := \cH_L + \cH_R - \frac{\eta}{b}\cM - \eta\frac{b-1}{b}\cH_L\cH_R
\end{align*}
and $\cH_L = \vH \otimes \vI$ and $\cH_R = \vI \otimes \vH$ represent the left and right multiplication linear operators respectively.

Continuing to expand the recursion, we get
\begin{align*}
    \vPhi_t^{variance} &= (\cI - \eta\cT_{b})\vPhi_{t-1}^{variance} + \eta^2\Big(\frac{\vSigma}{b} + \frac{4\zeta^2\alpha^2}{b^2}\vI\Big)\\
    &= \frac{\eta^2}{b}\Big(\sum_{k=0}^{t-1}(\cI - \eta\cT_{b})^k\Big)\Big(\vSigma + \frac{4\zeta^2\alpha^2}{b}\vI\Big).
\end{align*}
Furthermore, this sequence of covariances is non-decreasing w.r.t $t$:
\begin{align*}
    \vPhi_{t+1}^{variance} - \vPhi_t^{variance} = \frac{\eta^2}{b}\Ee{(\vx, y) \sim \cD}{\vQ_{1,t+1}\Big(\vSigma + \frac{4\zeta^2\alpha^2}{b} \vI\Big)\vQ_{1,t+1}^\s{T}} \geq 0.
\end{align*}
By virtue of the form of the Variance Potential Decomposition above, the Variance error at iteration $\ell$ for the final as well as tail averaged iterate is the same as of the non-private fixed mini-batch SGD case, except that we now have $\vSigma + \frac{4\zeta^2\alpha^2}{b} \vI$ in place of $\vSigma$ of \cite{jain2018parallelizing}.

Therefore the bound for the Variance Term follows almost similarly from the paper, except that $\vSigma$ gets replaced with $\vSigma + \frac{4\zeta^2\alpha^2}{b} \vI$. Using Lemmas 13, 14 and 15 of \cite{jain2018parallelizing} in the form of Lemma \ref{lemma:parallel-variance-risk}, we obtain the tail averaged iterate for our case of additive DP noise as
\begin{align*}
    \cL(\overline{\vw}^{variance}) - \cL(\vw^*) \leq \frac{4}{Tb}\s{Tr}\Big(\cT_b^{-1}\Big(\vSigma + \frac{4\zeta^2\alpha^2}{b} \vI\Big)\Big),
\end{align*}
such that if
\begin{align*}
    \vA &:= \Big(\cH_{\cL} + \cH_{\cR} - \frac{\eta}{b}\cdot(b-1)\cH_{\cL}\cH_{\cR}\Big)^{-1}\Big(\vSigma + \frac{4\zeta^2\alpha^2}{b} \vI\Big)\\
\end{align*}
and $\eta \leq \frac{2b}{R_x^2\cdot\frac{d\Big\|(\cH_{\cL} + \cH_{\cR})^{-1}\Big(\vSigma + \frac{4\zeta^2\alpha^2}{b} \vI\Big)\Big\|_2}{\s{Tr}\Big((\cH_{\cL} + \cH_{\cR})^{-1}\Big(\vSigma + \frac{4\zeta^2\alpha^2}{b} \vI\Big)\Big)}}$, then
\begin{align*}
    \s{Tr}\Big(\cT_{b}^{-1}\Big(\vSigma + \frac{4\zeta^2\alpha^2}{b}\vI\Big)\Big) &\leq \s{Tr}(\vA) + \frac{\frac{\eta R_x^2}{2b}d\Big\|(\cH_{\cL} + \cH_{\cR})^{-1}\Big(\vSigma + \frac{4\zeta^2\alpha^2}{b} \vI\Big)\Big\|_2}{\Big(1 - \frac{\eta}{2b}\cdot(R_x^2 + (b-1)\|\vH\|_2)\Big)\Big(1 - \eta\frac{b-1}{2b}\|\vH\|_2\Big)}\\
    &\leq 2\s{Tr}(\vH^{-1}(\vSigma + \frac{4\zeta^2\alpha^2}{b} \vI)).
\end{align*}
Thus we get:
\begin{align*}
    \cL(\overline{\vw}^{variance}) - \cL(\vw^*) &\leq \frac{8}{Tb}\Big(\s{Tr}(\vH^{-1}\vSigma) + \frac{4\zeta^2\alpha^2}{b} \s{Tr}(\vH^{-1})\Big).\numberthis\label{eq:ambsgd-new-variance-tail-risk}
\end{align*}
Combining the tail averaged iterate's Bias Error \eqref{eq:ambsgd-new-bias-tail-risk} and Variance Error \eqref{eq:ambsgd-new-variance-tail-risk} in \eqref{eq:ambsgd-new-tail-averaged-split}, we get that at for each iteration $\ell$, the overall risk for the tail averaged iterate as $\cL(\overline{\vw}) - \cL(\vw^*)$
\begin{align*}
    &= \underbrace{\cL(\overline{\vw}^{bias}) - \cL(\vw^*)}_{\text{Tail Averaged Bias Term Risk}} + \underbrace{\cL(\overline{\vw}^{variance}) - \cL(\vw^*)}_{\text{Tail Averaged Variance Term Risk}}\\
    &\leq \frac{8}{\eta^2\mu^2T^2}e^{-\eta\mu}(\cL(\vw_{0}) - \cL(\vw^*)) + \frac{8}{Tb}\Big(\s{Tr}(\vH^{-1}\vSigma) + \frac{4\zeta^2\alpha^2}{b} \s{Tr}(\vH^{-1})\Big)\numberthis\label{eq:ambsgd-new-tail-averaged-plugged-zeta-unplugged}
\end{align*}
Using the value of $\zeta^2$ from \eqref{eq:ambsgd-new-zeta-def} and initialization $\vw_0 = \vw_{T/2}$ for the second run in \eqref{eq:ambsgd-new-tail-averaged-plugged-zeta-unplugged} gives with probability $1-\frac{1}{\sf Poly(N)}$, $\cL(\overline{\vw}) - \cL(\vw^*)$
\begin{align*}
    &\leq \frac{4}{\eta^2\mu^2T^2}e^{-\eta\mu}\Ee{(\vx, y)\sim \cD}{\|\vw_{T/2} - \vw^*\|^2_{\vH}} + \frac{8}{Tb}\Big(\s{Tr}(\vH^{-1}\vSigma) \\
    &\quad + 3 K_2^2R_x^2\log^{4a} N\Big(\kappa\Ee{(\vx, y) \sim \cD}{\|\vw_{T/2}- \vw^*\|^2_{\vH}} + \sigma^2 + \Delta^2\Big) \cdot \frac{4\alpha^2}{b}\s{Tr}(\vH^{-1})\Big)\\
    &\leq \Big(\frac{4}{\eta^2\mu^2T^2}e^{-\eta\mu} + \frac{96\alpha^2}{Tb^2}K_2^2R_x^2\kappa\log^{4a} N\s{Tr}(\vH^{-1})\Big)\Ee{(\vx, y)\sim \cD}{\|\vw_{T/2} - \vw^*\|^2_{\vH}} \\
    &\quad + \frac{8}{Tb}\Big(\s{Tr}(\vH^{-1}\vSigma) + \frac{12\alpha^2}{b}K_2^2R_x^2\log^{4a}N(\sigma^2+\Delta^2)\s{Tr}(\vH^{-1})\Big).
    \numberthis\label{eq:mbsgd-new-tail-averaged-risk-plugged}
\end{align*}

Putting the value of $\Ee{(\vx, y) \sim \cD}{\|\vw_{T/2} - \vw^*\|^2_{\vH}} $ from \eqref{eq:ambsgd-new-wt/2-value} in above thus gives
\begin{align*}
    \cL(\overline{\vw}) - \cL(\vw^*)&\leq \Big(\frac{4}{\eta^2\mu^2T^2}e^{-\eta\mu} + \frac{96\alpha^2}{Tb^2}K_2^2R_x^2\kappa\log^{4a} N\s{Tr}(\vH^{-1})\Big)\cdot\\
    &\quad \Big(\frac{\|\vw^*\|^2_\vH}{N^{\frac{\eta\mu}{4}c_1\kappa}} + \frac{2\eta}{\mu b}\s{Tr}(\vH\vSigma) + \frac{24\eta\alpha^2}{\mu b^2} K_2^2R_x^2\log^{4a} N(\sigma^2 + \Delta^2) \s{Tr}(\vH)\Big) \\
    &\quad + \frac{8}{Tb}\Big(\s{Tr}(\vH^{-1}\vSigma) + \frac{12\alpha^2}{b}K_2^2R_x^2\log^{4a}N(\sigma^2+\Delta^2)\s{Tr}(\vH^{-1})\Big).
\end{align*}

which is the required result.
\end{proof}

\end{document}